\newtheorem{theorem}{Theorem}
\newtheorem{lemma}{Lemma}
\newtheorem{definition}{Definition}
\newtheorem{proposition}{Proposition}
\newcommand{\tbeta}{\tilde{\beta}}
\newcommand{\A}{\mathcal{A}}
\newcommand{\tA}{\tilde{A}}
\newcommand{\X}{\mathcal{X}}
\newcommand{\R}{\mathbb{R}}
\newcommand{\tx}{\widetilde{x}}
\newcommand{\ty}{\widetilde{y}}
\newcommand{\bbeta}{\bar{\beta}}
\newcommand{\E}{\mathcal{E}}
\newcommand{\B}{\mathcal{B}}
\begin{document}

%

%

\twocolumn[

\aistatstitle{Random Warping Series: A Random Features Method for Time-Series Embedding}


\aistatsauthor{ 
Lingfei Wu \And Ian En-Hsu Yen \And  Jinfeng Yi}

\aistatsaddress{ 
IBM Research \And  Carnegie Mellon University \And IBM Research}

\aistatsauthor{ 
Fangli Xu \And Qi Lei \And  Michael J. Witbrock}

\aistatsaddress{ 
College of William and Mary \And University of Texas at Austin \And IBM Research }

]

\begin{abstract}
Time series data analytics has been a problem of substantial interests for decades, and Dynamic Time Warping (DTW) has been the most widely adopted technique to measure dissimilarity between time series. A number of global-alignment kernels have since been proposed in the spirit of DTW to extend its use to kernel-based estimation method such as support vector machine. However, those kernels suffer from diagonal dominance of the Gram matrix and a quadratic complexity w.r.t. the sample size. In this work, we study a family of alignment-aware positive definite (p.d.) kernels, with its feature embedding given by a distribution of \emph{Random Warping Series (RWS)}. The proposed kernel does not suffer from the issue of diagonal dominance while naturally enjoys a \emph{Random Features} (RF) approximation, which reduces the computational complexity of existing DTW-based techniques from quadratic to linear in terms of both the number and the length of time-series. We also study the convergence of the RF approximation for the domain of time series of unbounded length. Our extensive experiments on 16 benchmark datasets demonstrate that RWS outperforms or matches state-of-the-art classification and clustering methods in both accuracy and computational time. Our code and data is available at { \url{https://github.com/IBM/RandomWarpingSeries}}.
\end{abstract}

\section{Introduction}

Over the last two decades, time series classification and clustering have received considerable interests in many applications such as 
genomic research \citep{leslie2002spectrum}, 
image alignment \citep{peng2015piefa,peng2015circle}, 
speech recognition \citep{cuturi2007kernel, shimodaira2001dynamic},
and motion detection \citep{li2011time}. 
One of the main challenges in time series data stems from the fact that there are no explicit features in sequences \citep{xing2010brief}. Therefore, a number of feature representation methods have been proposed recently, among which the approaches deriving features from phase dependent intervals \citep{deng2013time, baydogan2013bag}, phase independent shapelets \citep{ye2009time, rakthanmanon2013fast}, and dictionary based bags of patterns \citep{senin2013sax, schafer2015boss} have gained much popularity due to their highly competitive performance \citep{bagnall2016great}. However, since the aforementioned approaches only consider the local patterns rather than global properties, the effectiveness of these features largely depends on the underlying characteristics of sequences that may vary significantly across applications. More importantly, these approaches may typically not be a good first choice for large scale time series due to their quadratic complexity in terms both of the number $N$ and (or) length $L$ of time series. 

Another family of research defines a distance function to measure the similarity between a pair of time series. Although Euclidean distance is a widely used option and has been shown to be competitive with other more complex similarity measures \citep{wang2013experimental}, various elastic distance measures designed to address the temporal dynamics and time shifts are more appropriate \citep{xing2010brief, kate2016using}. Among them, dynamic time warping (DTW) \citep{berndt1994using} is the standard elastic distance measure for time series. Interestingly, an 1NN classifier with DTW has been demonstrated as the gold standard benchmark, and has been proved difficult to beat consistently \citep{wang2013experimental, bagnall2016great}. Recently, a thread of research has attempted to directly use the pair-wise DTW distance as features \citep{hayashi2005embedding, gudmundsson2008support, kate2016using, QiYi2016}. However, the majority of these approaches have quadratic complexity in both number and length of time series in terms of the computation and memory requirements. 

Despite the successes of various explicit feature design, kernel methods have great promise for learning non-linear models by implicitly transforming a simple representations into a high-dimension feature space \citep{rahimi2007random, chen2016efficient, wu2016revisiting, yen2014sparse}. The main obstacle for applying kernel method to time series is largely due to two distinct characteristics of time series, (a) variable length; and (b) dynamic time scaling and shifts. Since elastic distance measures, such as DTW, take into account these two issues, there have been several attempts to apply DTW directly as a similarity measure in a kernel-based classification model \citep{shimodaira2001dynamic, gudmundsson2008support}. Unfortunately, the DTW distance does not correspond to a \emph{valid positive-definite } (p.d.) kernel and thus direct use of DTW leads to an indefinite kernel matrix that neither corresponds to a loss minimization problem nor giving a convex optimization problem \citep{bahlmann2002online, cuturi2007kernel}. 
To overcome these difficulties, a family of global alignment kernels have been proposed by taking softmax over all possible alignments in DTW to give a p.d. kernel \citep{cuturi2007kernel, cuturi2011fast, marteau2015recursive}. However, the effectiveness of the global alignment kernels is impaired by the diagonal dominance of the resulting kernel matrix. Also, the quadratic complexity in both the number and length of time series make it hard to scale. 

In this paper, inspired by the latest advancement of  kernel learning methodology from distance \citep{wu2018d2ke}, we study \emph{Random Warping Series} (RWS), a generic framework to generate vector representation of time-series, where we construct a family of p.d. kernels from an explicit feature map given by the DTW between original time series and a distribution of random series. To admit an efficient computation of the kernel, we give a random features approximation that uniformly converges to the proposed kernel using a finite number of random series drawn from the distribution. The RWS technique is fully parallelizable, and highly extensible in the sense that the building block DTW can be replaced by recently proposed elastic distance measures such as CID \citep{batista2014cid} and DTDC \citep{gorecki2014non}. With a number $R$ of random series, RWS can substantially reduce the computational complexity of existing DTW-based techniques from $O(N^2L^2)$ to $O(NRL)$ and memory consumption from $O(NL + N^2)$ to $O(NR)$. We also extend existing analysis of random features to handle time series of unbounded length, showing that $R=\Omega(1/\epsilon^2)$ suffices for the uniform convergence to $\epsilon$ precision of the exact kernel. We evaluate \emph{RWS} on 16 real-world datasets on which it consistently outperforms or matches state-of-the-art baselines in terms of both testing accuracy and runtime. In particular, RWS often achieves orders-of-magnitude speedup over other methods to achieve the same accuracy. 


\section{DTW and Global Alignment Kernels}
We first introduce the widely-used technique DTW and nearest-neighbor DTW (1NN-DTW), and then illustrate the existing global alignment kernels for time series and their disadvantages. 

\textbf{Time Series Alignment and 1NN-DTW.}
Let $\X$ be the domain of input time series, and $\{x_i\}_{i=1}^N$ be the set of time series, where the length of each time series $|x_i| \leq L$, taking numeric values in $\R$. A special challenge in time series lies in the fact that the series could have different lengths, and a signal could be generated with time shifts and different scales, but with a similar pattern. To take these factors into account, an alignment (also called a warping function) is often introduced to provide a better distance/similarity measure between two time series $x_i = (x_i^1, \ldots, x_i^n)$ and $x_j = (x_j^1, \ldots, x_j^m)$ of lengths $n$ and $m$ respectively. Specifically, an alignment $a = (a_1,a_2)$ of length $|a| = p$ between two time series $x_i$ and $x_j$ is a pair of increasing vectors $(a_1,a_2)$ such that $1 = a_1(1) \leq \ldots \leq a_1(p) = n$ and $1 = a_2(1) \leq \ldots \leq a_2(p) = m$ with unitary increments and no simultaneous repetitions. The set of all alignments between $x_i$ and $x_j$ is defined as $\A(x_i,x_j)$. In the literature of DTW \citep{berndt1994using}, the DTW distance between $x_i$ and $x_j$ is defined as follows in its simplest form: 
\begin{equation}\label{eq:DTW_Definition}
\begin{split}
& S(x_i,x_j) = \min_{a\in\A(x_i,x_j)} \tau(x_i,x_j;a), \ \ \\
& \text{where} \ \ \tau(x_i,x_j;a) = \sum_{t=1}^{|a|} \tau(x_i(a_1(t)),x_j(a_2(t))).
\end{split}
\end{equation}
Here $\tau(x_i,x_j;a)$ is a dissimilarity measure between $x_i$ and $x_j$ under alignment $a$. Typically, \emph{Dynamic Programming} (DP) is employed to find the optimal alignment $a^*$ and then compute DTW distance. 
The dissimilarity function $\tau$ could be defined as any commonly used distance such as the squared Euclidean distance. To accelerate the computation and improve the performance, a Sakoe and Chiba band is often used to constrain the search window size for DTW \citep{sakoe1978dynamic,rakthanmanon2012searching}. 

DTW has been widely used for time series classification in combination with the 1NN algorithm, and this combination has been shown to be exceptionally difficult to beat \citep{wang2013experimental, bagnall2016great}. However, there are two disadvantages of 1NN-DTW. First, this method incurs the high computational cost of $O(N^2)$ complexity for computing DTW similarity between all pairs of time series, where each evaluation of DTW without constraints takes $O(L^2)$ computation. Second, 
Nearest-Neighbor methods often suffers from the problems of high variance.
For example, if a class label is determined by a small portion of time series, a Nearest-Neighbor identification on the basis of similarity with the whole time series will be ineffective due to noise and irrelevant information.

\textbf{Existing Global Alignment Kernels.}
To take the advantage of DTW in other prediction methods based on \emph{Empirical Risk Minimization} (ERM) such as SVM and Logistic Regression, a thread of research has been trying to derive a \emph{valid p.d. kernel} that resembles $S(x_i,x_j)$. 
A framework for designing such kernel is the \emph{time series global-alignment kernel} proposed in \citep{cuturi2007kernel} and further explored in \citep{cuturi2011fast}. The kernel replaces the minimum in \eqref{eq:DTW_Definition} with a soft minimum that sums over all possible DTW alignments between two series $x_i$, $x_j$:
\begin{equation}\label{global_align_kernel}
\begin{split}
k(x_i,x_j) & :=\sum_{a\in\A(x_i,x_j)} \exp(-\tau(x_i,x_j;a)) \\
& :=\sum_{a\in\A(x_i,x_j)}\prod_{t=1}^{|a|} \kappa(x_i[a_1(t)],x_j[a_2(t)])
\end{split}
\end{equation}
where $\kappa(.,.)$ is some local similarity function induced from the divergence $\tau$ as $\kappa = \exp(-\tau)$. The function \eqref{global_align_kernel} is a p.d. kernel when $\kappa(.,.)$ satisfies certain conditions \citep{cuturi2007kernel}. However, it is known that a soft minimum can be orders of magnitude larger than the minimum when summing over exponentially many terms, which results in a serious  \emph{diagonally dominant problem} for the kernel \eqref{global_align_kernel}. In other words, the kernel value between a series to itself $k(x_i,x_i)$ is orders of magnitude larger than other values $k(x_i,x_j)$. Thus in practice, one must take the log of the kernel \eqref{global_align_kernel} even though such operation is known to break the p.d. property \citep{cuturi2011fast}. In addition, the evaluation of kernel \eqref{global_align_kernel} requires running DP over all pairs of samples and thus gives a high complexity of $O(N^2L^2)$.

\section{Novel Time-Series Kernels via Alignments to Random Series}
\begin{figure}
\centering
\includegraphics[scale=0.35]{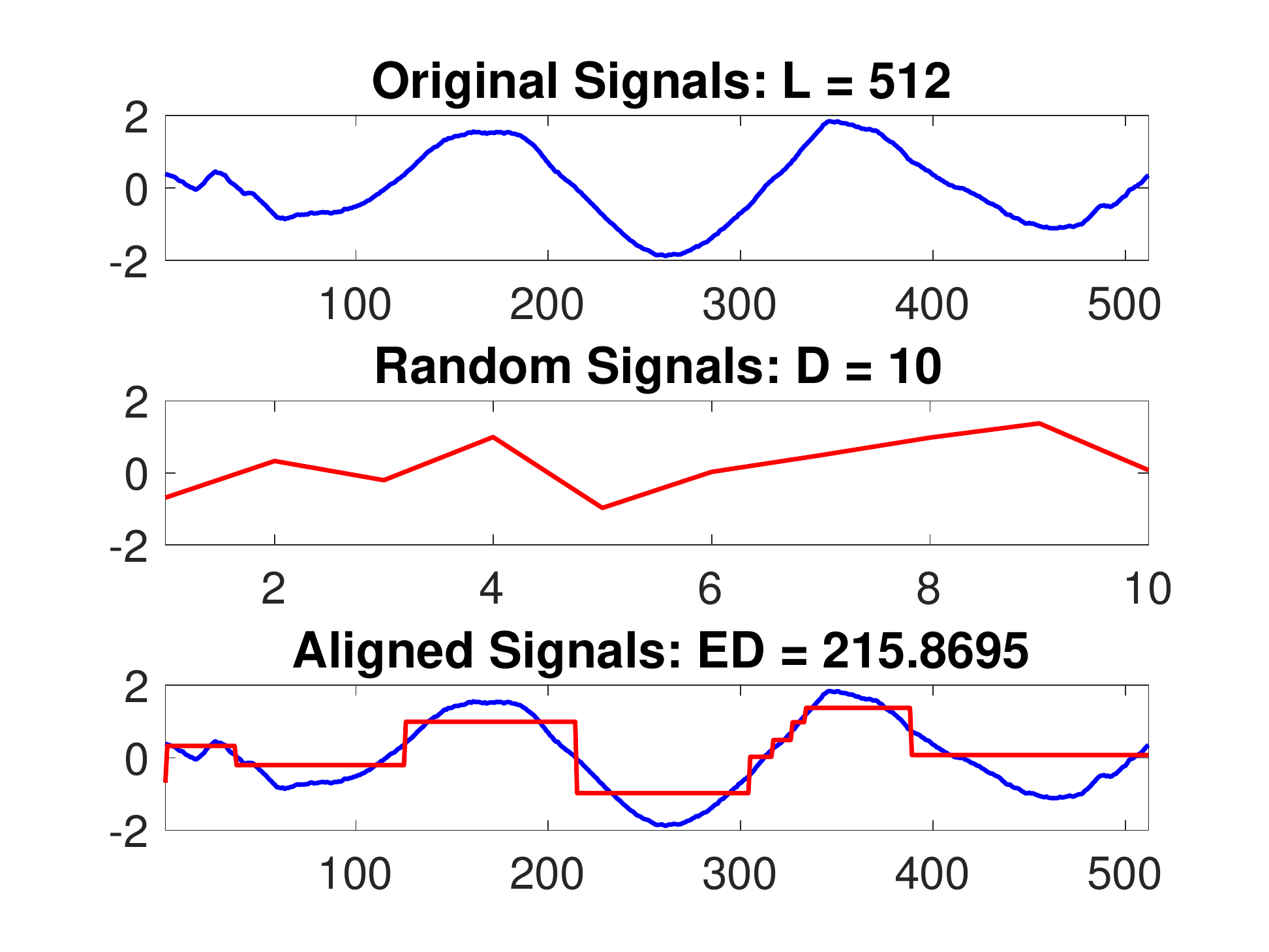}
\vspace{-2mm}
\caption{Example of the DTW alignment between original time series of length $L = 512$ and random time series of length $D = 10$.}
\label{fig:DTW_Alignment_BirdChicken}
\vspace{-4mm}
\end{figure}

In this section, we study a new approach to build a family of p.d. kernels for time series based on DTW, inspired by the latest advancement of  kernel learning methodology from distance \citep{wu2018d2ke}. 




Formally, the kernel is defined by integrating a feature map over a distribution of random time series $p(\omega)$, with each feature produced by alignments between original time series $x$ and random series $\omega$:
\begin{equation}\label{new_kernel}
\begin{split}
& k(x,y) =\int_{\omega} p(\omega) \phi_{\omega}(x) \phi_{\omega}(y) d\omega, \ \\
& \text{where} \ \ \phi_{\omega}(x):= \sum_{a\in\A(\omega, x)} p(a|\omega) \tau(\omega, x;a).
\end{split}
\end{equation}
The kernel \eqref{new_kernel} enjoys several advantages. First, \eqref{new_kernel} is a \emph{p.d. kernel} by its construction.

\begin{proposition}
The kernel \eqref{new_kernel} is positive definite, that is,
$
\sum_{i=1}^N\sum_{j=1}^N c_ic_jk(x_i,x_j) \geq 0
$
for any $\{c_i\mid c_i\in \R \}_{i=1}^N$ and any $\{x_i\mid x_i\in \X \}_{i=1}^N$.
\end{proposition}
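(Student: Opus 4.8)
The plan is to exploit the fact that the kernel \eqref{new_kernel} is already written in the canonical form of a positive-definite kernel: an integral of a product of \emph{identical} feature maps $\phi_\omega$ against a nonnegative weight $p(\omega)$. Concretely, I would fix arbitrary scalars $\{c_i\}_{i=1}^N\subset\R$ and arbitrary time series $\{x_i\}_{i=1}^N\subset\X$, and expand the quadratic form
\begin{equation*}
\sum_{i=1}^N\sum_{j=1}^N c_ic_j k(x_i,x_j)
= \sum_{i=1}^N\sum_{j=1}^N c_ic_j \int_{\omega} p(\omega)\,\phi_{\omega}(x_i)\,\phi_{\omega}(x_j)\,d\omega .
\end{equation*}

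The key step is to pull the finite double sum inside the integral. Since the sum over $i,j$ has finitely many terms, this interchange is valid as soon as each map $\omega\mapsto p(\omega)\,\phi_{\omega}(x_i)\,\phi_{\omega}(x_j)$ is integrable, which follows from the standing assumptions that $\phi_{\omega}(x)$ is measurable in $\omega$ and well-defined for $p$-almost every $\omega$ (note that $\phi_{\omega}(x)=\sum_{a\in\A(\omega,x)}p(a\mid\omega)\tau(\omega,x;a)$ is a convex combination of alignment costs, hence lies between the smallest and largest such cost). After the interchange the integrand becomes a perfect square,
\begin{equation*}
\sum_{i=1}^N\sum_{j=1}^N c_ic_j k(x_i,x_j)
= \int_{\omega} p(\omega)\Big(\sum_{i=1}^N c_i\,\phi_{\omega}(x_i)\Big)^{2} d\omega .
\end{equation*}

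Finally I would conclude by observing that the integrand is pointwise nonnegative: $p(\omega)\ge 0$ since $p$ is a probability density, and the squared term is nonnegative, so the integral is nonnegative and $\sum_{i,j}c_ic_j k(x_i,x_j)\ge 0$. Equivalently, one may phrase the argument as exhibiting the explicit feature map $\Phi:x\mapsto\bigl(\sqrt{p(\cdot)}\,\phi_{(\cdot)}(x)\bigr)$ into $L^2(p)$, for which $k(x,y)=\langle \Phi(x),\Phi(y)\rangle_{L^2(p)}$, and noting that every Gram matrix of inner products in a Hilbert space is positive semidefinite. The only point that requires any care is the measurability/integrability needed to justify Fubini; everything else is routine.
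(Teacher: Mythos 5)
Your proposal is correct and follows essentially the same route as the paper's proof: expand the quadratic form, interchange the finite double sum with the integral over $\omega$, and recognize the integrand as $p(\omega)\left(\sum_{i=1}^N c_i\phi_{\omega}(x_i)\right)^2\geq 0$. The additional remarks on measurability and the explicit $L^2(p)$ feature map are fine but not needed beyond what the paper already does.
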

\begin{proof}
By definition \eqref{new_kernel}, we have
\begin{align*}
\sum_{i=1}^N\sum_{j=1}^N c_ic_jk(x_i,x_j) 
& = \sum_{i=1}^N\sum_{j=1}^N c_ic_j \int_{\omega} p(\omega)  \phi_{\omega}(x_i) \phi_{\omega}(y_j) d\omega \\
& = \int_{\omega} p(\omega) \sum_{i=1}^N\sum_{j=1}^N c_ic_j\phi_{\omega}(x_i)\phi_{\omega}(x_j) d\omega \\
& = \int_{\omega} p(\omega) \left(\sum_{i=1}^N c_i\phi_{\omega}(x_i)\right)^2 d\omega\geq0.
\end{align*}
\end{proof}
\vspace{-4mm}

Secondly, by choosing 
\begin{equation}\label{mini_condprob}
p(a|\omega)=\left\{\begin{array}{ll}
1 , & a^{*}=\text{arg}\min_{a}\; \tau(\omega,x;a) \\
0 , & o.w. 
\end{array}\right.
\end{equation}
one can avoid the \emph{diagonal dominance problem} of the kernel matrix, since the kernel value between two time series $k(x,y)$ depends only on the correlation of $\tau(\omega,x;a_x)$ and $\tau(\omega,y;a_y)$ under their \emph{optimal alignments}. It thus avoids the dominance of the diagonal terms $k(x,x)$ caused by the summation over exponentially many alignments. We can interpret the random series $\omega$ of length $D$ as the possible \emph{shapes} of a time series, defined by $D$ segments, each associated with a random number. Figure \ref{fig:DTW_Alignment_BirdChicken} gives an example of a random series $\omega$ of length $D=10$, which divides a time series $x$ into $D$ segments and outputs a dissimilarity score as the feature $\phi_{\omega}(x)$. The third advantage of \eqref{new_kernel} is its computational efficiency due to a simple \emph{random features approximation}. Although the kernel function \eqref{new_kernel} seems hard to compute, we show that there is a low-dimensional representation of each series $T(x)$, by which one can efficiently find an approximate solution to that of the exact kernel \eqref{new_kernel} within $\epsilon$ precision. This is in contrast to the global-alignment kernel \eqref{global_align_kernel}, where although one can evaluate the kernel matrix exactly in $O(N^2L^2)$ time, it is unclear how to efficiently find a low-rank approximation.

\subsection{Computation of Random Warping Series}

Although the kernel \eqref{new_kernel} does not yield a simple analytic form, it naturally yields a random approximation of the form using a simple MC method,
$$
k(x,y) \ \approx \ \ \big \langle T(x), T(y) \big \rangle = \frac{1}{R} \sum_{i=1}^R \big \langle \phi_{\omega_i}(x), \phi_{\omega_i}(y) \big \rangle.
$$
The feature vector $T(x)$ is computed using dissimilarity measure $\tau({\{\omega_i\}}_{i=1}^R, x)$, where ${\{\omega_i\}}_{i=1}^R$ is a set of random series of variable length $D$ with each value drawn from a distribution $p(\omega)$. In particular, the function $\tau$ could be any elastic distance measure but without loss of generality we consider DTW as our similarity measure since it has proved to be the most successful metric for time series \citep{wang2013experimental, xi2006fast}. 

Algorithm \ref{alg:RWS_features} summarizes the procedure to generate feature vectors for raw time series. There are several comments worth making here. First of all, the distribution of $p(\omega)$ plays an important role in capturing the global properties of original time-series. Since we explicitly define a kernel from this distribution, it is flexible to search for the best distribution that fits data well for underlying applications. In our experiments, we find the Gaussian distribution is generally applicable for time series from various applications. Specifically, the parameter $\sigma$ stems from a distribution $p(\omega)$ that should well capture the characteristics of time series $\{x_{i}\}_{i=1}^N$. Second, as shown in Figure \ref{fig:DTW_Alignment_BirdChicken}, a short random warping series could typically identify the local patterns as well as global patterns in raw time series. It suggests that there are some optimal alignments that allow short random series to segment raw time series to obtain discriminatory features. In practice, there is no prior information for this optimal alignment and thus we choose to uniformly sample the length of random series between $[D_{min},D_{max}]$ to give an unbiased estimate of $D$, where $D_{min}=1$ is used in our experiments. Additional benefits lie in the fact that random series with variable lengths may simultaneously identify multi-scale patterns hidden in the raw time series. 

\begin{algorithm}[t]
\caption{RWS Approximation: An Unsupervised Feature Representation for Time Series}
\begin{algorithmic}[1]
    \STATEx {\bf Input:} Time series $\{x_i\}_{i=1}^N, 1 \leq |x_i| \leq L$, $D_{min}$, $D_{max}$, $R$, $\sigma$ associated to $p(\omega)$.
    \STATEx {\bf Output:} Feature matrix $T_{N \times R}$ for time series
    \FOR {$j = 1, \ldots, R$}
        \STATE Draw $D$ uniformly from $[D_{min}, D_{max}]$. Generate random time series $\omega_j$ of length $D_j$ with each value drawn from distribution $p(\omega)$ normalized by $\sigma$.
        \STATE Compute a feature vector $T(:,j) = \phi_{\omega_i}(\{x_i\}_{i=1}^N)$ using DTW with or without a window size.  
    \ENDFOR
    \STATE Return feature matrix $T_{N \times R} = \frac{1}{\sqrt{R}} [T(:,1:R)]$
\end{algorithmic}
\label{alg:RWS_features}
\end{algorithm}

In addition to giving a practical way to approximate the proposed kernel, applying these random series also enjoys the double benefits of reduced computation and memory consumption. Compared to the family of global alignment kernels \citep{cuturi2007kernel, cuturi2011fast}, computing the dense kernel matrix $K \in \mathbb{R}^{N \times N}$ requires $O(N^2)$ times evaluation of DTW which usually takes $O(L^2)$ complexity based on DP. It also needs $O(NL + N^2)$ to store the original time series and resulting kernel matrix. In contrast, our RWS approximation only requires linear complexity of $O(NRL)$ computation and $O(NR)$ storage size, given $D$ is a small constant. This dramatic reduction in both computation and memory storage empowers much more efficient training and testing when combining with ERM classifiers such as SVM.

\subsection{Convergence of Random Warping Series}

In the following, we extend standard convergence analysis of Random Features \citep{rahimi2007random} from a kernel between two fixed-dimensional vectors to a kernel function measuring similarity between two time series of variable lengths. Note \citep{wu2018d2ke} has proposed a general analysis for any distance-based kernel through covering number w.r.t. the distance, which however, does not apply directly here since DTW is not a distance metric.

Let $(A,B)$ be $l\times D$ and $l\times L$ matrices that map each element of $\omega$ and $x$ to an element of a DTW alignment path. The feature map of RWS can be expressed as
\begin{equation}\label{RF}
\phi_{\omega}(x) := \min_{(A,B)\in\A(\omega,x)} \tau(A\omega,Bx) 
=\sum_{i=1}^l \tau([A\omega]_i, [Bx]_i).
\end{equation}
Note that in practice one can often convert a similarity function into a dissimilarity function to fit into the above setting. The goal is to approximate the kernel
$
k(x,y):=\int_{\omega} p(\omega) \phi_{\omega}(x)\phi_{\omega}(y) d\omega
$
via a sampling approximation
$
s_R(x,y)=\frac{1}{R}\sum_{i=1}^R \phi_{\omega_i}(x)\phi_{\omega_i}(y)
$
with $\omega_i$ $\sim$ $p(\omega)$. Note we have $E[s_R(x,y)]=E_{\omega_i}[\phi_{\omega_i}(x)\phi_{\omega_i}(y)]=k(x,y)$. The question is how many samples $R$ are needed to guarantee 
\begin{equation}\label{kernel_approx_error}
\left|s_R(x,y)-k(x,y)\right| \leq \epsilon \;\;\forall x,y\in \X
\end{equation}
In the standard analysis of RF, the required sample size is
$
\Omega(\frac{d}{\epsilon^2}\log\frac{\sigma_p \text{diam}(\X)}{\epsilon})
$
where $\X$ comprises all $d$-dimensional vectors of diameter diam($\X$). The standard analysis does not apply to our case for two reasons: (a) our domain $\X$ contains time series of different lengths, and (b) our kernel involves a minimization \eqref{RF} over all possible DTW alignments, and thus is not shift-invariant as required in \citep{rahimi2007random}. To obtain a uniform convergence bound that could potentially handle time series of unbounded length, we introduce the notion of \emph{minimum shape-preserving length}.

\begin{definition} The \emph{Minimum Shape-Preserving Length (MSPL)} $d_{\epsilon}$ of tolerance $\epsilon$ is the smallest $L$ such that $\forall x\in \X, \exists\tx\in\R^{L},$
\begin{equation}\label{mini_length}
\min_{(A,B)\in\A(\tx,x):B=I} \;\|A\tx-Bx\| \leq \epsilon
\end{equation}
where $\A(\tx, x)$ is the set of possible alignments between $\tilde x$ and $x$ considered by DTW, and $I$ is an identity matrix.
\end{definition}

In other words, $d_{\epsilon}$ defines the smallest length one can compress a time series to with approximation error no more than $\epsilon$, measured by DTW in the $\ell_2$ distance. Then the following gives the number of RWS required to guarantee an $\epsilon$ uniform convergence over all possible inputs $x,y\in\mathcal{X}$.

\begin{theorem}\label{theorem_converge}
Assume the ground metric $\tau(A\omega,Bx)$ satisfies $|\tau(.,.)|\leq\gamma$ and is Lipschitz-continuous w.r.t. $x$ with parameter $\beta(\omega)$ where $\text{Var}[\beta(\omega)]\leq\sigma_{\tau}^2$. The RWS approximation with $R$ features satisfies
\begin{multline}
    P \left[\max_{x,y\in \X}|s_R(x,y)-k(x,y)|\geq 3\epsilon \right] \\
\leq  8r^2\left(\frac{4\gamma\sigma_{\tau}}{\epsilon}\right)^2 e^{-\frac{R\epsilon^2}{32\gamma^4(1+d_{\epsilon})}}.
\end{multline}
where $r$ is the radius of time series domain $\X$ in the $\ell_{\infty}$ norm and $d_{\epsilon}$ is the MSPL with precision $\epsilon$.
\end{theorem}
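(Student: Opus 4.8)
The plan is to adapt the classical Rahimi--Recht uniform-convergence argument for random features, with two adaptations forced by the setup: (i) the domain $\X$ contains series of variable and possibly unbounded length, so the usual ``$d$-dimensional ball of diameter $\mathrm{diam}(\X)$'' must be replaced by a finite-dimensional surrogate, which is exactly what the MSPL $d_{\epsilon}$ supplies; and (ii) since DTW is not a metric, the argument cannot pass through any triangle inequality and must instead lean entirely on the stated boundedness $|\tau|\le\gamma$ and Lipschitz property of the feature map $\phi_{\omega}$. Writing $g:=s_R-k$, the skeleton is the three-term decomposition
\[
|g(x,y)|\;\le\;\underbrace{|g(x,y)-g(\tx,\ty)|}_{\text{compression}}\;+\;\underbrace{|g(\tx,\ty)-g(\tx_0,\ty_0)|}_{\text{interpolation}}\;+\;\underbrace{|g(\tx_0,\ty_0)|}_{\text{anchor}},
\]
where $(\tx,\ty)$ is a length-$d_{\epsilon}$ compression of $(x,y)$ and $(\tx_0,\ty_0)$ is the nearest point of a finite net; the final $3\epsilon$ results from bounding each summand by $\epsilon$.

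For the \emph{compression step}, the definition of the MSPL $d_{\epsilon}$ guarantees that for every $x\in\X$ there is $\tx\in\R^{d_{\epsilon}}$ with alignment-minimizing $\ell_2$ distance at most $\epsilon$ to $x$. Using that $\phi_{\omega}$ is $\beta(\omega)$-Lipschitz in $x$ and that $|\phi_{\omega}|\le\gamma$, one gets $|\phi_{\omega}(x)\phi_{\omega}(y)-\phi_{\omega}(\tx)\phi_{\omega}(\ty)|\le 2\gamma\beta(\omega)\epsilon$ by the product rule; integrating over $p(\omega)$ bounds $|k(x,y)-k(\tx,\ty)|$ and averaging over the samples bounds $|s_R(x,y)-s_R(\tx,\ty)|$, each by $2\gamma\epsilon$ times $E[\beta(\omega)]$ respectively $\tfrac1R\sum_i\beta(\omega_i)$, which are controlled through the variance bound $\mathrm{Var}[\beta(\omega)]\le\sigma_{\tau}^2$. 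Up to rescaling $\epsilon$ by a constant this contributes one $\epsilon$, and it reduces the task to a uniform bound over the \emph{compact} set $\tilde\X\times\tilde\X$, where $\tilde\X\subseteq\R^{d_{\epsilon}}$ is an $\ell_{\infty}$-ball of radius $r$.

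For the \emph{anchor and interpolation steps}, cover the $2d_{\epsilon}$-dimensional box $\tilde\X\times\tilde\X$ by an $\ell_{\infty}$-net of resolution $t$ using at most $(r/t)^{2d_{\epsilon}}$ anchors. At a fixed anchor, $g$ is a zero-mean average of $R$ i.i.d.\ terms bounded by $\gamma^2$ in absolute value, so Hoeffding's inequality and a union bound give $P[\exists\text{ anchor}:|g|\ge\epsilon]\le 2(r/t)^{2d_{\epsilon}}\,e^{-R\epsilon^2/(8\gamma^4)}$. For interpolation, $g$ is Lipschitz on $\tilde\X$ with a (random) constant $L_g\le\tfrac{\gamma}{R}\sum_i\beta(\omega_i)+\gamma\,E[\beta(\omega)]$, so $E[L_g^2]$ is of order $\gamma^2\sigma_{\tau}^2$ and Markov's inequality yields $P[L_g\ge\epsilon/t]\le c\,(t\gamma\sigma_{\tau}/\epsilon)^2$; choosing $t$ small then keeps the interpolation error below $\epsilon$ with high probability. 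Adding the two failure probabilities gives a bound of the form $2(r/t)^{2d_{\epsilon}}e^{-R\epsilon^2/(8\gamma^4)}+c\,(t\gamma\sigma_{\tau}/\epsilon)^2$.

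Finally, optimize over the free net resolution $t$: balancing $t^{-2d_{\epsilon}}e^{-cR\epsilon^2/\gamma^4}$ against $t^2$ exactly as in Rahimi--Recht multiplies the exponent of the first term by $2/(2d_{\epsilon}+2)=1/(1+d_{\epsilon})$ and leaves a prefactor of order $r^2(\gamma\sigma_{\tau}/\epsilon)^2$; tracking the numerical constants (and allocating, say, scale $\epsilon/2$ to the anchor term within the $3\epsilon$ budget) produces the claimed $8r^2\left(\tfrac{4\gamma\sigma_{\tau}}{\epsilon}\right)^2 e^{-R\epsilon^2/(32\gamma^4(1+d_{\epsilon}))}$. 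The genuinely delicate part is the compression step: because DTW has no triangle inequality, the passage from the unbounded-length domain to $\R^{d_{\epsilon}}$ and the resulting control of $|g(x,y)-g(\tx,\ty)|$ must be carried entirely by the Lipschitz hypothesis on $\phi_{\omega}$ together with the MSPL definition, and in particular one must verify that the minimum over alignments in \eqref{RF} is indeed $\beta(\omega)$-Lipschitz in the required sense (same alignment set, $B=I$). Everything after that is the standard random-features bookkeeping.
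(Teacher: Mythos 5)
Your proposal is correct and follows essentially the same route as the paper's proof: compress to a length-$d_{\epsilon}$ representative via the MSPL, cover the $\ell_{\infty}$-ball of radius $r$ in $\R^{d_{\epsilon}}$ with a net, apply Hoeffding plus a union bound at the anchors, control the off-net error through the random Lipschitz constant of $f$ via Chebyshev on $\beta(\omega)$ (the paper's Lemma~\ref{lemma_lipschitz_f}), and balance the two failure terms by optimizing the net resolution. The only cosmetic difference is that you split the Lipschitz step into separate ``compression'' and ``interpolation'' terms, whereas the paper absorbs both displacements into a single $2t$ deviation before adding the anchor term.
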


\begin{proof}[Proof Sketch]
Let $f(x,y):=s_R(x,y)-k(x,y)$. We have $E[f(x,y)]=0$ and $|f(x,y)|\leq 2\gamma^2$ by the boundedness of function $\tau(.,.)$. Then by Hoeffding inequality, we have
\begin{equation}\label{hoeffding}
P\left[|f(x,y)|\geq t \right] \leq 2\exp(-Rt^2/8\gamma^4)
\end{equation}
for a given pair $(x,y)\in \X \times \X$. To get a uniform bound that holds for all pairs of series $(x,y)\in\X\times\X$, consider the pair of series  $(\tx,\ty)$ of minimum shape-preserving length $d(\epsilon)$ under precision $\epsilon$. We have an $\epsilon$-net $\E$ with $
|\E|=(\frac{2r}{\epsilon})^{d}$
that covers the $d$-dimensional $\ell_{\infty}$-ball of radius $r$. Then through union bound and \eqref{hoeffding}, we have
\begin{equation}\label{union_bound}
P\left[\max_{\tx_i,\ty_j\in \E}|f(\tx_i,\ty_j)|\geq t \right] \leq 2|\E|^2\exp(-Rt^2/8\gamma^4).
\end{equation}
Let $\B_{\infty}(d)$ be the $d$-dimensional $\ell_{\infty}$-ball. Given any time series $x,y\in\X$ of arbitrary length, we can first find $\tx,\ty\in \B_{\infty}(d)$ with
$\|\tA\tx - x\|\leq \epsilon$, $\|\tA\ty-y\| \leq \epsilon$ and then find
$\tx_i,\ty_j\in\E$ such that $\|\tx-\tx_i\|\leq \epsilon$, $\|\ty-\ty_j\|\leq \epsilon$.
By the result of Lemma \ref{lemma_lipschitz_f} (see appendix \ref{sec:proof_lemma}), the closeness of $(x,y)$ to $(\tx_i,\ty_i)$ implies the closeness of $f(x,y)$ to $f(\tx_i,\ty_i)$, which leads to
\begin{equation}\label{tmp2}
P\left[|f(x,y)-f(\tx_i,\ty_i)| \geq 2t\right]\leq \frac{8\gamma^2\sigma_{\tau}^2\epsilon^2}{t^2}.
\end{equation}
Combining \eqref{union_bound} and \eqref{tmp2}, we have
\begin{equation}\label{bound_prob}
P\left[\max_{x,y\in \X}|f(\tx,\ty)| \geq 3t\right]\leq2\left(\frac{2r}{\epsilon}\right)^{2d}e^{-\frac{Rt^2}{32\gamma^4}}+\frac{16\gamma^2\sigma_{\tau}^2\epsilon^2}{t^2}.
\end{equation}
This is of the from $\kappa_1\epsilon^{-2d}+\kappa_2\epsilon^2$. Choosing $\epsilon=\left(\kappa_1/\kappa_2\right)^{1/(2+2d)}$ to balance the two terms in \eqref{bound_prob}, the RHS becomes $2\kappa_1^{1/(1+d)}\kappa_2^{d/(1+d)}$.
This yields the result
$$
P\left[\max_{x,y\in \X}|f(x,y)| \geq 3t\right] \leq  8r^2\left(\frac{4\gamma\sigma_{\tau}}{t}\right)^2 e^{-\frac{Rt^2}{32\gamma^4(1+d)}}.
$$
\end{proof}
\vspace{-4mm}

The above theorem \ref{theorem_converge} shows that, to guarantee $\sup_{x,y\in\X}|s_R(x,y)-k(x,y)|\leq \epsilon$ with probability $1-\delta$, it suffices to have
$
R = \Omega(\frac{d_{\epsilon}\gamma^4}{\epsilon^2}\log\frac{\gamma r\sigma_{\tau}}{\delta\epsilon}).
$
In practice, the constants $r$, $\gamma$ are not particularly large due to the normalization on series $x,y\in \X$ and dissimilarity function $\tau(.,.)$. The main factor determining the rate of convergence is the shape-preserving length $d_{\epsilon}$. Note that for problems with time series length bounded by $L$, we always have $d_{\epsilon}\leq L$, which means the number of features required would be only of order $R=\Omega(L/\epsilon^2)$.

\section{Experiments}
\label{sec:Experiments}
We conduct experiments to demonstrate the efficiency and effectiveness of the RWS, and compare against 9 baselines on 16 real-world datasets from the widely-used UCR time-series classification archive \citep{UCRArchive} as shown in Table \ref{tb:info of datasets}. We evaluate RWS on the datasets with variable number and length to achieve these goals: 1) competitive or better accuracy for small problems; 2) matches or outperforms other methods in terms of both performance and runtime for middle or large scale tasks.  
We implement our method in Matlab and use C Mex function \footnote{https://www.mathworks.com/matlabcentral /fileexchange/43156-dynamic-time-warping--dtw-} for computationally expensive component of DTW. For other methods we use the same routine to promote a fair runtime comparison, where the window size of DTW is set as $min(L/10, 40)$ similar to \citep{QiYi2016,paparrizos2015k}. More details about datasets and parameter settings are in Appendix \ref{sec:Experimental settings and parameters for RWS}.

\begin{table}[htbp]
\centering
\scriptsize
\caption{Properties of the datasets. The number and the length of time series are sorted increasingly. } 
\label{tb:info of datasets}
\vspace{-3mm}
\begin{center}
    \begin{tabular}{ c c c c c }
    \hline
    Name & $C$:Classes & $N$:Train & $M$:Test & $L$:length \\ \hline 
    Beef    & 5 & 30 & 30 & 470  \\
    DPTW    & 6 & 400 & 139	& 80  \\
    IPD     & 2 & 67 & 1,029 & 24 \\
    PPOAG   & 3 & 400 & 205	& 80 \\ 
    MPOC    & 2 & 600 & 291	& 80  \\  
    POC     & 2 & 1,800 & 858 & 80  \\
    LKA     & 3 & 375 & 375	& 720 \\ 
    IWBS    & 11 & 220 & 1,980 & 256 \\ 
    TWOP    & 4 & 1,000 & 4,000 & 128 \\ 
    ECG5T   & 5 & 500 & 4,500 & 140 \\ 
    CHCO    & 3 & 467 &	3,840 & 166  \\
    Wafer 	& 2 & 1,000 & 6,174 & 152 \\ 
    MALLAT  & 8 & 55 & 2,345 & 1,024    \\ 
    FordB   & 2 & 3636 & 810 & 500 \\ 
    NIFECG  & 42 & 1,800 & 1,965 & 750  \\
    HO      & 2 & 370 & 1,000 & 2,709 \\  \hline
    \end{tabular}
\end{center}
\vspace{-4mm}
\end{table}

\begin{table}[t]
\centering
\caption{Classification performance comparison among RWS, TSEigen, and TSMC with $R = 32$.} 
\label{tb:comp_rf_mc_eigen}
\scriptsize
\newcommand{\Bd}[1]{\textbf{#1}}
\vspace{-3mm}
\begin{center}
    \begin{tabular}{ c cc cc cc }
    \hline
    \multicolumn{1}{c}{Classifier}
    & \multicolumn{2}{c}{RWS} 
    & \multicolumn{2}{c}{TSEigen}
    & \multicolumn{2}{c}{TSMC} \\ \hline 
    \multicolumn{1}{c}{Dataset}
	& Accu & Time & Accu & Time & Accu & Time \\ \hline
	Beef   & \Bd{0.733} & \Bd{0.3} & 0.633 & 2.1 & 0.433 & 0.6 \\
	DPTW   & \Bd{0.79} & \Bd{0.5} & 0.738 & 7.1 & 0.738 & 1.5 \\
	IPD    & \Bd{0.969} & \Bd{0.3} & 0.911 & 8.6 & 0.80 & 1.7 \\
	PPOAG  & \Bd{0.868} & \Bd{0.4} & 0.82 & 8.9 & 0.82 & 1.8 \\
	MPOC   & \Bd{0.711} & \Bd{0.8} & 0.653 & 19.3 & 0.653 & 2.4 \\
	POC    & \Bd{0.711} & \Bd{2.4} & 0.686 & 172.3 & 0.66 & 8.2 \\
	LKA    & \Bd{0.792} & \Bd{7.3} & 0.528 & 401.5 & 0.525 & 39.5 \\
	IWBS   & 0.619 & \Bd{8.9} & \Bd{0.633} & 784.6 & 0.57 & 31.9 \\
	TWOP   & \Bd{0.999} & \Bd{4.4} & 0.976 & 1395 & 0.946 & 32.8 \\
	ECG5T  & \Bd{0.933} & \Bd{10.6} & 0.932 & 1554 & 0.918 & 36.0 \\
	CHCO   & \Bd{0.572} & \Bd{6.3} & 0.529 & 1668 & 0.402 & 45.7 \\
	Wafer  & \Bd{0.993} & \Bd{9.6} & 0.89 & 3475 & 0.89 & 59.3 \\
	MALLAT & \Bd{0.937}  & \Bd{33.9}  & 0.898 & 7982 & 0.888 & 282.6 \\
    FordB  & \Bd{0.727} & \Bd{43.5}  & 0.704 & 10069  & 0.686 & 216.3  \\ 
    NIFECG & \Bd{0.907} & \Bd{19.8} & 0.867 & 10890  & 0.582 & 265  \\ 
    HO & 0.843 & \Bd{43.3} & \Bd{0.845} & 46509 & 0.82 & 979.1 \\ \hline
    \end{tabular}   
\end{center}
\vspace{-4mm}
\end{table}

\subsection{Effects of $\sigma$, $R$ and $D$ on RWS}
\label{Chapter:Effects of different factors on Random Features}
\textbf{Setup.} We first perform experiments to investigate the characteristics of the RWS method by varying the kernel parameter $\sigma$, the rank $R$ and the length $D$ of random series. Due to limited space, we only show typical results and see Appendix \ref{sec:More Results on various effects of random features} for complete ones. 

\textbf{Effects of $\sigma$.} It is well known that the choice of the kernel parameter $\sigma$ determines the quality of various kernels. Figure \ref{fig:exptsA_varyingS} shows that in most cases the training and testing performance curves agree well in the sense that they consistently increase at the beginning, stabilize around $\sigma = 1$ (which corresponds to the standard distribution), and finally decrease in the end. In a few cases like NIFECG, the optimal performance is slightly shifted from $\sigma = 1$. This observation is favorable since it suggests that one may easily tune our approach over a smaller interval around $\sigma = 1$ for good performance. 

\textbf{Effects of $R$.} We evaluate the training and testing performance when varying the rank $R$ from 4 to 512 with fixed $\sigma$ and $D$. Figure \ref{fig:exptsA_varyingR} shows that the training and testing accuracy generally converge almost exponentially when increasing $R$ from very small number ($R = 4$) to a relative large number ($R = 64$), and then slowly saturate to the optimal performance. Empirically, this feature is the most favorable because the performance of RWS is relatively stable even for small $R$. More importantly, this confirms our analysis in Theorem \ref{theorem_converge} that our RWS approximation can guarantee (rapid) convergence to the exact kernel. 

\textbf{Effects of $D$.} We investigate the effect of the length $D$ of the random series on training and testing performance. As hinted at earlier, a key insight behind the proposed time-series kernel depends on the assumption that a random series of short length can effectively segment raw time series in a way that captures its patterns. Figure \ref{fig:exptsA_varyingD} shows that although testing accuracy seems to fluctuate when varying $D_{max}$ from 10 to 100, it is clear that the near-peak performance can be achieved when $D_{max}$ is small in the most of cases. 

\begin{figure*}[!htb]
\centering
    \begin{subfigure}[b]{0.22\textwidth}
      \includegraphics[width=\textwidth]{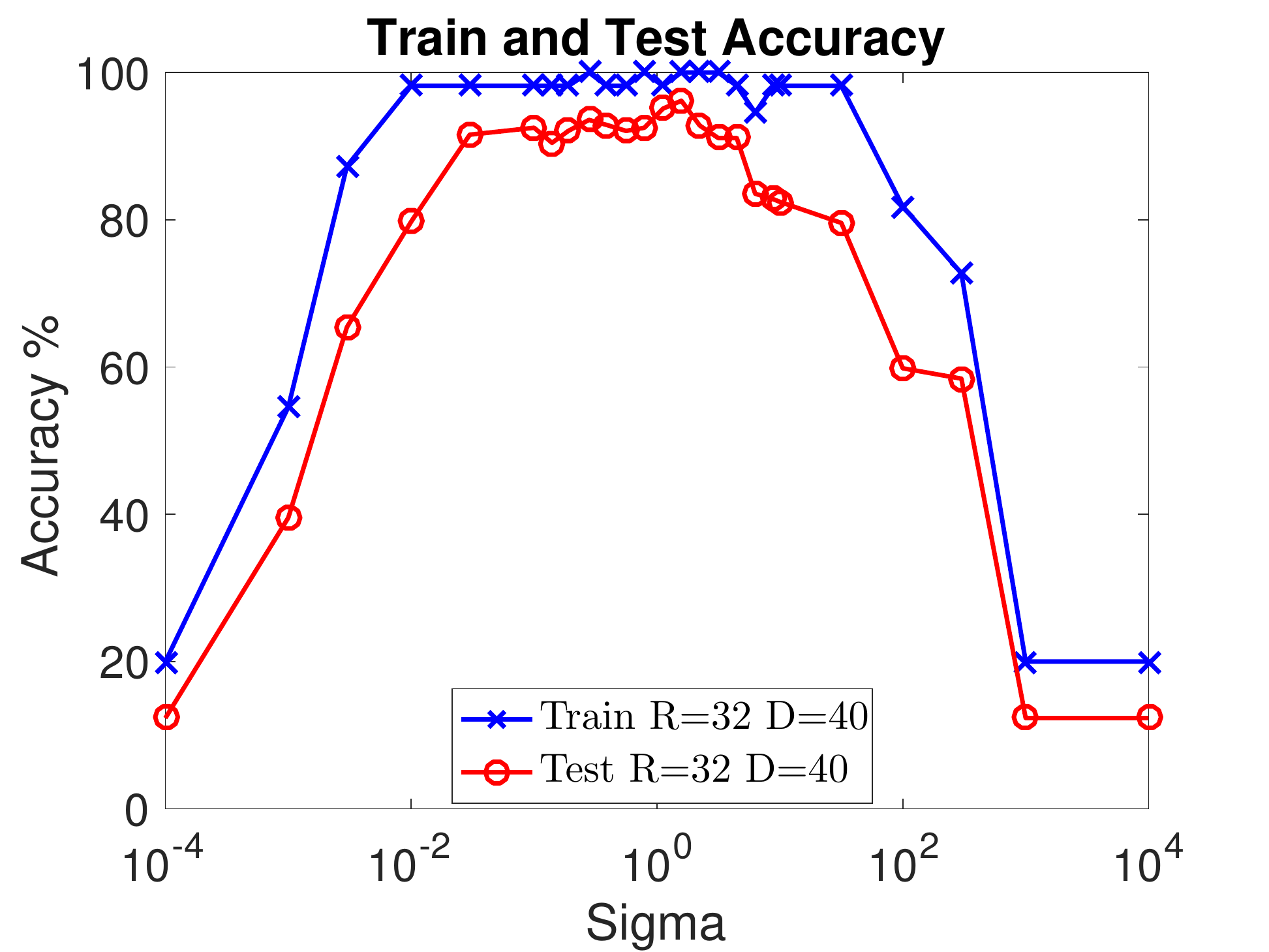}
      \caption{MALLAT}
      \label{fig:exptsA_varyingS_MALLAT}
    \end{subfigure}
   \begin{subfigure}[b]{0.22\textwidth}
      \includegraphics[width=\textwidth]{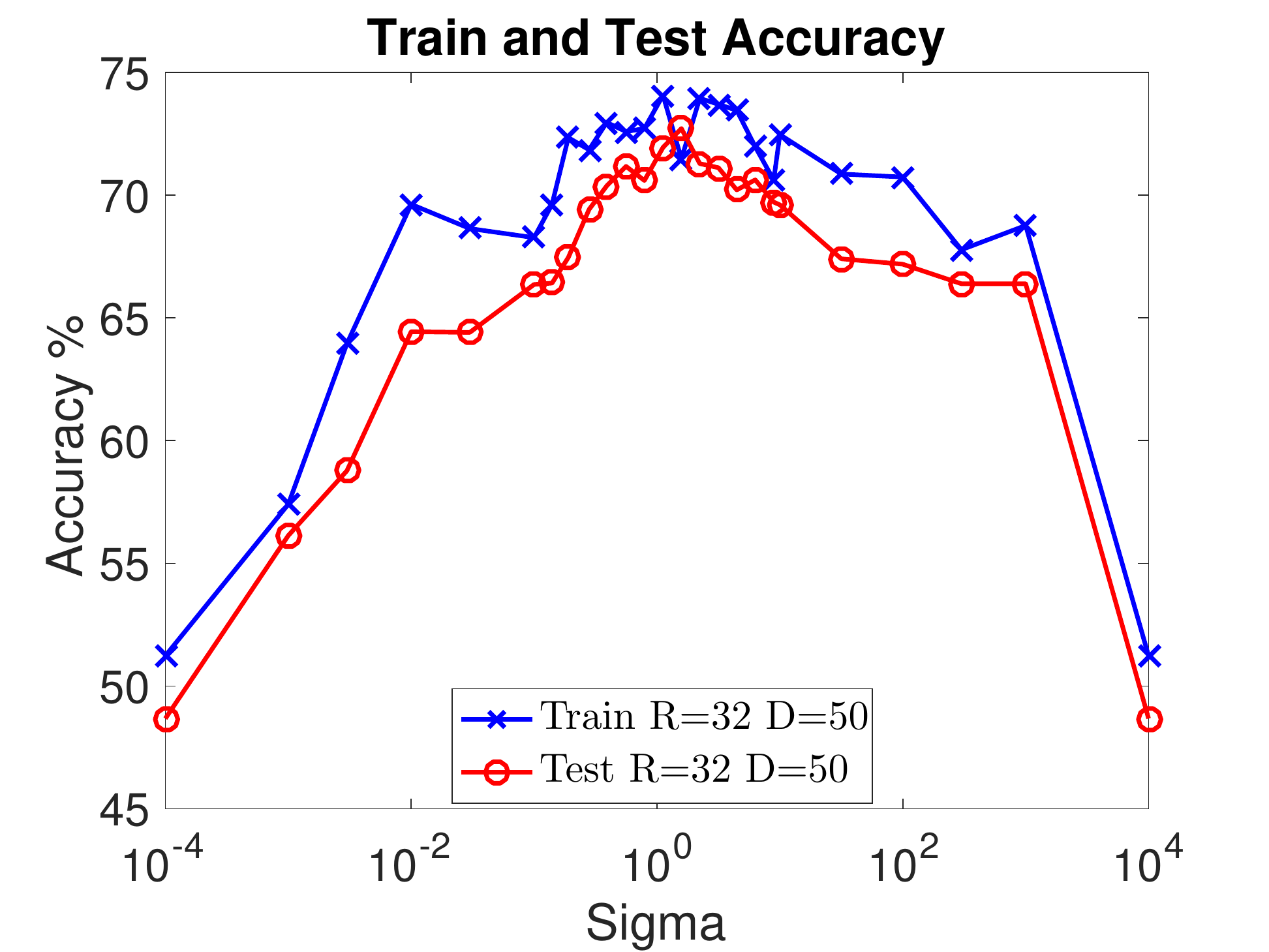}
      \caption{FordB}
      \label{fig:exptsA_varyingS_FordB}
    \end{subfigure}
   \begin{subfigure}[b]{0.22\textwidth}
      \includegraphics[width=\textwidth]{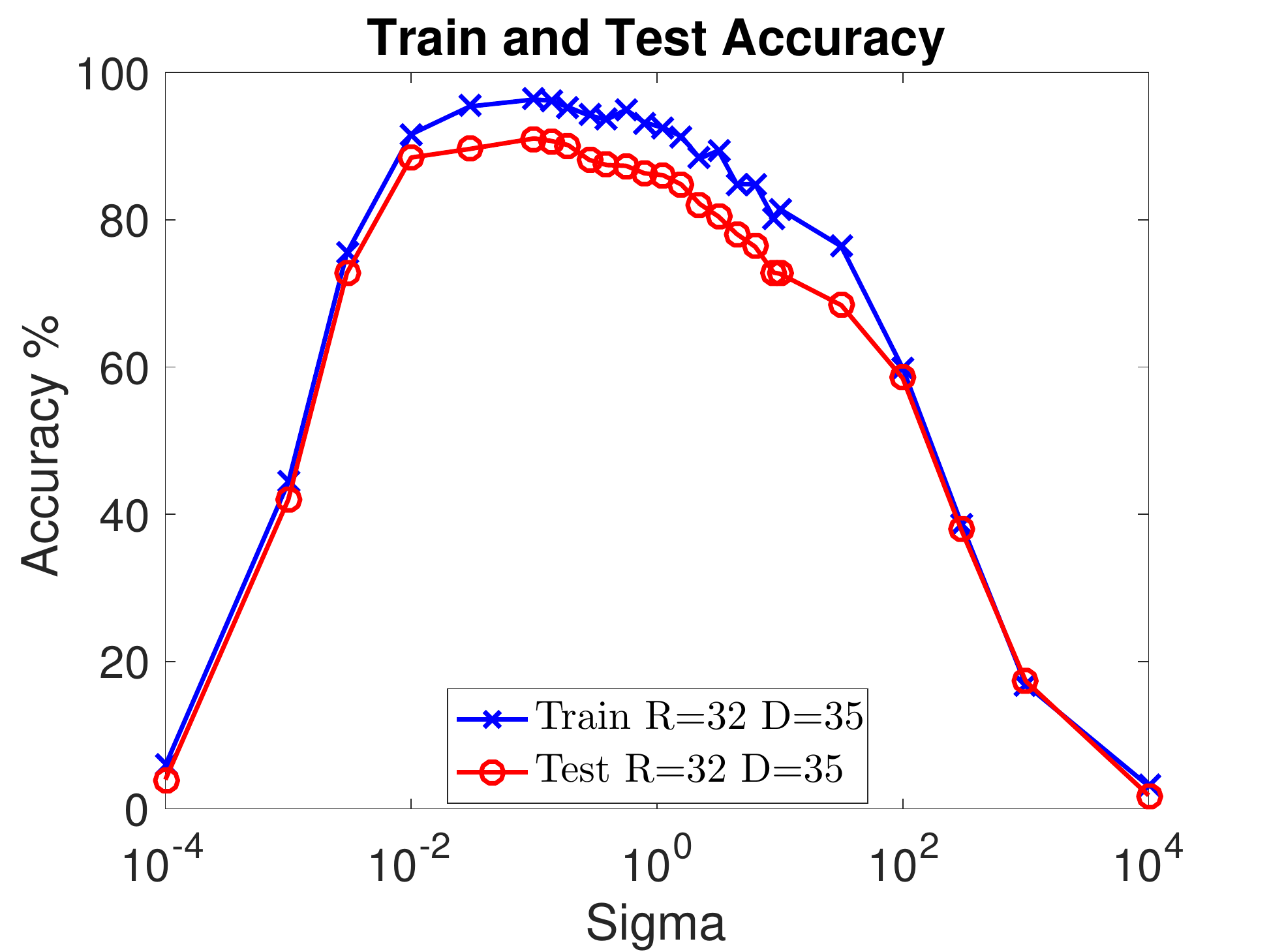}
      \caption{NIFECG}
      \label{fig:exptsA_varyingS_NonInvasiveFatalECG_Thorax2}
    \end{subfigure}
    \begin{subfigure}[b]{0.22\textwidth}
      \includegraphics[width=\textwidth]{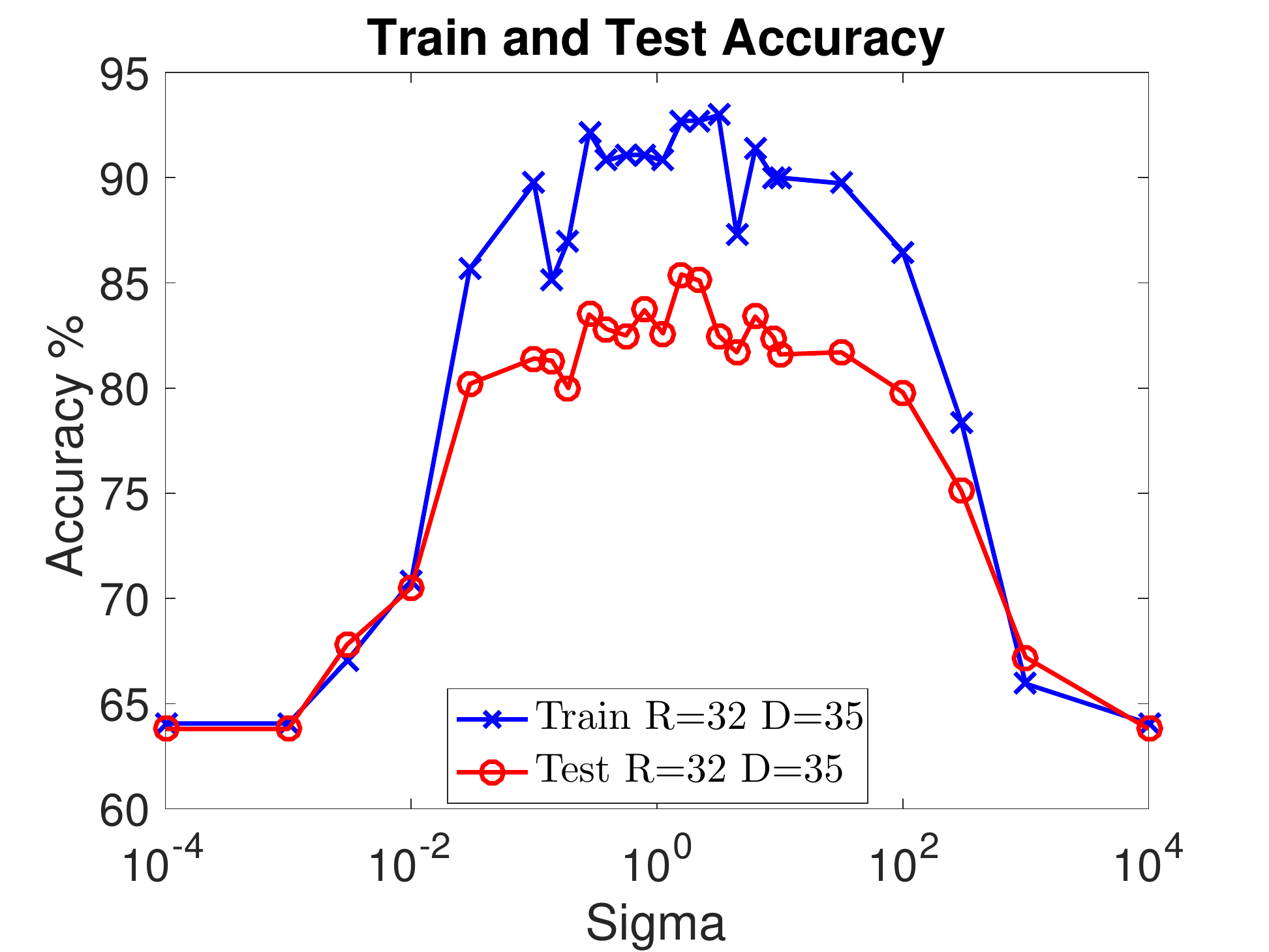}
      \caption{HO}
      \label{fig:exptsA_varyingS_HandOutlines}
    \end{subfigure}
\caption{Train (Blue) and test (Red) accuracy when varying $\sigma$ with fixed $D$ and $R$. }
\label{fig:exptsA_varyingS}
\end{figure*}

\begin{figure*}[!htb]
\centering
    \begin{subfigure}[b]{0.22\textwidth}
      \includegraphics[width=\textwidth]{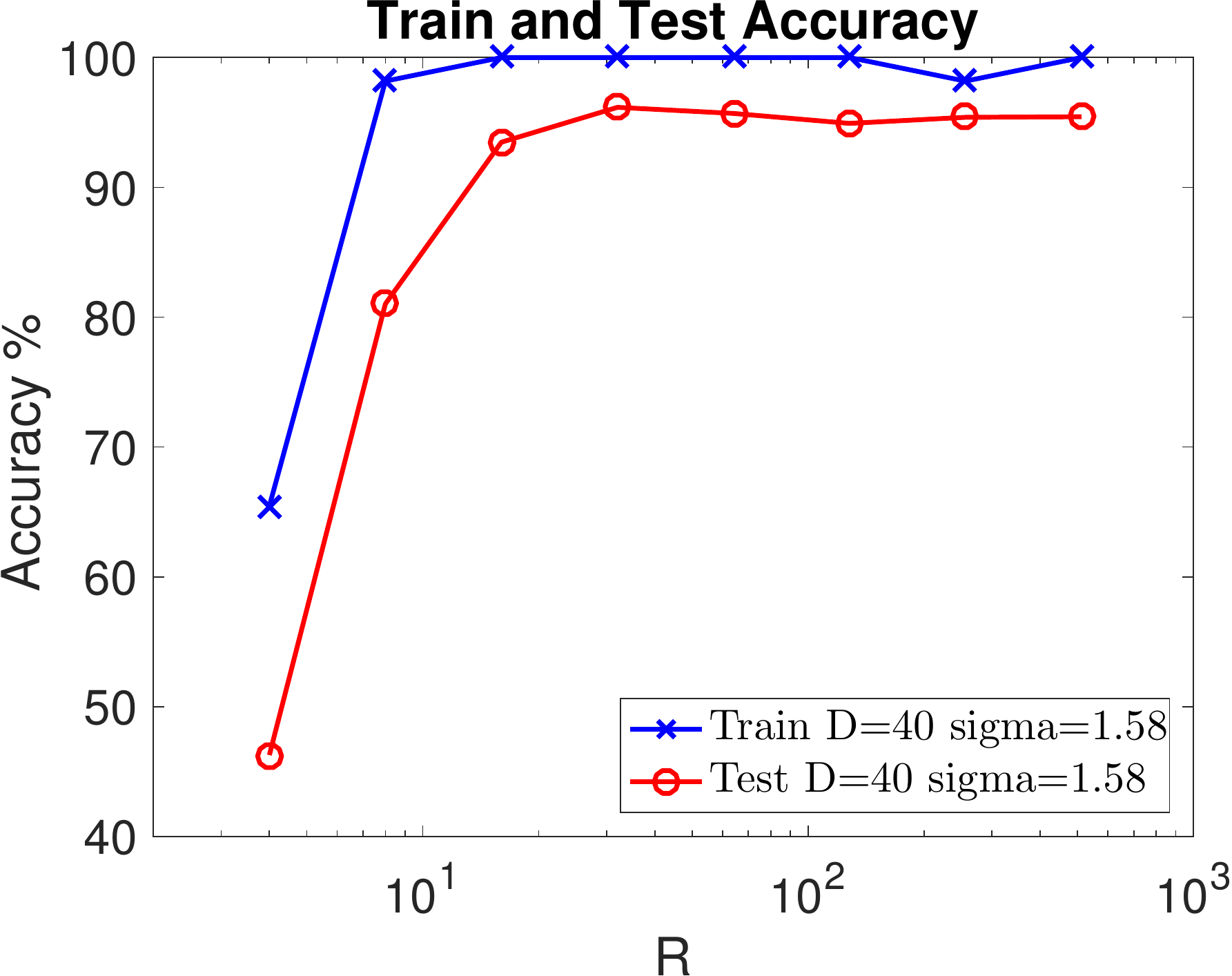}
      \caption{MALLAT}
      \label{fig:exptsA_varyingR_MALLAT}
    \end{subfigure}
   \begin{subfigure}[b]{0.22\textwidth}
      \includegraphics[width=\textwidth]{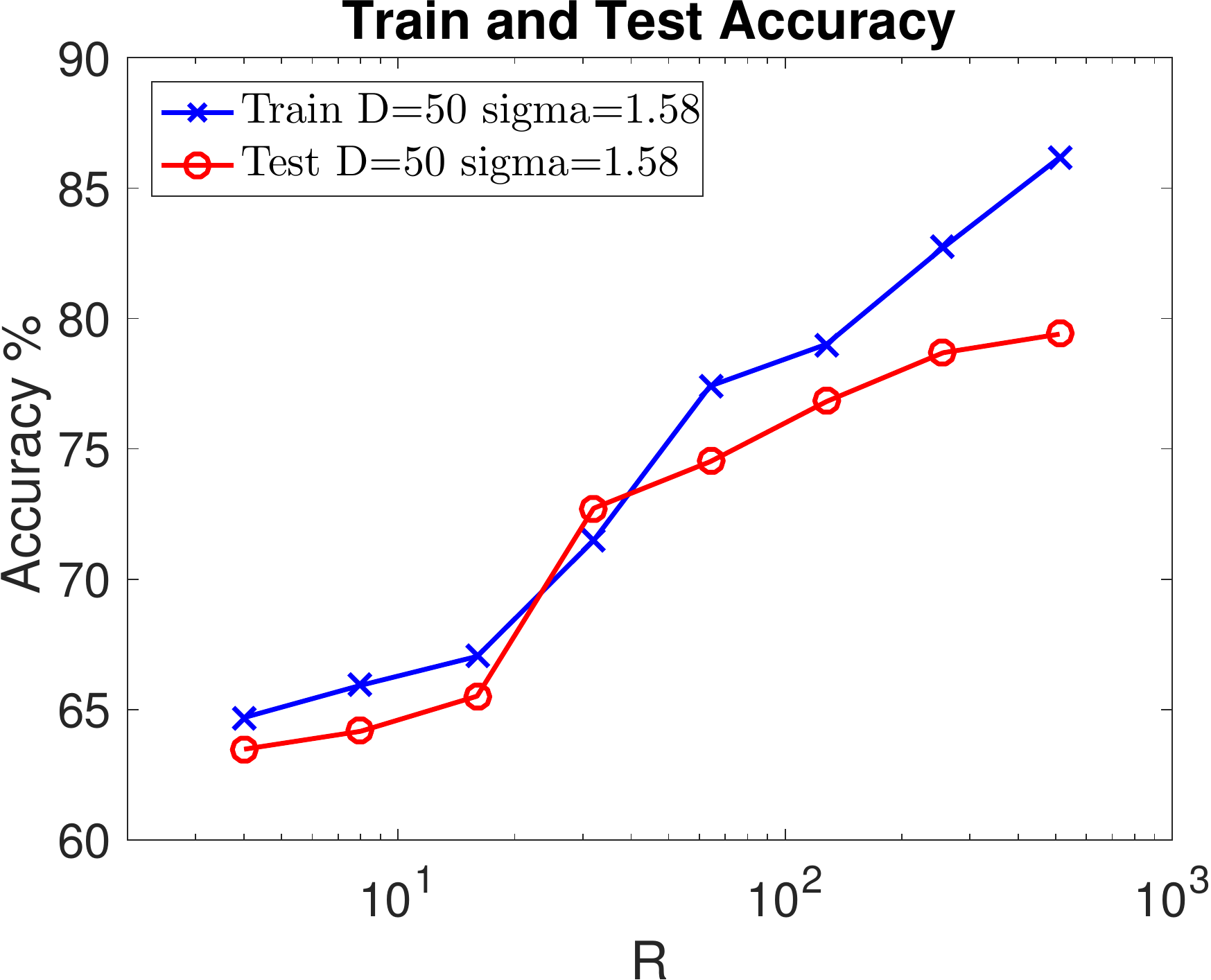}
      \caption{FordB}
      \label{fig:exptsA_varyingR_FordB}
    \end{subfigure}
   \begin{subfigure}[b]{0.22\textwidth}
      \includegraphics[width=\textwidth]{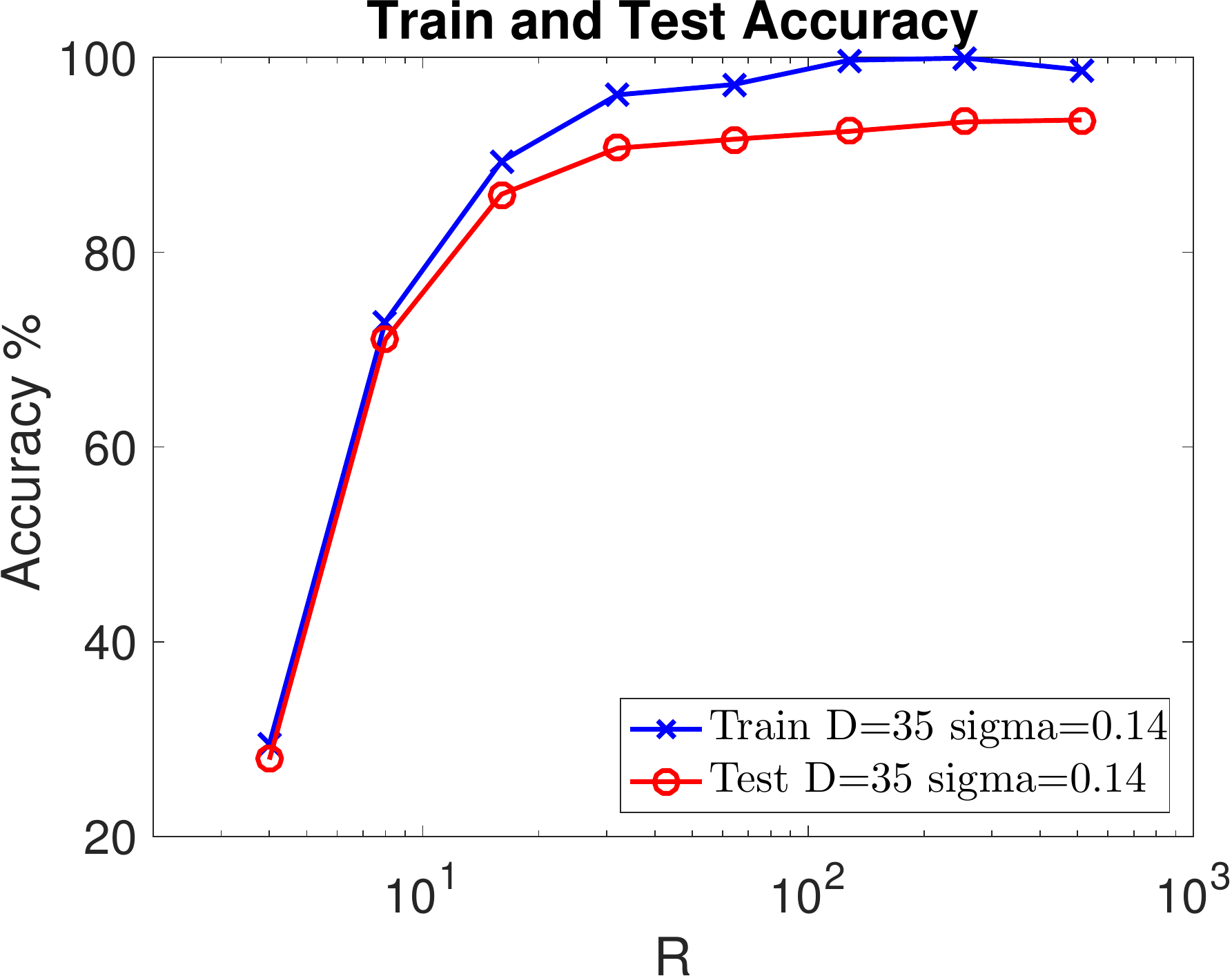}
      \caption{NIFECG}
      \label{fig:exptsA_varyingR_NonInvasiveFatalECG_Thorax2}
    \end{subfigure}
    \begin{subfigure}[b]{0.22\textwidth}
      \includegraphics[width=\textwidth]{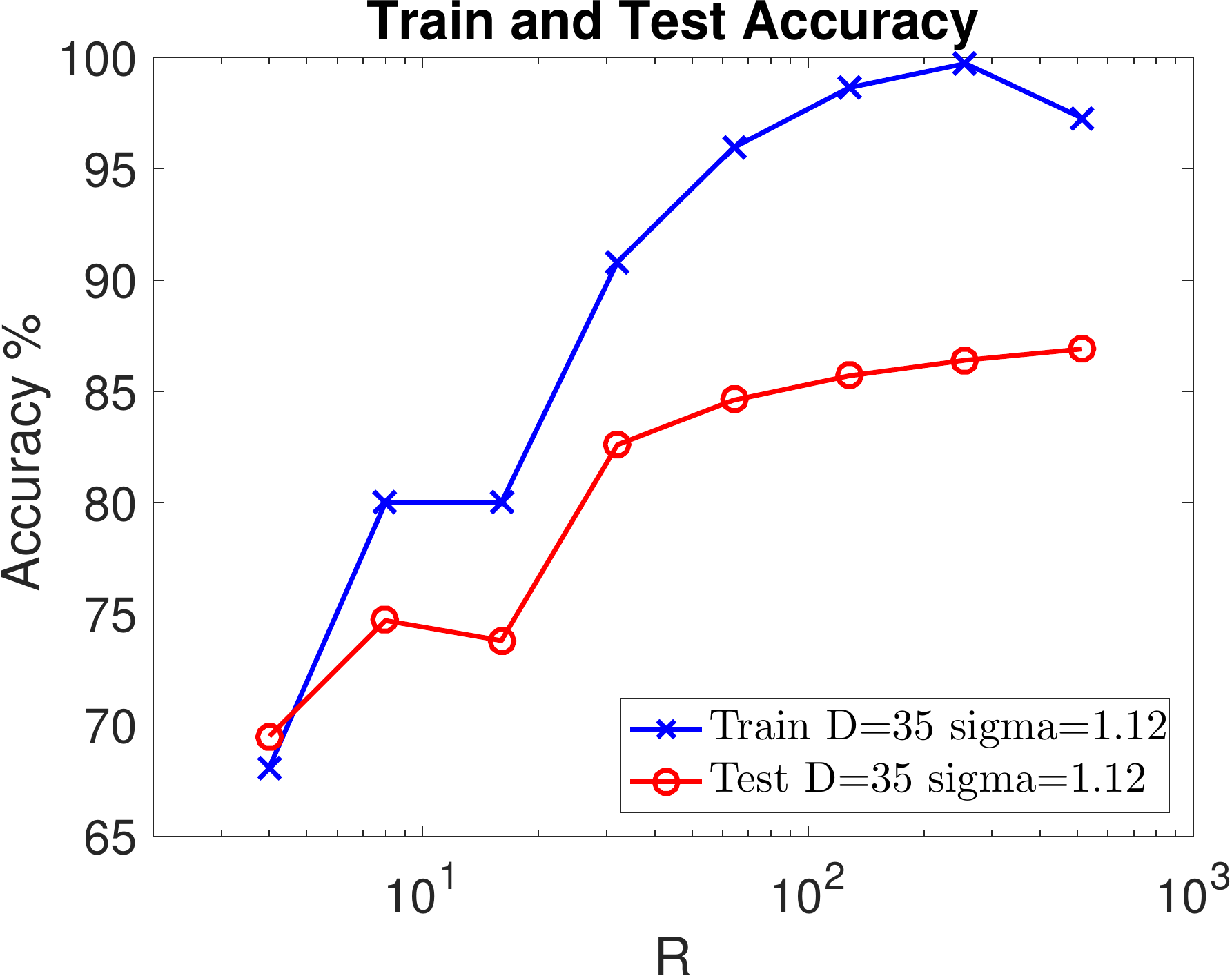}
      \caption{HO}
      \label{fig:exptsA_varyingR_HandOutlines}
    \end{subfigure}
\caption{Train (Blue) and test (Red) accuracy when varying $R$ with fixed $\sigma$ and $D$.}
\label{fig:exptsA_varyingR}
\end{figure*}

\begin{figure*}[!htb]
\centering
   \begin{subfigure}[b]{0.22\textwidth}
      \includegraphics[width=\textwidth]{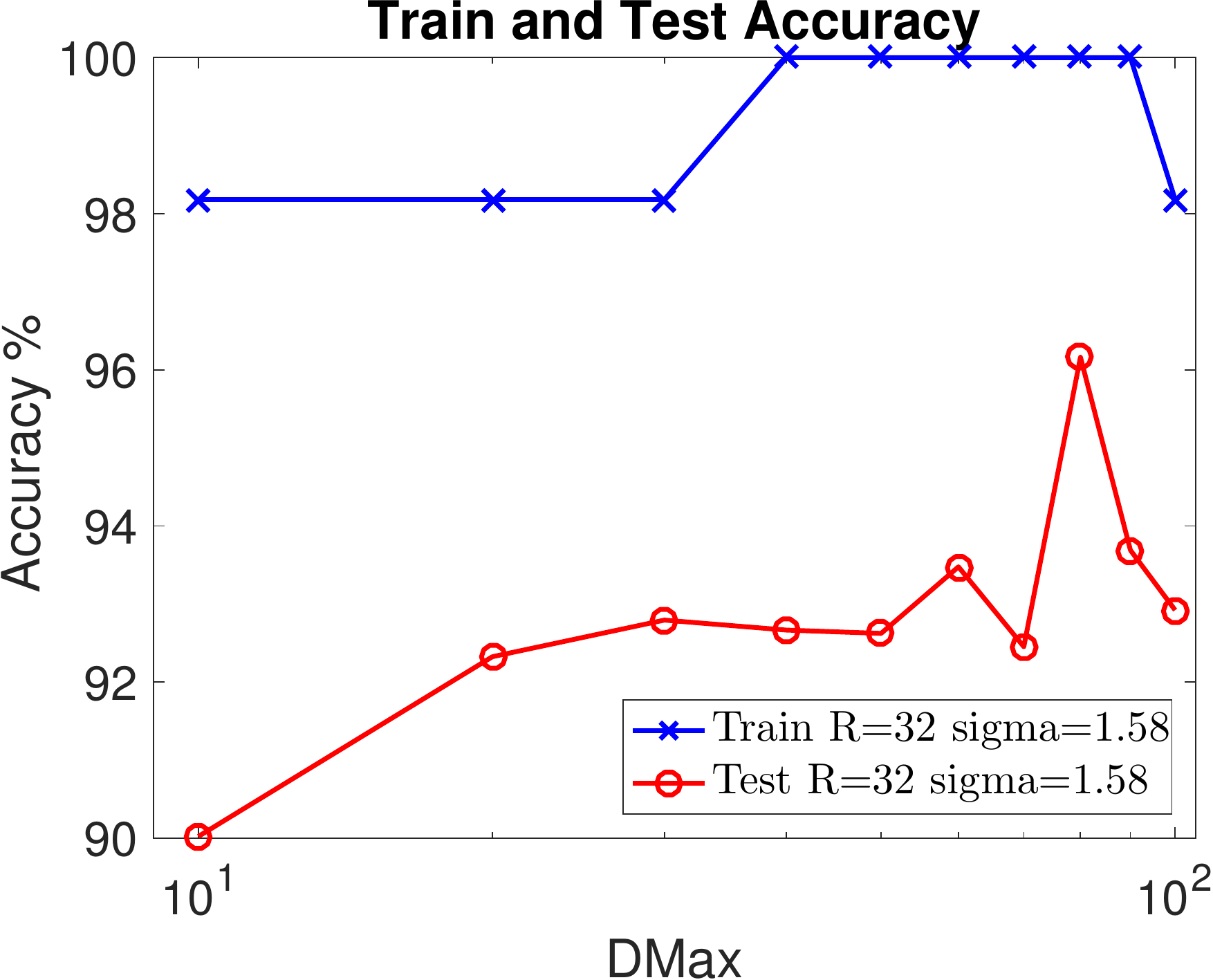}
      \caption{MALLAT}
      \label{fig:exptsA_varyingD_MALLAT}
    \end{subfigure}
   \begin{subfigure}[b]{0.22\textwidth}
      \includegraphics[width=\textwidth]{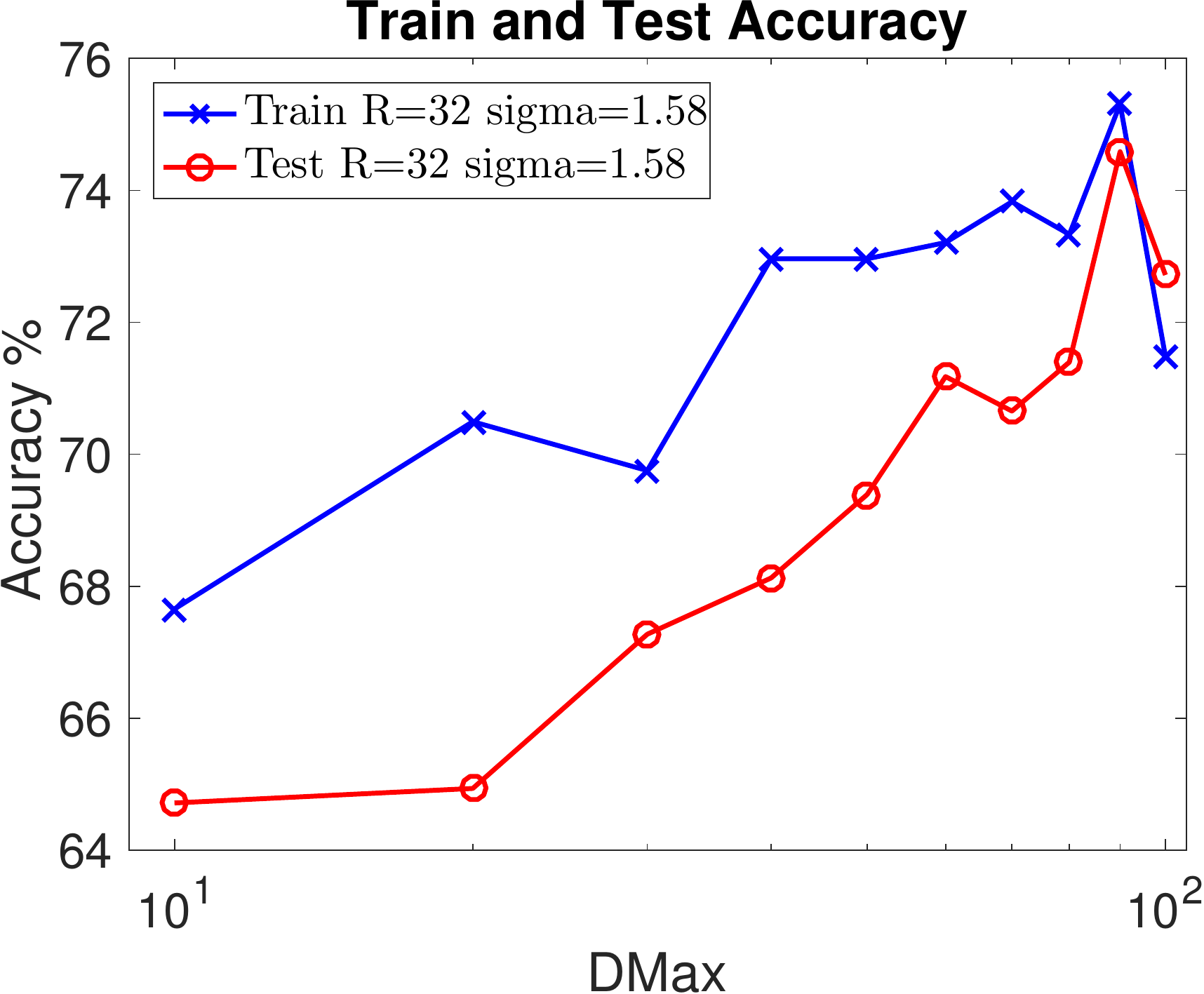}
      \caption{FordB}
      \label{fig:exptsA_varyingD_FordB}
    \end{subfigure}
   \begin{subfigure}[b]{0.22\textwidth}
      \includegraphics[width=\textwidth]{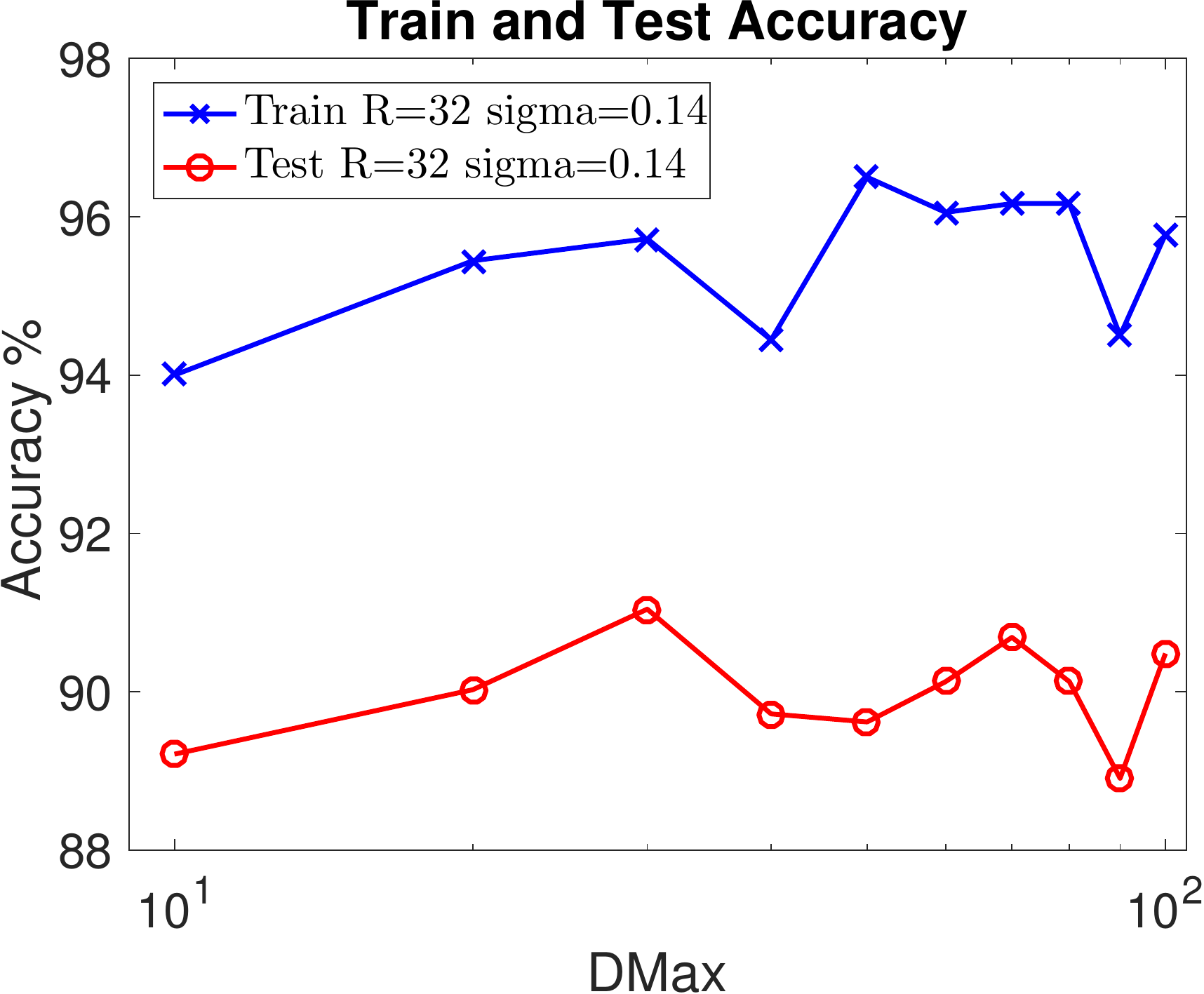}
      \caption{NIFECG}
      \label{fig:exptsA_varyingD_NonInvasiveFatalECG_Thorax2}
    \end{subfigure}
    \begin{subfigure}[b]{0.22\textwidth}
      \includegraphics[width=\textwidth]{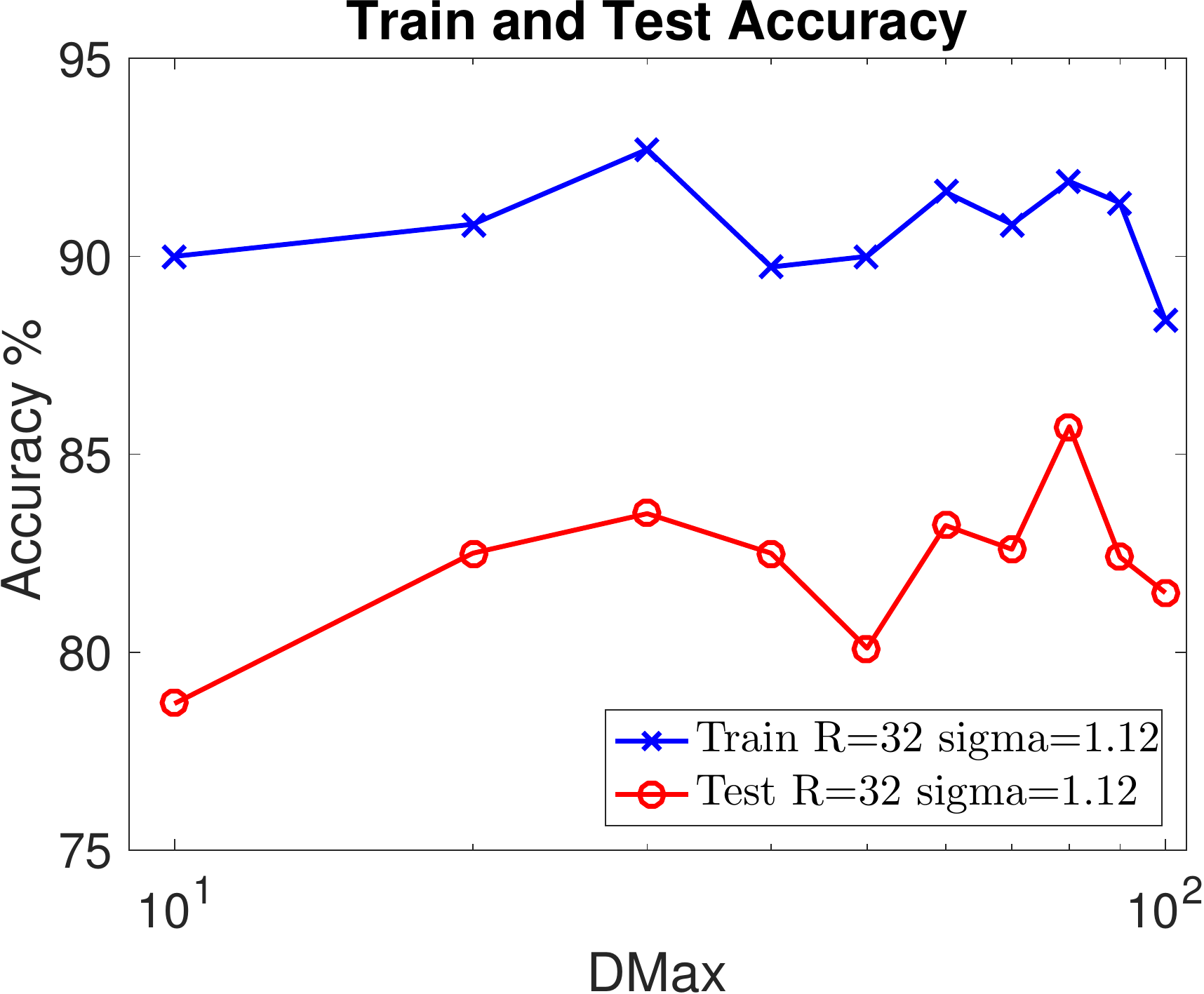}
      \caption{HO}
      \label{fig:exptsA_varyingD_HandOutlines}
    \end{subfigure}
\caption{Train (Blue) and test (Red) accuracy when varying $D$ with fixed $\sigma$ and $R$.}
\label{fig:exptsA_varyingD}
\vspace{-2mm}
\end{figure*}

\begin{table*}[t]
\centering
\caption{Classification performance comparison among methods using DTW or DTW-like kernels.} 
\label{tb:comp_allmethods_classification}
\scriptsize
\newcommand{\Bd}[1]{\textbf{#1}}
\newcommand{\Em}[1]{\emph{#1}}
\vspace{-3mm}
\begin{center}
    \begin{tabular}{ c cc cc cc cc cc cc } \hline
    \multicolumn{1}{c}{Classifier}
    & \multicolumn{2}{c}{RWS(LR)}
    & \multicolumn{2}{c}{RWS(SR)}
    & \multicolumn{2}{c}{1NN-DTW} 
    & \multicolumn{2}{c}{1NN-DTW\textsuperscript{opt}} 
    & \multicolumn{2}{c}{TGAK}
    & \multicolumn{2}{c}{DTWF} \\ \hline 
    \multicolumn{1}{c}{Dataset} 
	& Accu & Time & Accu & Time & Accu & Time & Accu & Time
	& Accu & Time & Accu & Time  \\ \hline
	Beef & \Bd{0.767} & 0.8 & 0.733 & \Bd{0.3} & 0.567 & 1.1 & 0.633 & \Bd{0.3} & 0.633 & 24.7 & 0.60 & 3.7 \\
	DPTW & \Bd{0.865} & 4.2 & 0.80 & \Bd{0.2} & 0.73 & 1.4  & 0.718 & 0.8 & 0.738 & 27.9 & 0.77 & 3.0 \\
	IPD  & \Bd{0.965} & 1.0  & 0.962 & \Bd{0.4} & 0.947 & 55.3 & 0.962 & 56.0 & 0.739 & 3.7  & 0.953 & 0.5 \\
	PPOAG & \Bd{0.868} & 0.3 & 0.859 & \Bd{0.2} & 0.776 & 2.0 & 0.785 & 1.2 & 0.854 & 118.2 & 0.829 & 9.7 \\
	MPOC & \Bd{0.773} & 6.8 & 0.708 & \Bd{0.8}  & 0.635 & 4.4 & 0.663 & 2.7  & 0.627 & 117.3 & 0.653 & 10.2 \\
	POC & \Bd{0.815} & 38.2 & 0.746 & \Bd{4.7} & 0.721 & 36.9 & 0.751 & 20.1 & 0.613 & 2373 & 0.79 & 202.7 \\
	LKA & \Bd{0.84} & 54.9 & 0.816 & \Bd{13.6} & 0.712 & 97.7 & 0.837 & 573.6 & 0.645 & 13484 & 0.80 & 1220 \\
	IWBS  & \Bd{0.641} & 132.4 & 0.619 & \Bd{8.8} & 0.504 & 70.9 & 0.589 & 36.1 & 0.126 & 2413 & 0.609 & 260.3 \\
	TWOP & \Bd{1} & 16.1  & 0.999 & \Bd{4.4} & 1 & 222.2 & 1 & 157.5 & 0.269 & 5690 & 1 & 481.7 \\
	ECG5T & \Bd{0.94} & 9.2 & 0.934 & \Bd{4.9} & 0.928 & 137.8 & 0.928 & 70.1 & 0.927 & 2822 & 0.933 & 278.3 \\
	CHCO & \Bd{0.777} & 189.1 & 0.683 & \Bd{48.1} & 0.627 & 160.8 & 0.627 & 57.0 & 0.545 & 3122 & 0.666 & 333.6 \\
	Wafer & \Bd{0.995} & 143.6 & 0.993 & \Bd{9.6} & 0.986 & 412.3 & 0.996 & 210.1 & 0.896 & 11172 & 0.994 & 980.5  \\ 
	MALLAT & \Bd{0.952} & 72.8 & 0.937 & \Bd{33.8} & 0.937 & 150.3 & 0.925 & 65.5 & 0.257 & 11882 & 0.915 & 988.4 \\
	FordB & 0.793 & 543.8 & 0.62 & \Bd{5.6} & 0.589 & 1476 & 0.581 & 577.6 & N/A & N/A & \Bd{0.83} & 8402 \\
	NIFECG & \Bd{0.936} & 140.2 & 0.903 & \Bd{20.0} & 0.845 & 2699 & 0.857 & 1432 & N/A & N/A & 0.906 & 32493 \\
	HO & 0.871 & 336.9 & 0.834 & \Bd{41.9} & 0.816 & 4883 & 0.807 & 5837 & N/A & N/A & \Bd{0.898} & 40407  \\ \hline
    \end{tabular}   
\end{center}
\vspace{-2mm}
\end{table*}

\begin{table*}[t]
\centering
\caption{Clustering performance comparison among different methods.} 
\label{tb:comp_allmethods_clustering}
\scriptsize
\newcommand{\Bd}[1]{\textbf{#1}}
\newcommand{\Em}[1]{{#1}}
\vspace{-3mm}
\begin{center}
    \begin{tabular}{ c cc cc cc cc cc }
    \hline
    \multicolumn{1}{c}{Clustering}
    & \multicolumn{2}{c}{RWS(LR)}
    & \multicolumn{2}{c}{RWS(SR)}
    & \multicolumn{2}{c}{KMeans-DTW}
    & \multicolumn{2}{c}{CLDS} 
    & \multicolumn{2}{c}{K-Shape} \\ \hline 
    \multicolumn{1}{c}{Dataset}
	& NMI & Time & NMI & Time & NMI & Time & NMI & Time & NMI & Time \\ \hline
    Beef & \Bd{0.29} & \Em{1.1} & \Em{0.27} & \Bd{1.0} & 0.25 & 377 & 0.24 & 61.3 & 0.22 & 1.8 \\
    DPTW & 0.52 & \Em{0.6} & \Bd{0.56} & \Bd{0.5} & \Em{0.55} & 182 & \Em{0.55} & 176.8 & 0.45 & 14.9 \\
    PPOAG & \Bd{0.56} & \Em{0.5} & \Em{0.54} & \Bd{0.2} & 0.44 & 105.4 & 0.55 & 191.1 & 0.27 & 40.2 \\
    IWBS & \Bd{0.43} & \Em{43.9} & 0.36 & \Bd{6.3} & 0.37 & 5676 & 0.38 & 1109 & \Bd{0.43} & 377.6 \\
    TWOP & 0.23 & \Em{11.2} & \Em{0.3} & \Bd{4.7} & 0.12 & 1960 & 0.02 & 1312 & \Bd{0.4} & 292.1 \\
    ECG5T & \Em{0.46} & \Em{25.7} & 0.4 & \Bd{7.0} & \Bd{0.48} & 2539 & 0.37 & 1308 & 0.35 & 360.7 \\
    MALLAT & \Bd{0.92} & \Em{48.2} & \Em{0.91} & \Bd{25.4} & 0.72 & 95218 & \Bd{0.92} & 2448 & 0.75 & 900.4 \\
    NIFECG & \Em{0.71} & \Em{346.1} & 0.68 & \Bd{43.7} & 0.63 & 101473 & 0.67 & 3442 & \Bd{0.73} & 5387  \\ \hline 
    \end{tabular}
\end{center}
\vspace{-4mm}
\end{table*}

\subsection{Comparing Feature Representations}
\label{Chapter:comparisons of feature representations}
\textbf{Baselines and Setup.} We compare our approach with two recently developed methods: 1) TSEigen \citep{hayashi2005embedding}: learn a low-rank feature representation for a similarity matrix computed using DTW distance through Singular Value Decomposition \citep{wu2015preconditioned, wu2017primme_svds}; 2) TSMC \citep{QiYi2016}: a recently proposed similarity preserving representation for DTW-based similarity matrix using matrix completion approach. We set $R = 32$ for all methods. We employ a linear SVM implemented in LIBLINEAR \citep{fan2008liblinear} since it can separate the effectiveness of the feature representation from the power of the nonlinear learning solvers. 

\textbf{Results.} Table \ref{tb:comp_rf_mc_eigen} clearly demonstrates the significant advantages of our approach compared to other representations in terms of both classification accuracy and computational time. Indeed, TSMC improves the computational efficiency compared to TSEigen without compromising large loss of the accuracy as claimed in \citep{QiYi2016}. However, RWS is corroborated to achieve both higher accuracy and faster train and testing time compared to TSMC and TSEigen. The improved accuracy of RWS suggests that a truly p.d. time series kernel admits better feature representations than those obtained from a similarity or kernel (not p.d.) matrix. In addition, improved computational time illustrates the effectiveness of using random series to approximate the exact kernel.

\subsection{Comparing Time-Series Classification}
\textbf{Baselines.} We now compare our method with other state-of-the-art time series classification methods that also take advantage of DTW distance or employ DTW-like kernels: 1) 1NN-DTW: use window size $min(L/10, 40)$; 2) 1NN-DTW\textsuperscript{opt}: use optimal window size using leave-one-out cross validation from test data in \citep{UCRArchive} 
3) DTWF \citep{kate2016using}: a recently proposed method that combines DTW without and with constraints and SAX \citep{lin2007experiencing} as features; 4) TGAK \citep{cuturi2011fast}: a fast triangular global alignment kernel for time-series; 5) RWS(LR): RWS with large rank that achieves the best accuracy with more computational time; 6) RWS(SR): small rank that obtains comparable accuracy in less time. We conduct grid search for important parameters in each method suggested in \citep{kate2016using, cuturi2011fast}. 

\textbf{Results.} Table \ref{tb:comp_allmethods_classification} corroborates that RWS consistently outperforms or matches other state-of-the-art methods in terms of testing accuracy while requiring significantly less computational time. First, RWS(SR) can achieve  better or similar performance compared to 1NN-DTW and 1NN-DTW \textsuperscript{opt} for all datasets. This is a strong sign that our learned feature representation is very effective, since, using it, even a linear SVM can beat the well-recognized benchmark. Meanwhile, the clear computational advantages of RWS over 1NN-DTW can be observed when the number or the length of time series samples become large. This is not surprising since RWS reduces both number and length of time series from quadratic complexity to linear complexity. Second, RWS is much better than another family of time series kernels represented by TGAK, which probably indicates that considering the soft-minimum of all alignment distances does not capture well hidden patterns of time series. Third, DTWF shows significant performance difference compared to 1NN-DTW, which is consistent with the reported results in \citep{kate2016using}. However, compared to DTWF, RWS(LR) can still show clear advantages in accuracy among 11 cases out of the total 16 datasets while achieving one or two orders of magnitude speedup. More importantly, RWS can support a trade-off between the accuracy and run-time. This feature is highly desirable in real applications that may have a variety of priorities and constraints.

\subsection{Comparing Time-Series Clustering}
\textbf{Baselines.} We compare our method against several time-series clustering baselines: 1) KMeans-DTW \citep{petitjean2011global,paparrizos2015k}: accelerate computation with lower bounding approach $LB_{Keogh}$ \citep{keogh2002exact}; 2) CLDS \citep{li2011time}: learns a feature representation with $R$ hidden variables through complex-valued linear dynamical systems; 3) K-Shape \citep{paparrizos2015k}: recently proposed clustering method demonstrated to outperform state-of-the-art clustering approaches in accuracy and computational time; 4) RWS(LR); 5) RWS(SR). We combine our learned feature representation with the classic KMeans algorithm \citep{hartigan1979algorithm}. We employ a commonly used clustering metric, the normalized mutual information (NMI scaling between 0 and 1) to measure the performance, where higher value indicates better accuracy. 

\textbf{Results.} Table \ref{tb:comp_allmethods_clustering} shows that RWS provides similar or better performance and typically is substantially faster than KMeans-DTW when the number or the length of time-series become large. In addition, RWS can consistently outperform CLDS in terms of both accuracy and runtime. Interestingly, even compared to the state-of-the-art method K-Shape, RWS can still yield a clear advantage in terms of accuracy; RWS yields 5 wins, 1 even, and 2 loses over K-Shape for 8 datasets. Besides its accuracy, the better computational efficiency of RWS over K-Shape is also corroborated.

 \section{Conclusions and Future Work}
 In this work, we have studied an effective and scalable time-series (p.d.) kernel for large-scale time series problems based on RWS approximation, and the feature embedding generated by the technique is generally applicable to most of learning problems. There are several interesting directions of future work, including: i) studying the effects of different random time-series distribution $p(\omega)$ and ii) exploring more elastic dissimilarity measure between time series such as CID and DTDC.

\clearpage
\newpage
\bibliographystyle{plainnat}
\bibliography{RWS}

\clearpage
\section{Appendix}

\subsection{Proof of Lemma \ref{lemma_lipschitz_f}}
\label{sec:proof_lemma}

\begin{lemma}[Lipschitz parameter]\label{lemma_lipschitz_f}
Let $\tau(A\omega,Bx)$ be Lipschitz-continuous w.r.t. $x$ with parameter $\beta(\omega)$. Then given $\|x_1-x_2\|\leq \epsilon$ and $\|y_1-y_2\|\leq\epsilon$, we have
$$
f(x,y)=s_{R}(x,y)-k(x,y)
$$
satisfies
$$
|f(x,y)-f(x',y')|\leq 2\gamma\sigma_{R}\epsilon t.
$$
with probability at least $1-1/t^2$, where $\sigma_{\tau}^2=Var[\beta(\omega)]/R$ is the variance of the Lipschitz parameter averaged over $R$ samples.
\end{lemma}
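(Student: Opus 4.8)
The plan is to reduce the statement to an elementary second-moment (Chebyshev) bound on an average of i.i.d.\ mean-zero random variables; the preparatory work is to show that each summand $\phi_\omega(x)\phi_\omega(y)$ of $s_R$ is a \emph{deterministically} Lipschitz function of the pair $(x,y)$, with a constant controlled by the envelope $\gamma$ and the Lipschitz parameter $\beta(\omega)$.

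First I would show that $\phi_\omega$ is $\beta(\omega)$-Lipschitz in its series argument. For each fixed alignment $(A,B)$, the map $x\mapsto\tau(A\omega,Bx)$ is $\beta(\omega)$-Lipschitz by hypothesis, and $\phi_\omega(x)=\min_{(A,B)\in\A(\omega,x)}\tau(A\omega,Bx)$ is the pointwise minimum of this family. Since the DTW alignment set $\A(\omega,x)$ depends on $x$ only through its length, two series of equal length induce the same family, and the usual argument---evaluate the minimizer for one series at the other series and symmetrize---yields $|\phi_\omega(x)-\phi_\omega(x')|\le\beta(\omega)\|x-x'\|$. Together with the uniform bound $|\phi_\omega(\cdot)|\le\gamma$ (inherited from $|\tau|\le\gamma$) and the identity
$$
\phi_\omega(x)\phi_\omega(y)-\phi_\omega(x')\phi_\omega(y')=\phi_\omega(x)\bigl(\phi_\omega(y)-\phi_\omega(y')\bigr)+\phi_\omega(y')\bigl(\phi_\omega(x)-\phi_\omega(x')\bigr),
$$
the hypotheses $\|x-x'\|\le\epsilon$, $\|y-y'\|\le\epsilon$ give $\bigl|\phi_\omega(x)\phi_\omega(y)-\phi_\omega(x')\phi_\omega(y')\bigr|\le 2\gamma\beta(\omega)\epsilon$.

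Next I would observe that $f(x,y)-f(x',y')=\bigl(s_R(x,y)-s_R(x',y')\bigr)-\bigl(k(x,y)-k(x',y')\bigr)=\tfrac1R\sum_{i=1}^R Z_i$, where $Z_i$ is the centering of $\phi_{\omega_i}(x)\phi_{\omega_i}(y)-\phi_{\omega_i}(x')\phi_{\omega_i}(y')$ by its expectation; the $Z_i$ are i.i.d.\ with mean zero precisely because $\mathbb{E}[s_R]=k$. The magnitude bound from Step~1 controls the per-sample variance by $\mathrm{Var}[Z_i]\le 4\gamma^2\epsilon^2\,\mathrm{Var}[\beta(\omega)]$, so $\mathrm{Var}\bigl[f(x,y)-f(x',y')\bigr]=\mathrm{Var}[Z_1]/R\le 4\gamma^2\sigma_\tau^2\epsilon^2$ with $\sigma_\tau^2=\mathrm{Var}[\beta(\omega)]/R$. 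Chebyshev's inequality applied to the mean-zero quantity $f(x,y)-f(x',y')$ at threshold $2\gamma\sigma_\tau\epsilon t$ then gives a failure probability at most $1/t^2$, which is the assertion.

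The hard part will be Step~1: one must argue carefully that taking the minimum over DTW alignment paths does not inflate the Lipschitz constant---this rests on the alignment set being a function of the series lengths only---and one must track the per-sample variance (identifying the relevant second moment of $\beta$ with $\mathrm{Var}[\beta(\omega)]$) so that the $\tfrac1R$ averaging reproduces exactly the normalization $\sigma_\tau^2=\mathrm{Var}[\beta(\omega)]/R$ used in the proof of Theorem~\ref{theorem_converge}. It is worth noting that Chebyshev, rather than a Hoeffding- or Bernstein-type bound, is deliberately the right tool here: it is the standard deviation $\sigma_\tau$ of the Lipschitz parameter---not the crude envelope $\gamma$---that must appear in front of $\epsilon$ in this ``net-refinement'' step for the two error terms in \eqref{bound_prob} to balance correctly.
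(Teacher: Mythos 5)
Your proposal is correct and follows essentially the same route as the paper: establish that $\phi_\omega$ is $\beta(\omega)$-Lipschitz by evaluating the optimal alignment of one series at the other (over same-length series, so the alignment sets coincide), propagate this to the product $\phi_\omega(x)\phi_\omega(y)$ using the envelope $|\phi_\omega|\le\gamma$, and finish with Chebyshev. The one place you genuinely diverge is the last step, and your version is the tighter one. The paper bounds $|s_R(x_2,y_2)-s_R(x_1,y_1)|\le 2\gamma\tbeta\epsilon$ and $|k(x_2,y_2)-k(x_1,y_1)|\le 2\gamma\bbeta\epsilon$ \emph{separately} (with $\tbeta$ the empirical mean of $\beta(\omega_i)$ and $\bbeta=E[\beta]$) and then applies Chebyshev as if the difference $f(x_1,y_1)-f(x_2,y_2)$ were controlled by $2\gamma\epsilon\,|\tbeta-\bbeta|$; that implication does not follow from two one-sided absolute-value bounds (the triangle inequality gives the sum $\tbeta+\bbeta$, not the difference), so the paper's derivation has a small non sequitur there. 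Your approach---centering each per-sample increment $\phi_{\omega_i}(x)\phi_{\omega_i}(y)-\phi_{\omega_i}(x')\phi_{\omega_i}(y')$ by its expectation and applying Chebyshev to the resulting i.i.d.\ mean-zero average---is logically clean, at the cost that the natural per-sample second-moment bound is $4\gamma^2\epsilon^2\,E[\beta(\omega)^2]$ rather than $4\gamma^2\epsilon^2\,\mathrm{Var}[\beta(\omega)]$, which you correctly flag; the same $E[\beta^2]$-versus-$\mathrm{Var}[\beta]$ slippage is present but unacknowledged in the paper. Since the lemma is only used inside Theorem~\ref{theorem_converge} to balance the two terms of \eqref{bound_prob}, replacing $\mathrm{Var}[\beta]$ by $E[\beta^2]$ in the definition of $\sigma_\tau^2$ would change only constants, so your route actually yields a sound proof of a marginally weaker (and honestly stated) version of the lemma.
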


\begin{proof}
Consider an arbitrary pair of series $(x_1,x_2)$ $\in\X$ of the same length. From Lipschitz-continuity of $\tau(.)$, we have
$$
\tau(A\omega,Bx_2)=\tau(A\omega,Bx_1)+\beta(\omega)\Delta
$$
for some $|\Delta| \leq \|x_2-x_1\|$. Then let $(A_1,B_1)$ and $(A_2,B_2)$ be the minimizers of $\tau(A\omega,Bx_1)$ and $\tau(A\omega,Bx_2)$ respectively, we have
$$
 \tau(A_2\omega,B_2x_2) \leq \tau(A_1\omega,B_1x_2) \leq \tau(A_1\omega, B_1x_1) + \beta(\omega)|\Delta|.
$$
and
$$
 \tau(A_1\omega,B_1x_1) \leq \tau(A_2\omega,B_2x_1) \leq \tau(A_2\omega,B_2x_2) + \beta(\omega)|\Delta|.
$$
Therefore, 
$
\phi_{\omega}(x_2)=\phi_{\omega}(x_1) + \beta(\omega)\Delta
$
and
\begin{align*}
s_{R}(x_2,y_2)&=\frac{1}{R}\sum_{i=1}^R \phi_{\omega_i}(x_2)\phi_{\omega_i}(y_2)\\
&\leq\frac{1}{R}\sum_{i=1}^R \phi_{\omega_i}(x_1)\phi_{\omega_i}(y_2) + r\tbeta\epsilon \leq s_R(x_1,y_1) + 2r\tbeta \epsilon.
\end{align*}
where $\tbeta=\frac{1}{R}\sum_{i=1}^R\beta(\omega_i)$. With the similar argument we have
$
|s_R(x_2,y_2)-s_R(x_1,y_1)|\leq 2\gamma\tbeta\epsilon
$
and
$
|k(x_2,y_2)-k(x_1,y_1)|\leq 2\gamma\bbeta\epsilon.
$
Then since $E[\tbeta]=\bbeta$ and Let $Var[\tbeta]=\sigma_{\tau}^2$. By Chebyshev inequality, we have 
$$
P\left[ |f(x_1,y_1)-f(x_2,y_2)|  \geq 2\gamma\sigma_{\tau}\epsilon*t \right] \leq \frac{1}{t^2}
$$
\end{proof}

\subsection{Experimental settings and parameters for RWS}
\label{sec:Experimental settings and parameters for RWS}
As shown in Table \ref{tb:info of datasets}, we choose 16 datasets that come from various applications, including ECG, sensor, image, spectro, simulated and device, and have various numbers of classes, varying numbers of time series, and a wide range of lengths of time series, as shown in Table \ref{tb:info of datasets}.  
For all experiments, we generate random document from uniform distribution with mean centered in Word2Vec embedding space since we observe the best performance with this setting. We perform 10-fold cross-validation to search for best parameters for $\sigma$, and $DMax$ as well as parameter $C$ for LIBLINEAR on training set for each dataset. We simply fix the $DMin=1$, and vary $DMax$ in the range of [10 20 30 40 50 60 70 80 90 100], $\sigma$ in the range of [1e-4 1e-3 3e-3 1e-2 3e-2 0.10 0.14 0.19 0.28 0.39 0.56 0.79 1.12 1.58 2.23 3.16 4.46 6.30 8.91 10 31.62 1e2 3e2 1e3 1e4], and $C$ in the range of [1e-5 1e-4 1e-3 1e-2 1e-1 1 1e1 1e2 1e3 1e4 1e5] respectively in all experiments. 
All computations were carried out on a DELL dual socket system with Intel Xeon processors 272 at 2.93GHz for a total of 16 cores and 250 GB of memory, running the SUSE Linux operating system.

\begin{table}[htbp]
\centering
\small
\caption{Properties of the datasets: Beef, ChlorineConcentration (CHCO), DistalPhalanxTW (DPTW), ECG5000 (ECG5T), FordB, HandOutlines (HO), InsectWingbeatSound (IWBS), ItalyPowerDemand (IPD), LargeKitchenAppliances (LKA), MALLAT, MiddlePhalanxOutlineCorrect (MPOC), NonInvasiveFatalECG\_Thorax2 (NIFECG), PhalangesOutlinesCorrect (POC), ProximalPhalanxOutlineAgeGroup (PPOAG), Two\_Patterns (TWOP), and Wafer. We define $C$:Classes, $N$:Train, $M$:Test, and $L$:length.} 
\vspace{0mm}
\label{tb:info of datasets_sup}
\begin{center}
    \begin{tabular}{ cccccc}
    \hline
    Name    & $C$ & $N$ & $M$ & $L$ & App\\ \hline 
    Beef    & 5 & 30 & 30 & 470 & Spectro    \\ 
    DPTW    & 6 & 400 & 139	& 80 & Image \\ 
    IPD     & 2 & 67 & 1,029 & 24 & Sensor \\ 
    PPOAG   & 3 & 400 & 205	& 80 & Image \\ 
    MPOC    & 2 & 600 & 291	& 80 & Image \\ 
    POC     & 2 & 1,800 & 858 & 80  & Image \\ 
    LKA     & 3 & 375 & 375	& 720 & Device \\ 
    IWBS    & 11 & 220 & 1,980 & 256 & Sensor \\ 
    TWOP    & 4 & 1,000 & 4,000 & 128 & Simulated \\ 
    ECG5T   & 5 & 500 & 4,500 & 140 & ECG \\ 
    CHCO    & 3 & 467 &	3,840 & 166 & Simulated \\ 
    Wafer 	& 2 & 1,000 & 6,174 & 152 & Sensor \\ 
    MALLAT  & 8 & 55 & 2,345 & 1,024 & Simulated \\ 
    FordB   & 2 & 3636 & 810 & 500 & Sensor \\ 
    NIFECG  & 42 & 1,800 & 1,965 & 750 & ECG \\ 
    HO      & 2 & 370 & 1,000 & 2,709 & Image \\ \hline
    \end{tabular}
\end{center}
\vspace{-4mm}
\end{table}

\subsection{More Results on Effects of $\sigma$, $R$ and $D$ on Random Features}
\label{sec:More Results on various effects of random features}
To fully investigate the behavior of the WME method, we study the effect of the kernel parameter $\sigma$, the $R$ number of random documents and the $D$ length of random documents on training and testing accuracy for  all 16 datasets. Clearly, the training and testing accuracy can converge rapidly to the exact kernels when varying R from 4 to 512, which confirms our analysis in Theory 1. When varying D from 10 to 100, we can see that in the majority of cases $DMax = [10 \ 40]$ generally yields a near-peak performance except FordB.

\begin{figure*}[!htb]
\centering
  \begin{subfigure}[b]{0.24\textwidth}
      \includegraphics[width=\textwidth]{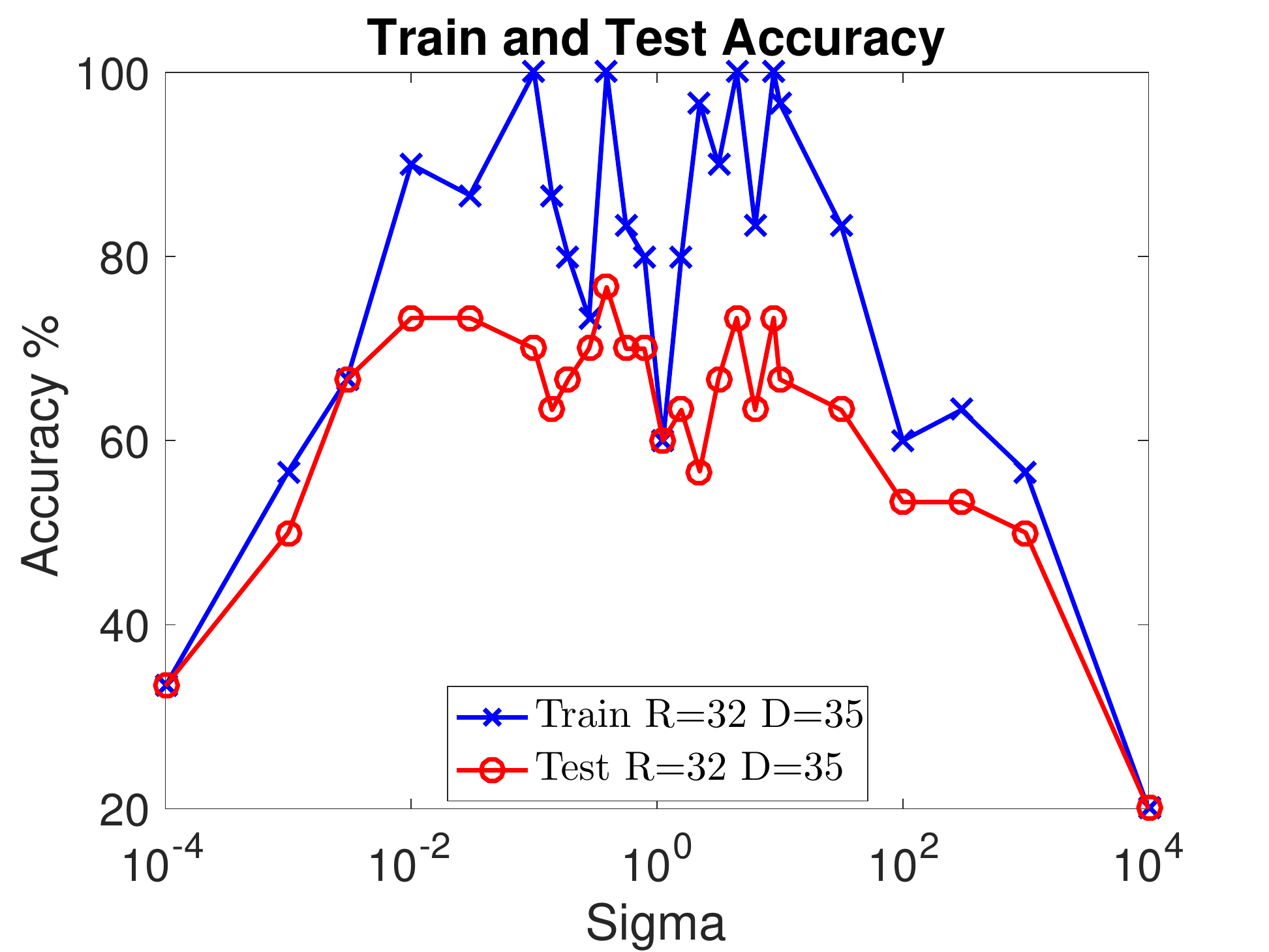}
      \caption{Beef}
      \label{fig:exptsA_varyingS_Beef}
    \end{subfigure}
  \begin{subfigure}[b]{0.24\textwidth}
      \includegraphics[width=\textwidth]{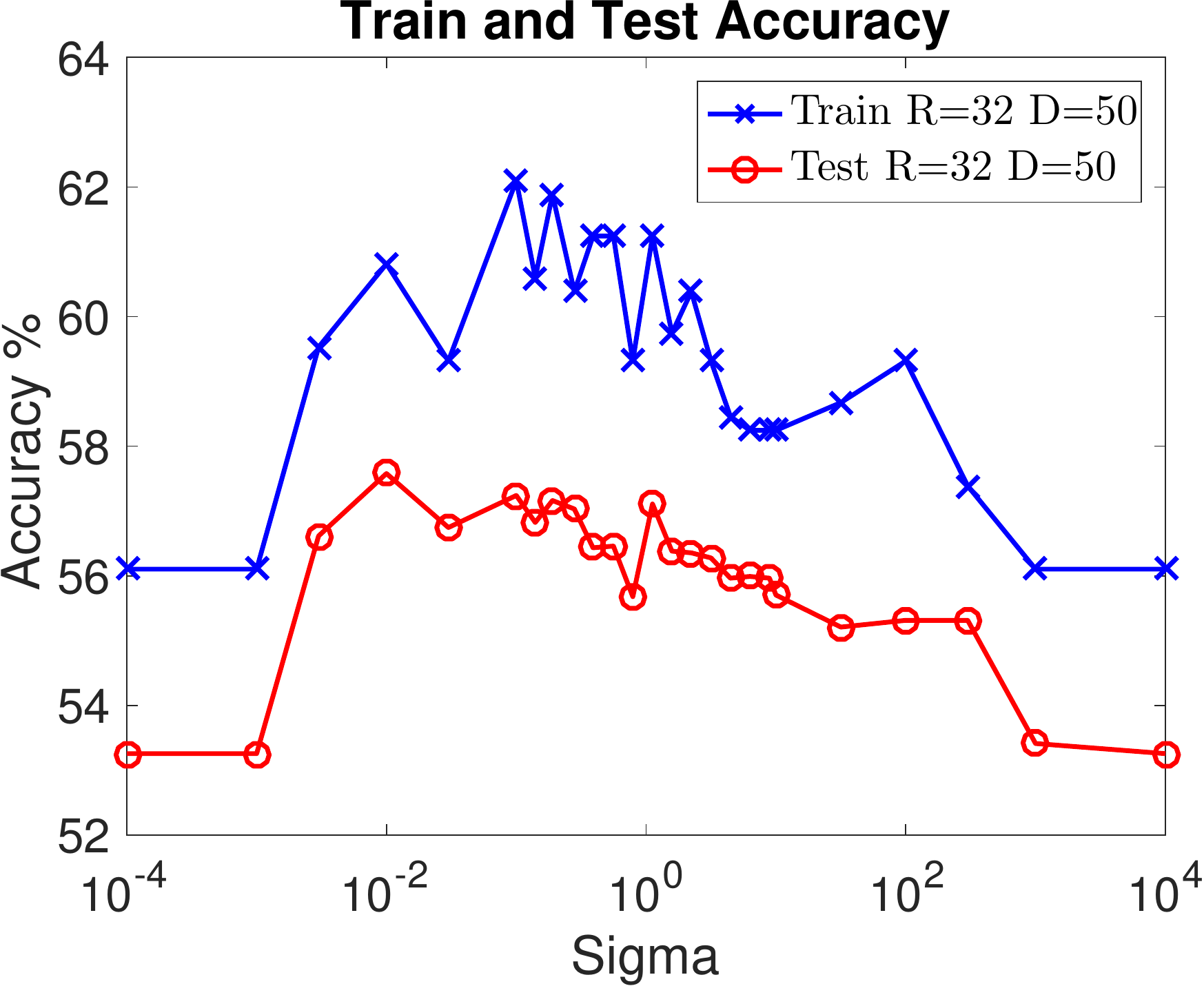}
      \caption{CHCO}
      \label{fig:exptsA_varyingS_ChlorineConcentration}
    \end{subfigure}
  \begin{subfigure}[b]{0.24\textwidth}
      \includegraphics[width=\textwidth]{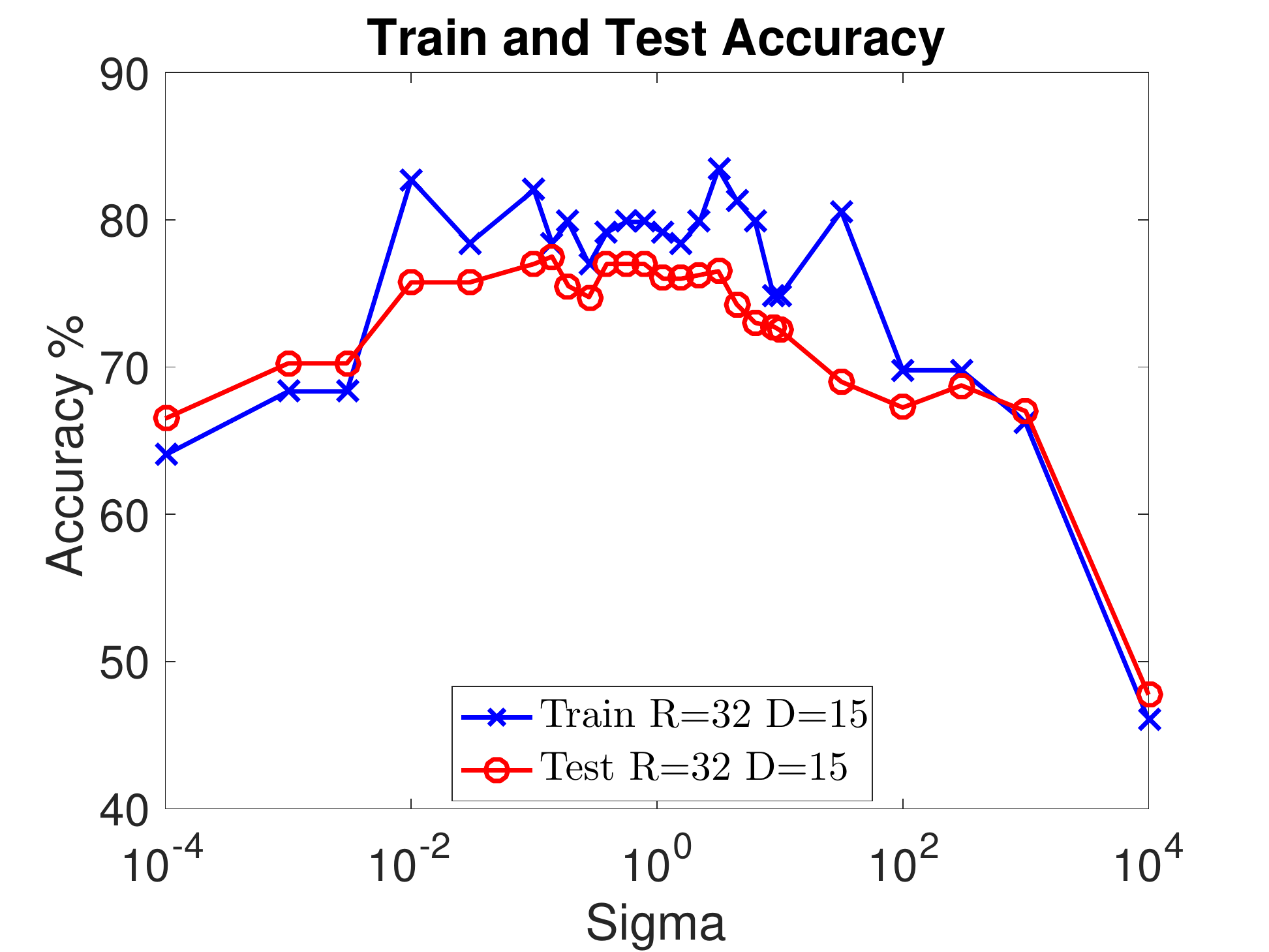}
      \caption{DPTW}
      \label{fig:exptsA_varyingS_DistalPhalanxTW}
    \end{subfigure}
  \begin{subfigure}[b]{0.24\textwidth}
      \includegraphics[width=\textwidth]{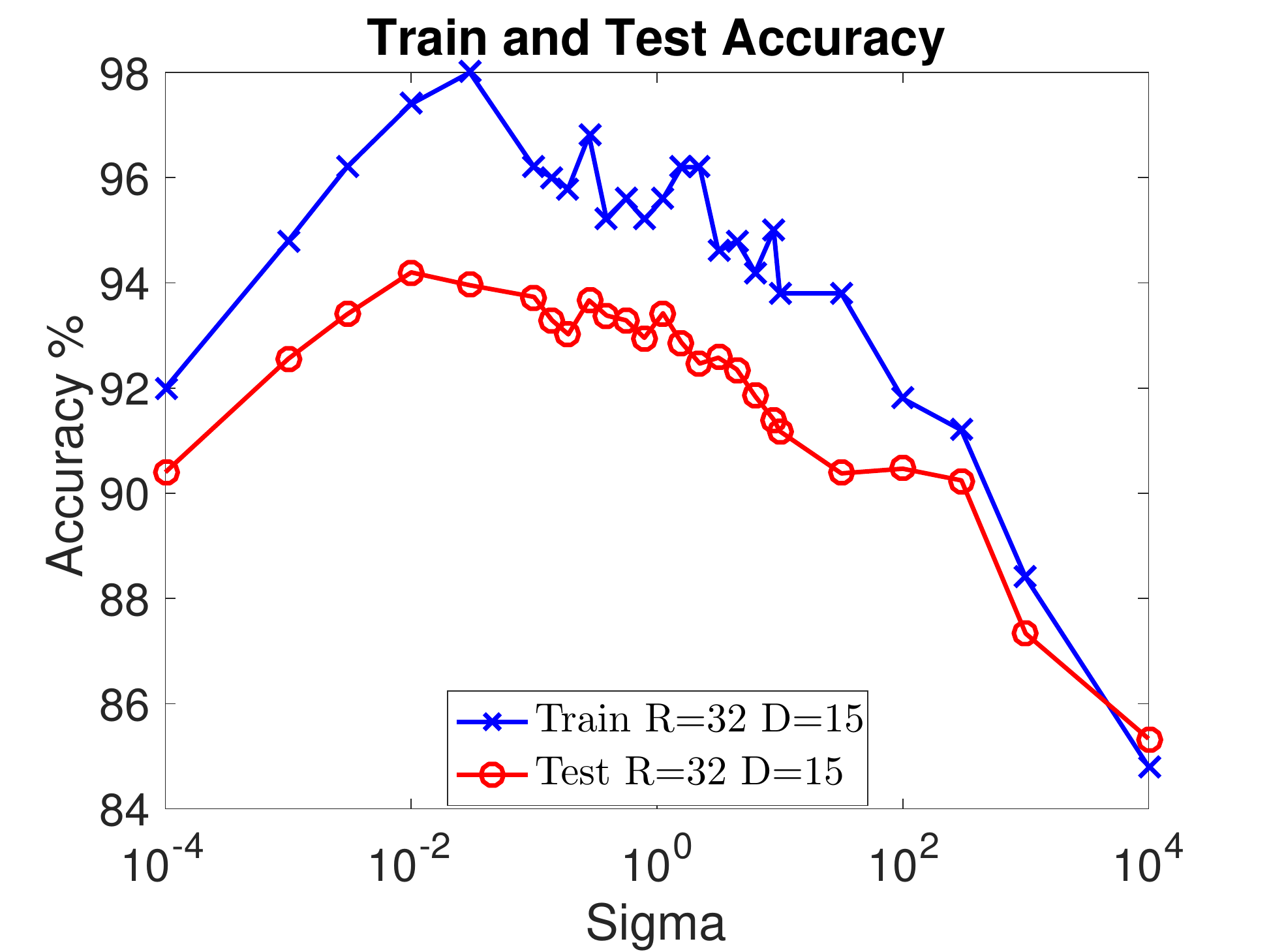}
      \caption{ECG5T}
      \label{fig:exptsA_varyingS_ECG5000}
    \end{subfigure}
  \begin{subfigure}[b]{0.24\textwidth}
      \includegraphics[width=\textwidth]{Graphs/ExptsA_varyingS/FordB_Accu_VaryingS-eps-converted-to.pdf}
      \caption{FordB}
      \label{fig:exptsA_varyingS_FordB}
    \end{subfigure}
  \begin{subfigure}[b]{0.24\textwidth}
      \includegraphics[width=\textwidth]{Graphs/ExptsA_varyingS/HandOutlines_Accu_VaryingS-eps-converted-to.pdf}
      \caption{HO}
      \label{fig:exptsA_varyingS_HandOutlines}
    \end{subfigure}
    \begin{subfigure}[b]{0.24\textwidth}
      \includegraphics[width=\textwidth]{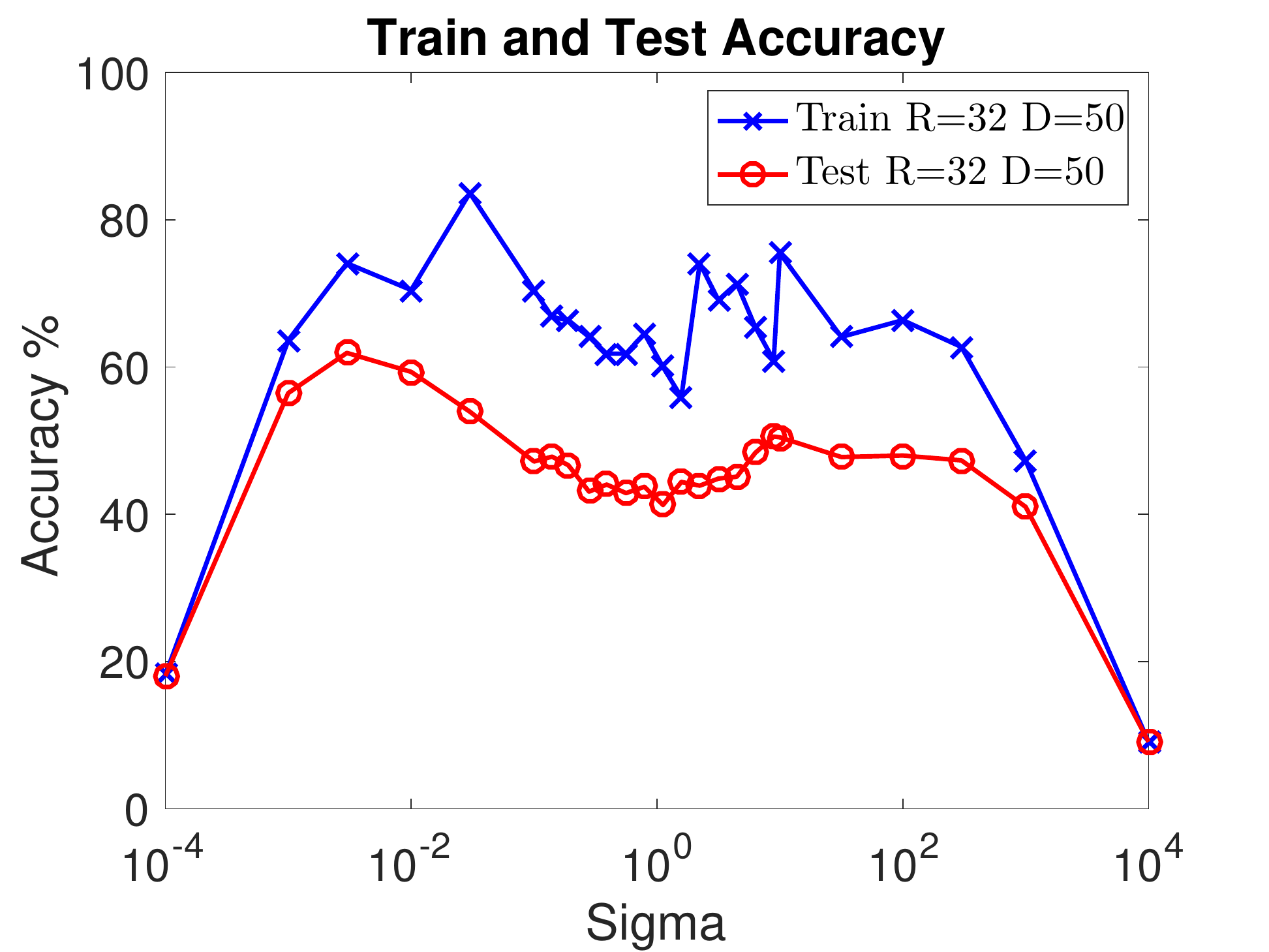}
      \caption{IWBS}
      \label{fig:exptsA_varyingS_InsectWingbeatSound}
    \end{subfigure}
  \begin{subfigure}[b]{0.24\textwidth}
      \includegraphics[width=\textwidth]{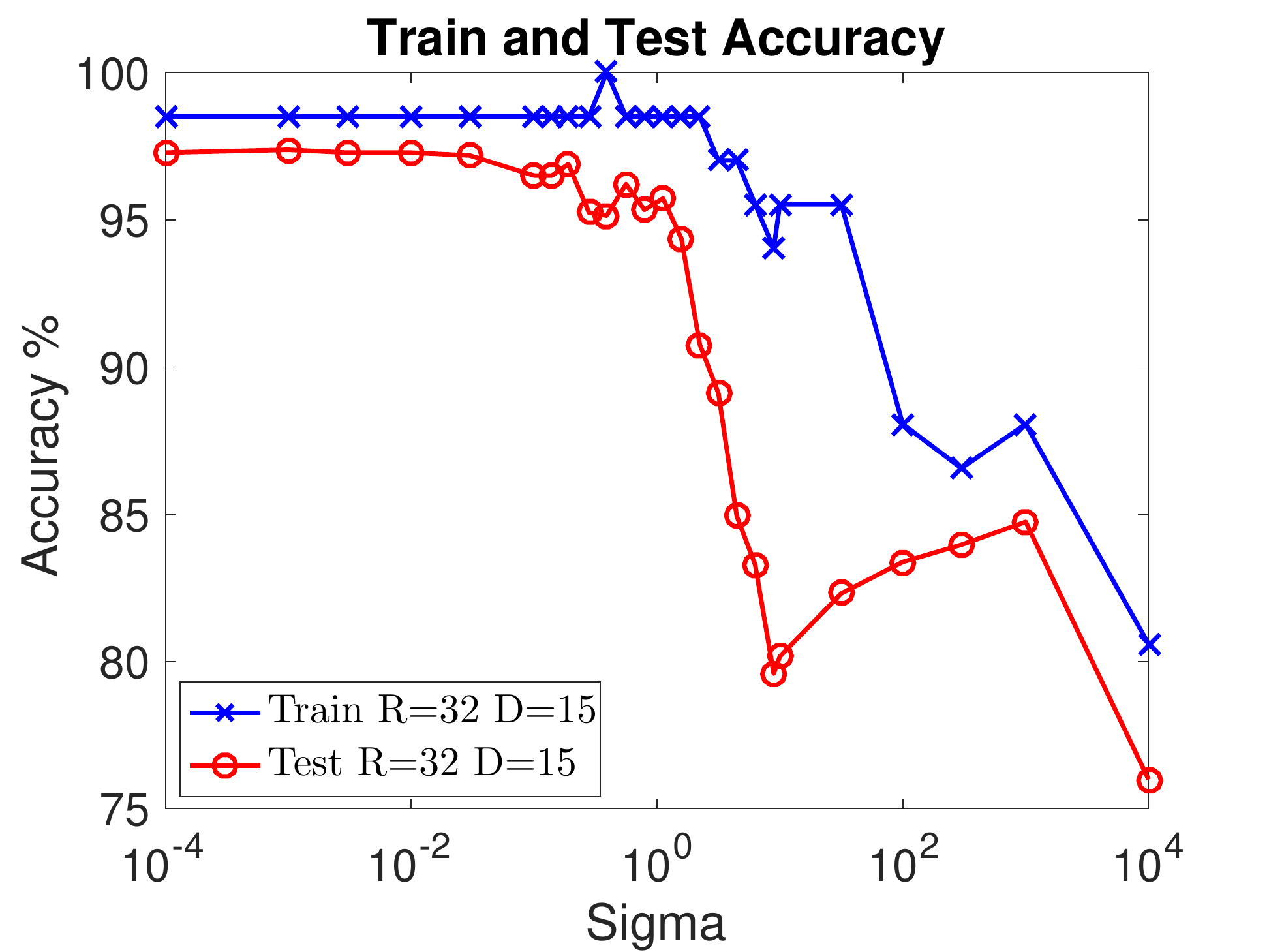}
      \caption{IPD}
      \label{fig:exptsA_varyingS_ItalyPowerDemand}
    \end{subfigure}
  \begin{subfigure}[b]{0.24\textwidth}
      \includegraphics[width=\textwidth]{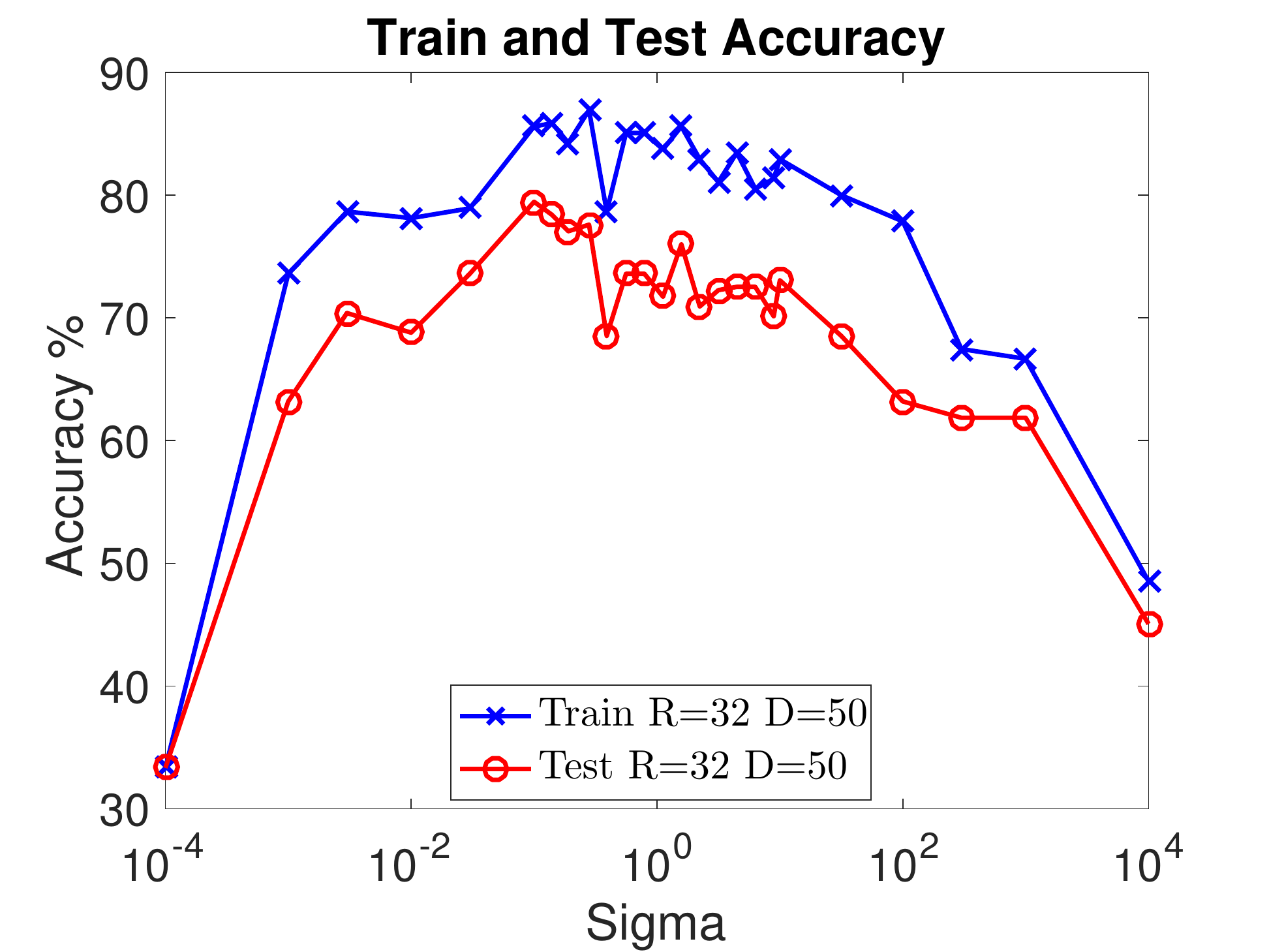}
      \caption{LKA}
      \label{fig:exptsA_varyingS_LargeKitchenAppliances}
    \end{subfigure}
  \begin{subfigure}[b]{0.24\textwidth}
      \includegraphics[width=\textwidth]{Graphs/ExptsA_varyingS/MALLAT_Accu_VaryingS-eps-converted-to.pdf}
      \caption{MALLAT}
      \label{fig:exptsA_varyingS_MALLAT}
    \end{subfigure}
  \begin{subfigure}[b]{0.24\textwidth}
      \includegraphics[width=\textwidth]{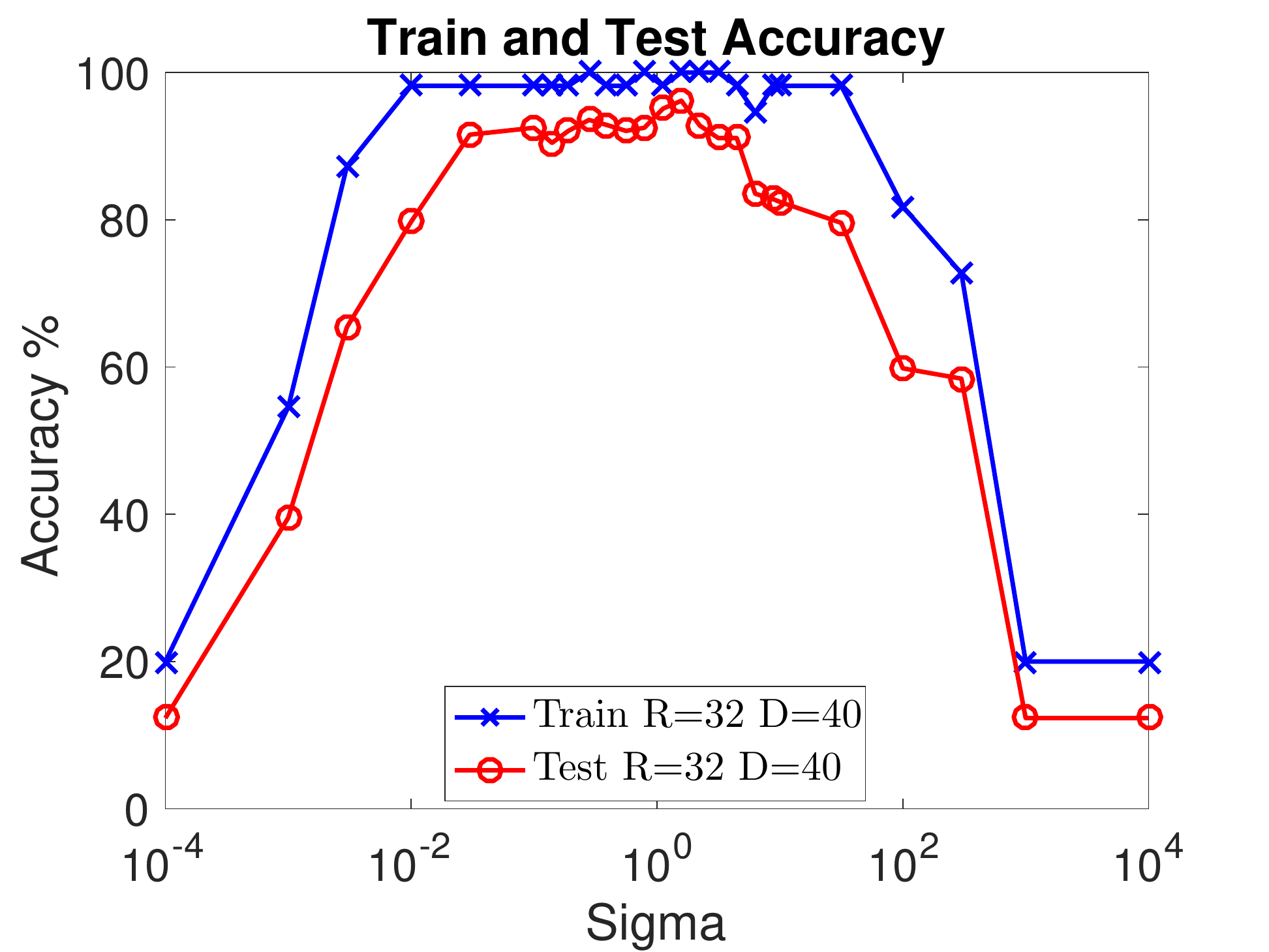}
      \caption{MPOC}
      \label{fig:exptsA_varyingS_MiddlePhalanxOutlineCorrect}
    \end{subfigure}
  \begin{subfigure}[b]{0.24\textwidth}
      \includegraphics[width=\textwidth]{Graphs/ExptsA_varyingS/NonInvasiveFatalECG_Thorax2_Accu_VaryingS-eps-converted-to.pdf}
      \caption{NIFECG}
      \label{fig:exptsA_varyingS_NonInvasiveFatalECG_Thorax2}
    \end{subfigure}
  \begin{subfigure}[b]{0.24\textwidth}
      \includegraphics[width=\textwidth]{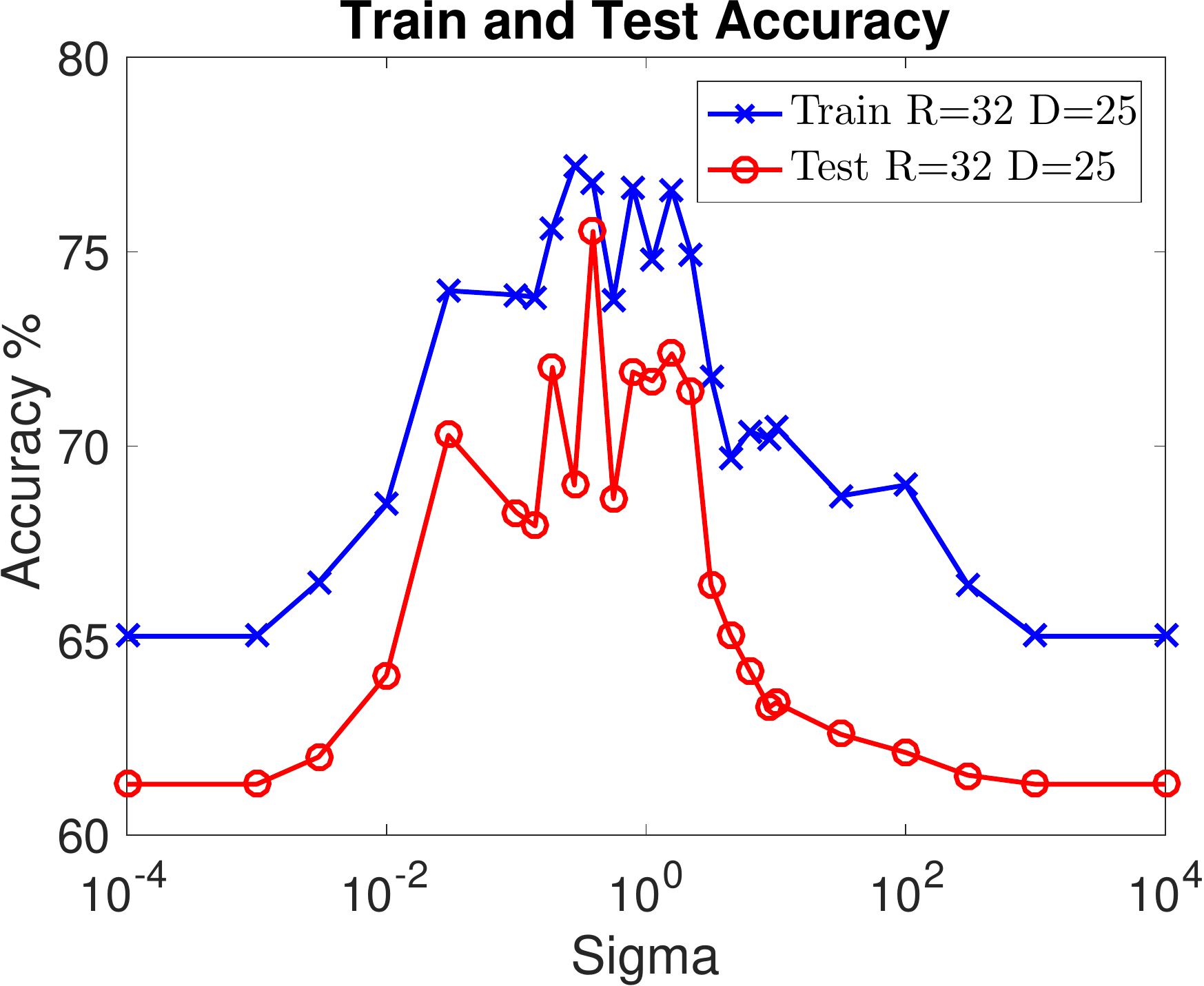}
      \caption{POC}
      \label{fig:exptsA_varyingS_PhalangesOutlinesCorrect}
    \end{subfigure}
  \begin{subfigure}[b]{0.24\textwidth}
      \includegraphics[width=\textwidth]{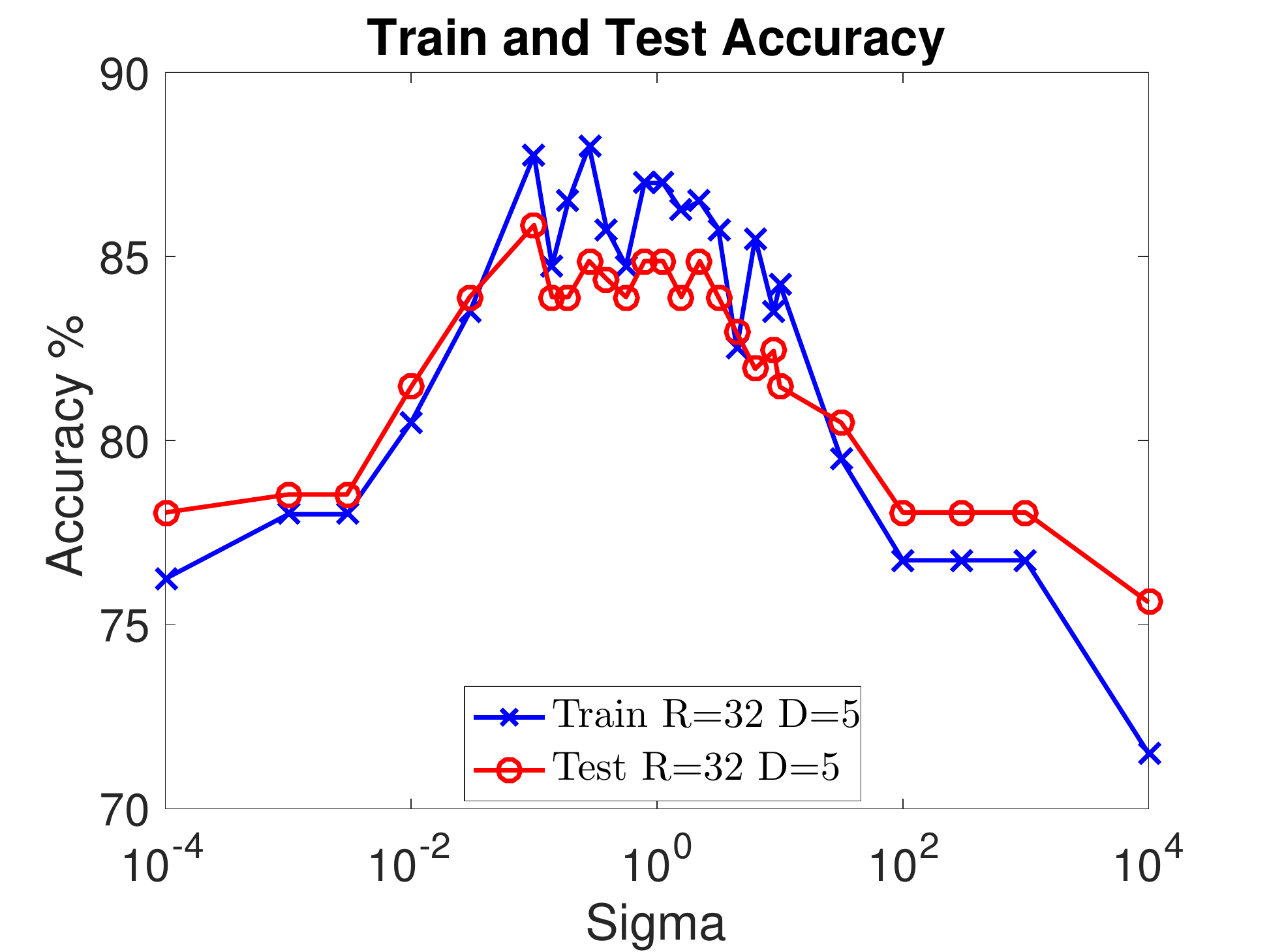}
      \caption{PPOAG}
      \label{fig:exptsA_varyingS_ProximalPhalanxOutlineAgeGroup}
    \end{subfigure}
  \begin{subfigure}[b]{0.24\textwidth}
      \includegraphics[width=\textwidth]{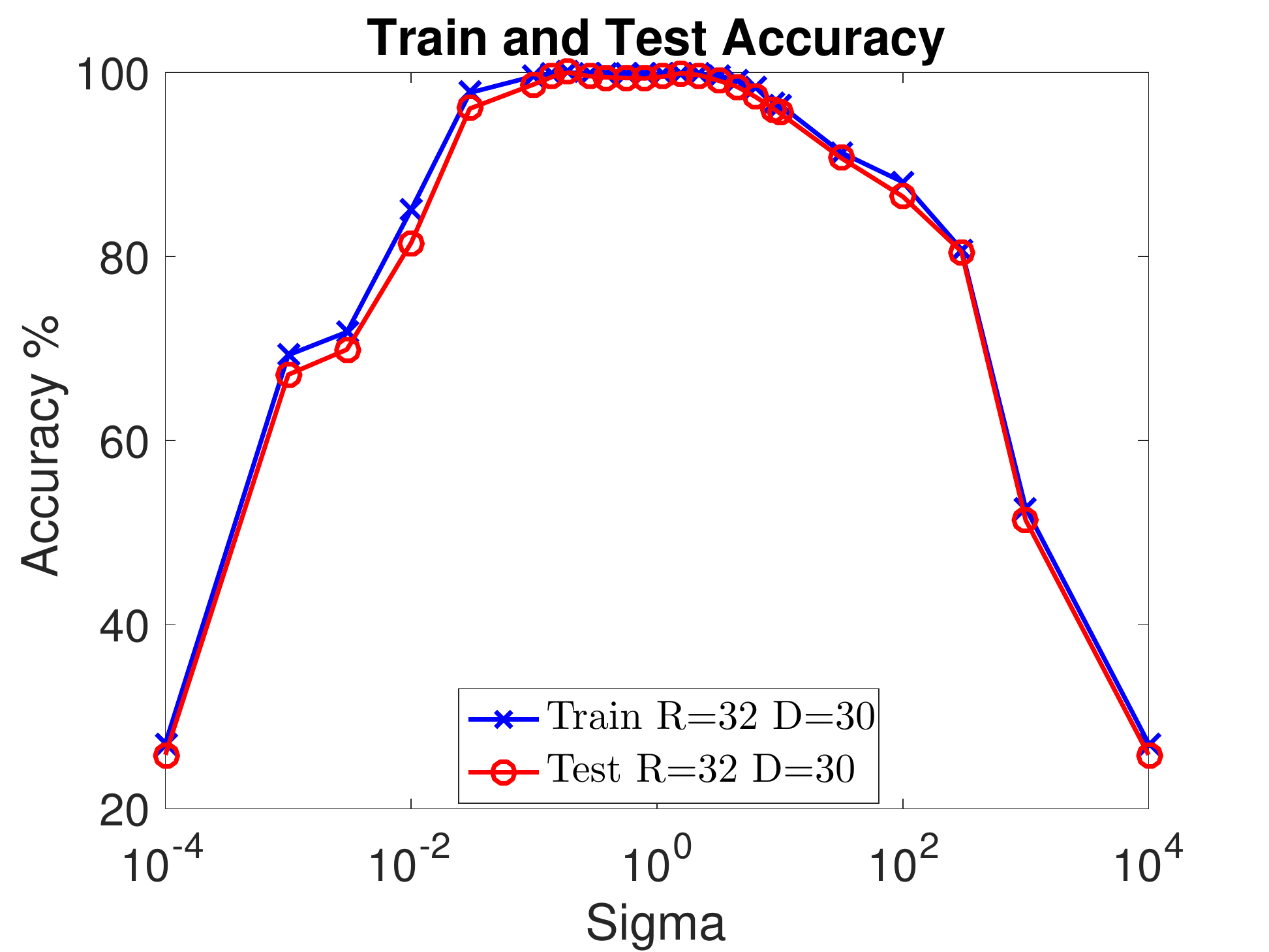}
      \caption{TWOP}
      \label{fig:exptsA_varyingS_Two_Patterns}
    \end{subfigure}
  \begin{subfigure}[b]{0.24\textwidth}
      \includegraphics[width=\textwidth]{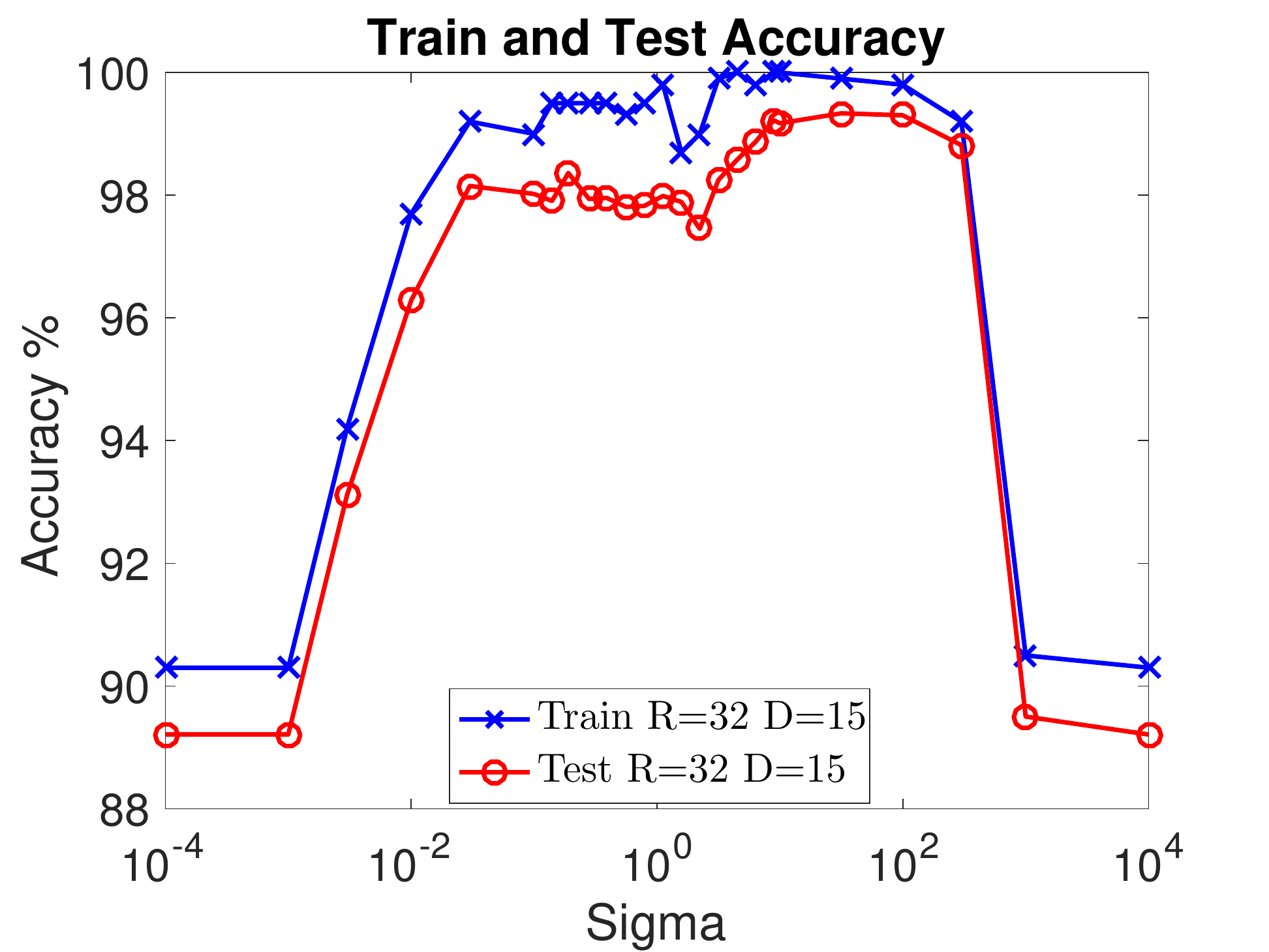}
      \caption{Wafer}
      \label{fig:exptsA_varyingS_wafer}
    \end{subfigure}
\vspace{-0mm}
\caption{Train (Blue) and test (Red) accuracy when varying $\sigma$ with fixed $D$ and $R$. We denote $D = DMax/2$. }
\vspace{-2mm}
\label{fig:exptsA_varyingS_sup}
\end{figure*}

\begin{figure*}[!htb]
\centering
  \begin{subfigure}[b]{0.24\textwidth}
      \includegraphics[width=\textwidth]{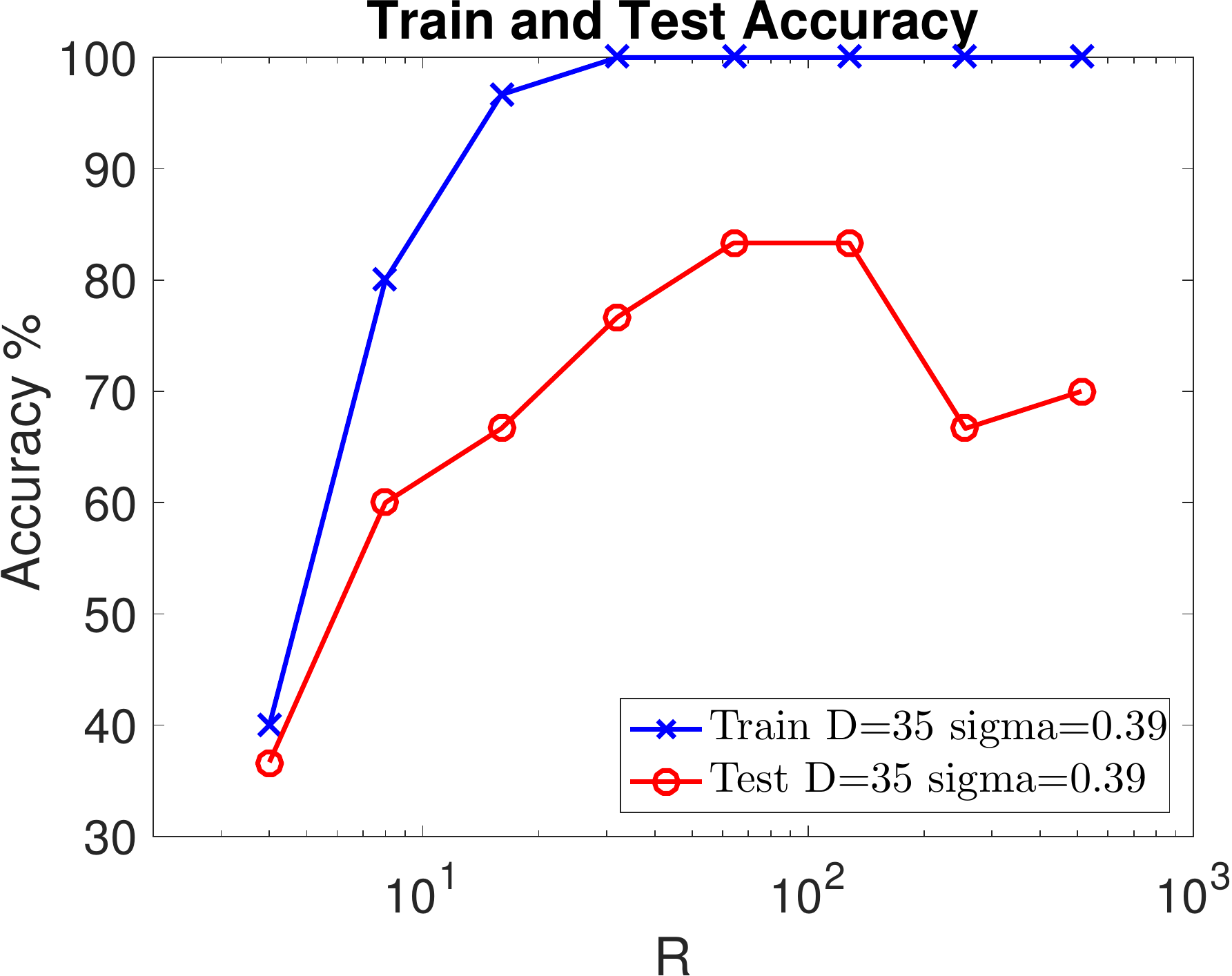}
      \caption{Beef}
      \label{fig:exptsA_varyingR_Beef}
    \end{subfigure}
  \begin{subfigure}[b]{0.24\textwidth}
      \includegraphics[width=\textwidth]{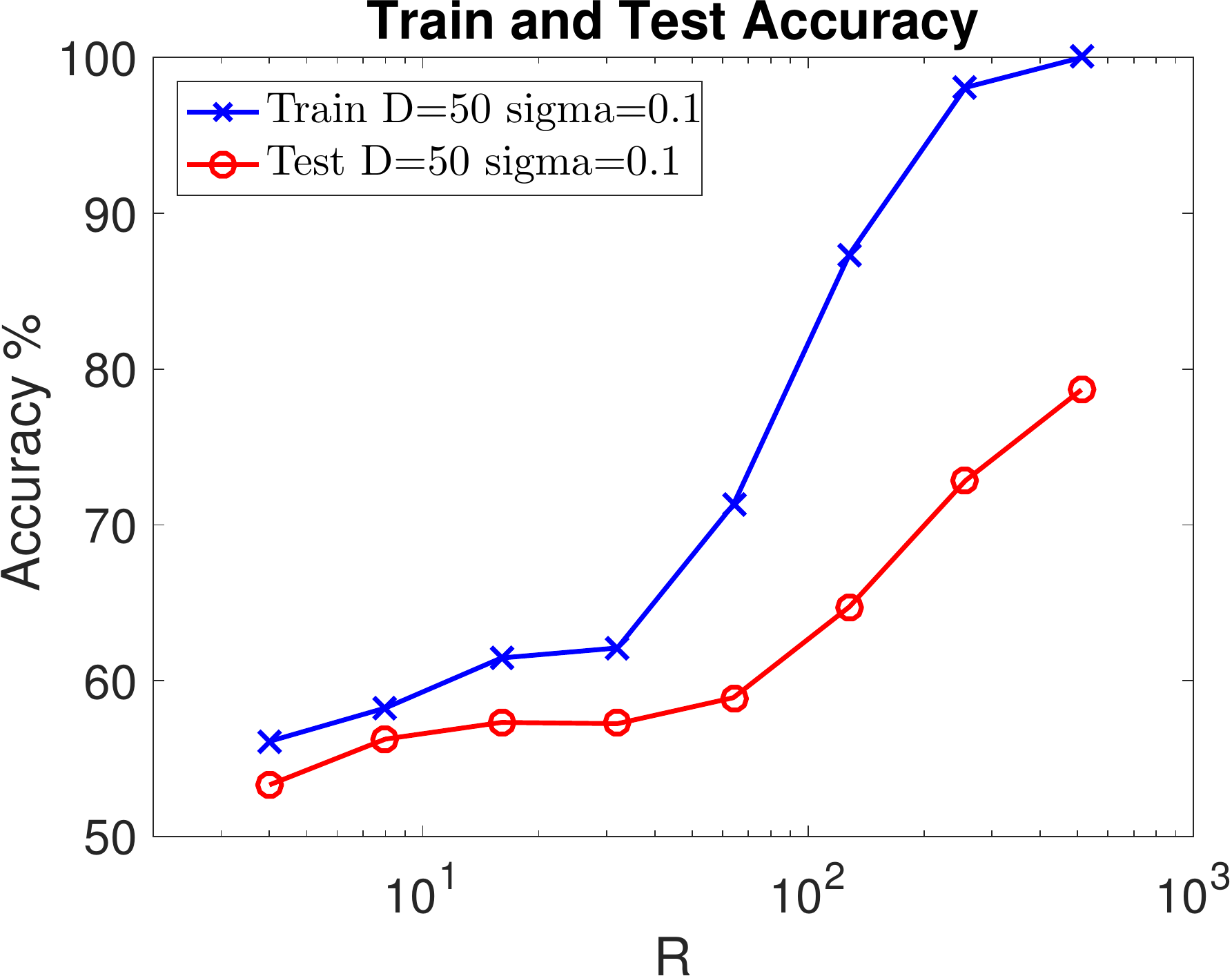}
      \caption{CHCO}
      \label{fig:exptsA_varyingR_ChlorineConcentration}
    \end{subfigure}
  \begin{subfigure}[b]{0.24\textwidth}
      \includegraphics[width=\textwidth]{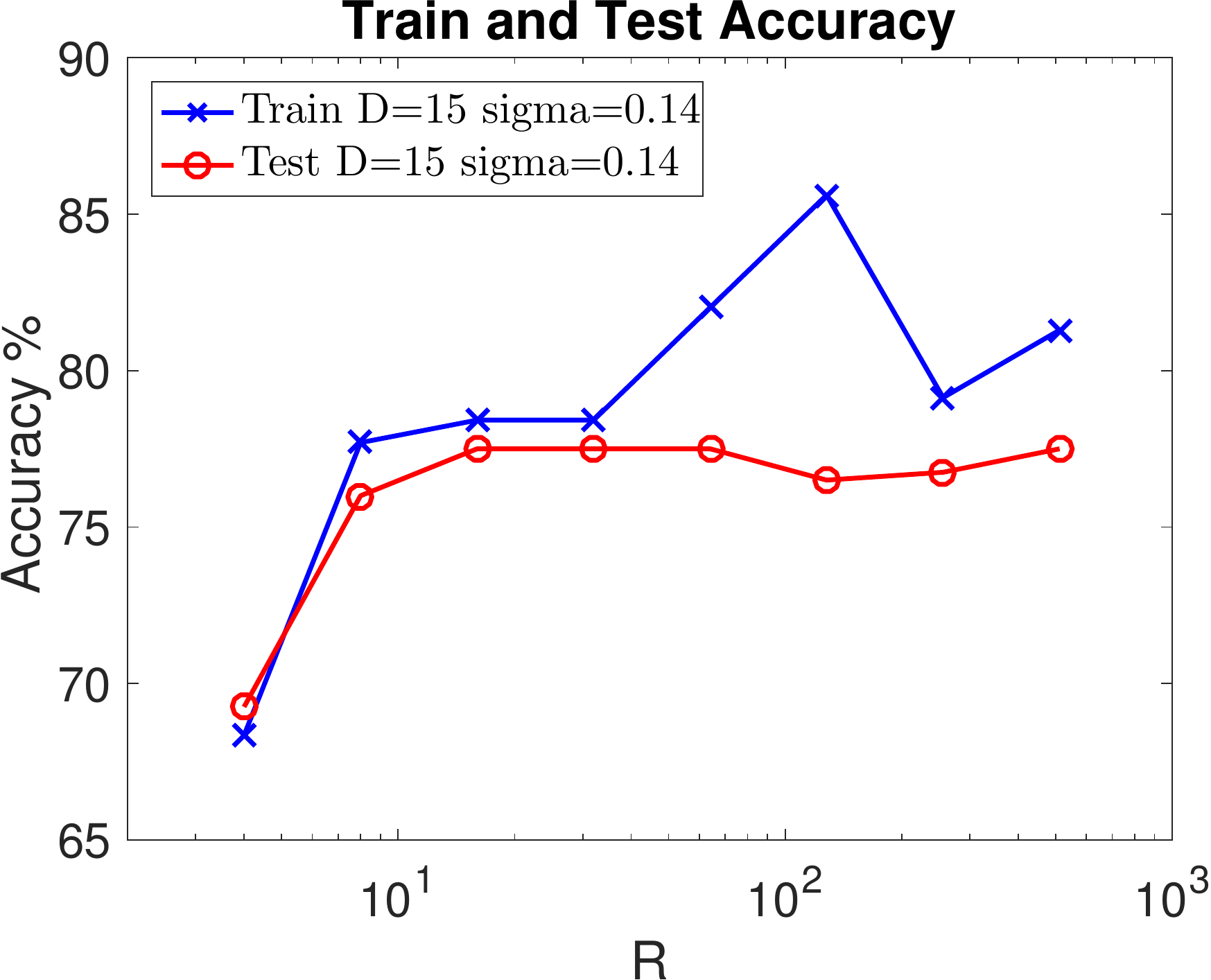}
      \caption{DPTW}
      \label{fig:exptsA_varyingR_DistalPhalanxTW}
    \end{subfigure}
  \begin{subfigure}[b]{0.24\textwidth}
      \includegraphics[width=\textwidth]{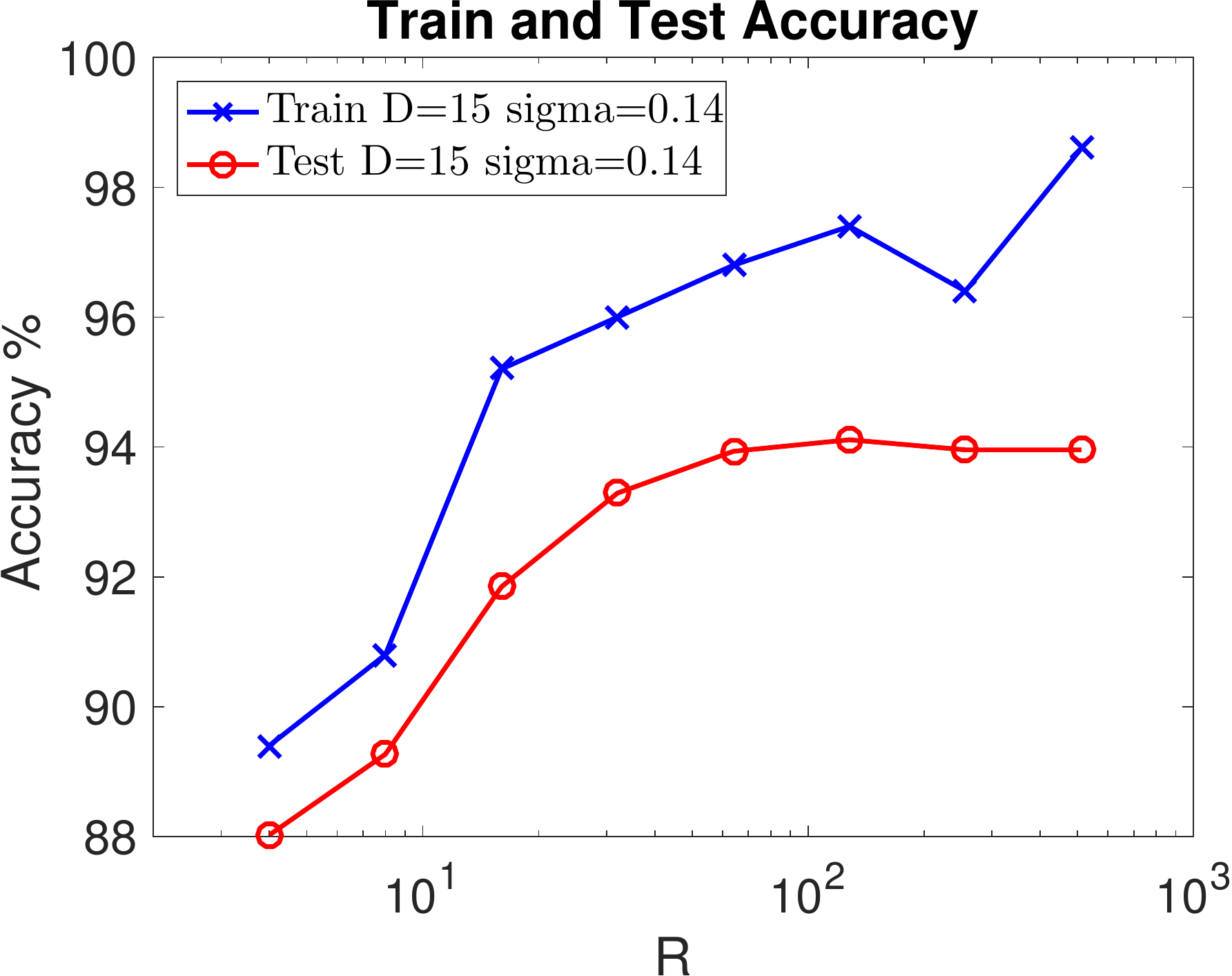}
      \caption{ECG5T}
      \label{fig:exptsA_varyingR_ECG5000}
    \end{subfigure}
  \begin{subfigure}[b]{0.24\textwidth}
      \includegraphics[width=\textwidth]{Graphs/ExptsA_varyingR/FordB_Accu_VaryingR-eps-converted-to.pdf}
      \caption{FordB}
      \label{fig:exptsA_varyingR_FordB}
    \end{subfigure}
  \begin{subfigure}[b]{0.24\textwidth}
      \includegraphics[width=\textwidth]{Graphs/ExptsA_varyingR/HandOutlines_Accu_VaryingR-eps-converted-to.pdf}
      \caption{HO}
      \label{fig:exptsA_varyingR_HandOutlines}
    \end{subfigure}
    \begin{subfigure}[b]{0.24\textwidth}
      \includegraphics[width=\textwidth]{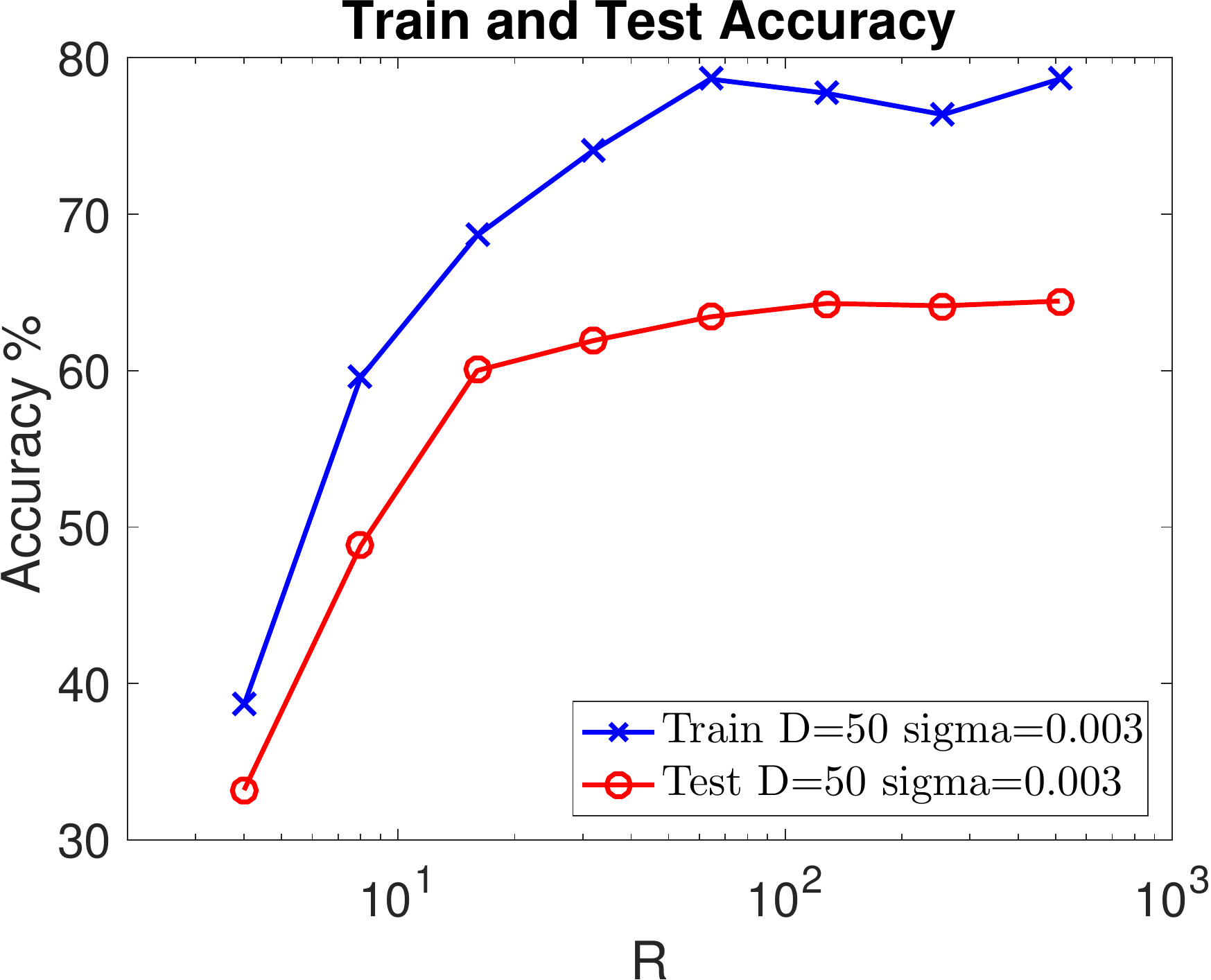}
      \caption{IWBS}
      \label{fig:exptsA_varyingR_InsectWingbeatSound}
    \end{subfigure}
  \begin{subfigure}[b]{0.24\textwidth}
      \includegraphics[width=\textwidth]{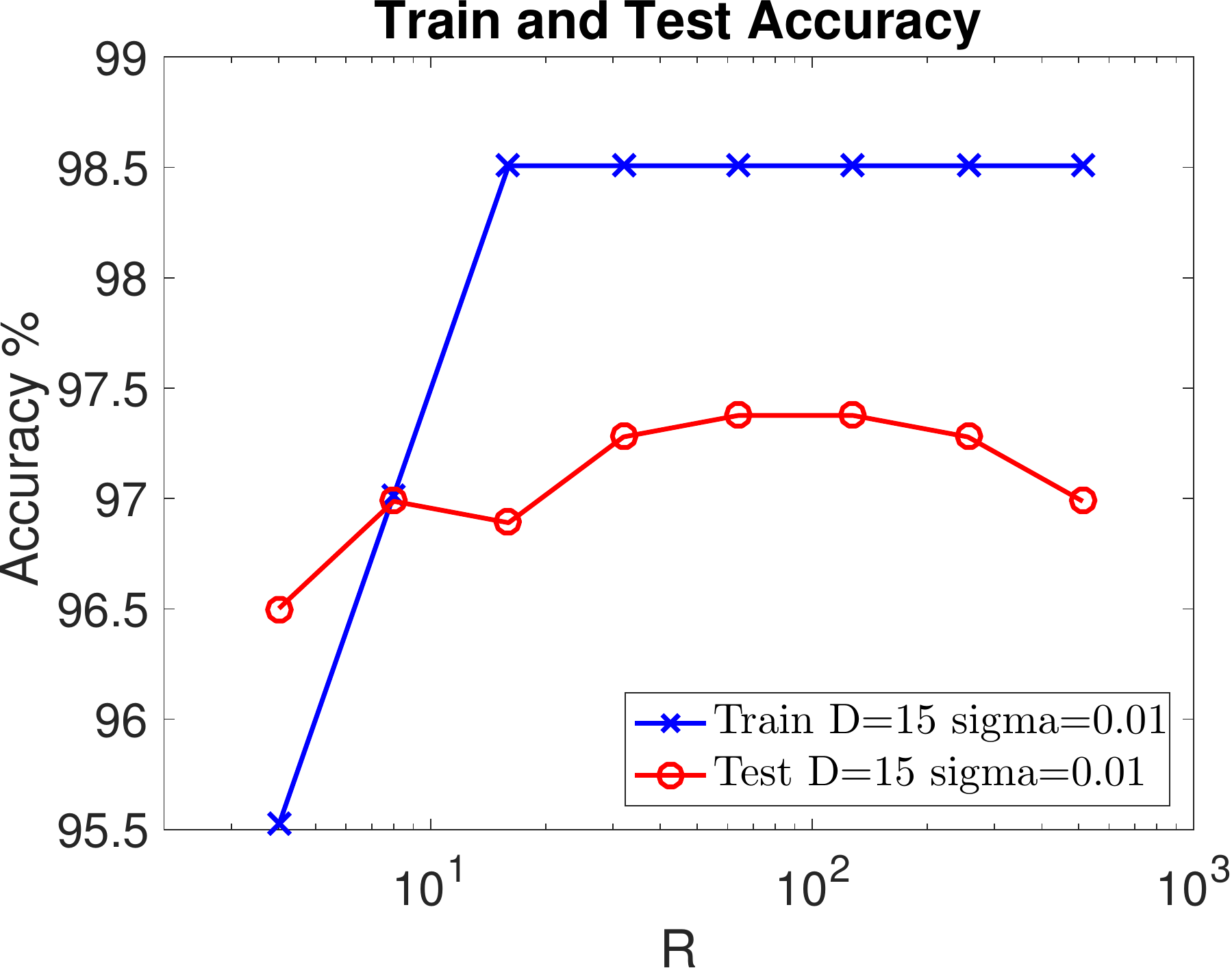}
      \caption{IPD}
      \label{fig:exptsA_varyingR_ItalyPowerDemand}
    \end{subfigure}
  \begin{subfigure}[b]{0.24\textwidth}
      \includegraphics[width=\textwidth]{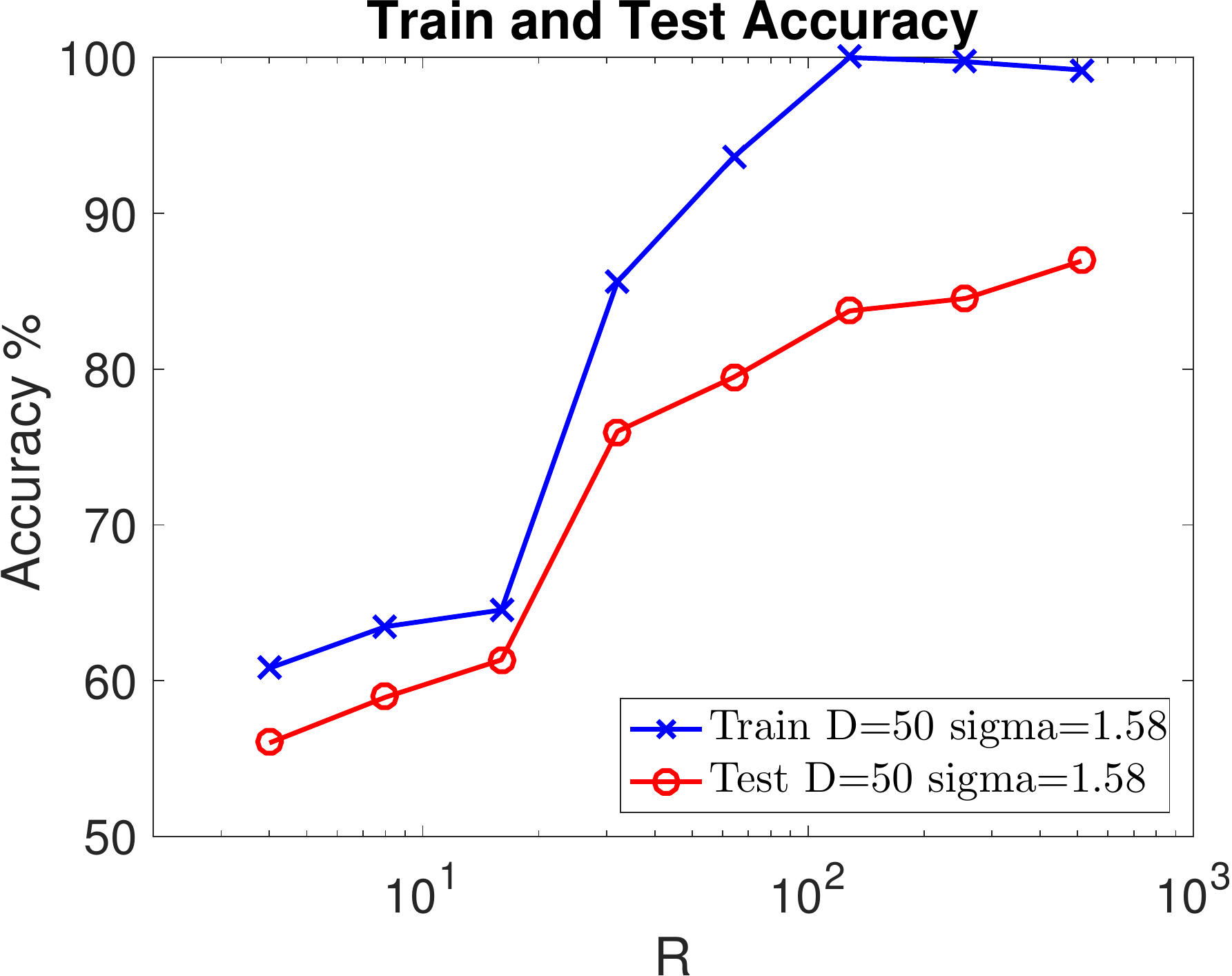}
      \caption{LKA}
      \label{fig:exptsA_varyingR_LargeKitchenAppliances}
    \end{subfigure}
  \begin{subfigure}[b]{0.24\textwidth}
      \includegraphics[width=\textwidth]{Graphs/ExptsA_varyingR/MALLAT_Accu_VaryingR-eps-converted-to.pdf}
      \caption{MALLAT}
      \label{fig:exptsA_varyingR_MALLAT}
    \end{subfigure}
  \begin{subfigure}[b]{0.24\textwidth}
      \includegraphics[width=\textwidth]{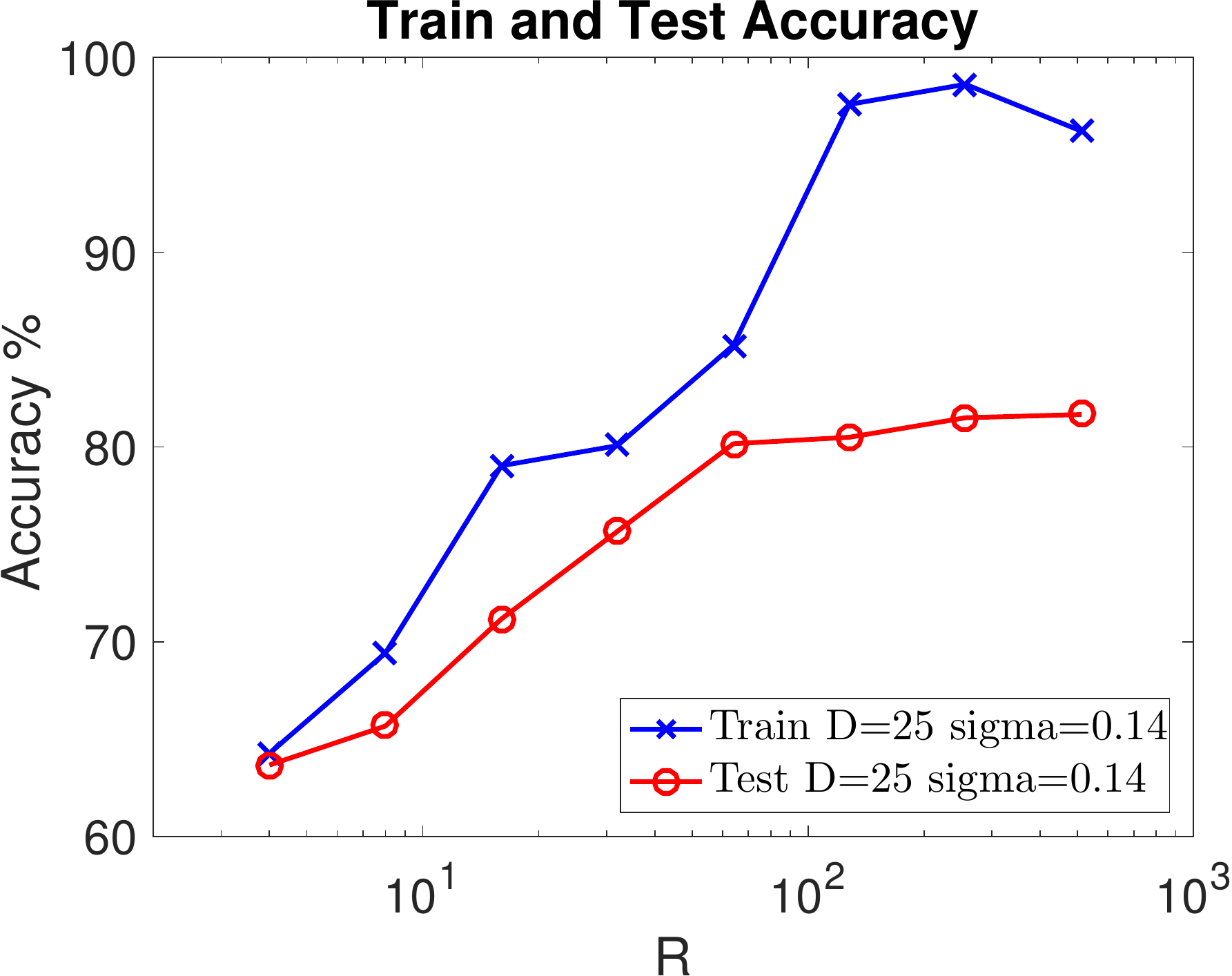}
      \caption{MPOC}
      \label{fig:exptsA_varyingR_MiddlePhalanxOutlineCorrect}
    \end{subfigure}
  \begin{subfigure}[b]{0.24\textwidth}
      \includegraphics[width=\textwidth]{Graphs/ExptsA_varyingR/NonInvasiveFatalECG_Thorax2_Accu_VaryingR-eps-converted-to.pdf}
      \caption{NIFECG}
      \label{fig:exptsA_varyingR_NonInvasiveFatalECG_Thorax2}
    \end{subfigure}
  \begin{subfigure}[b]{0.24\textwidth}
      \includegraphics[width=\textwidth]{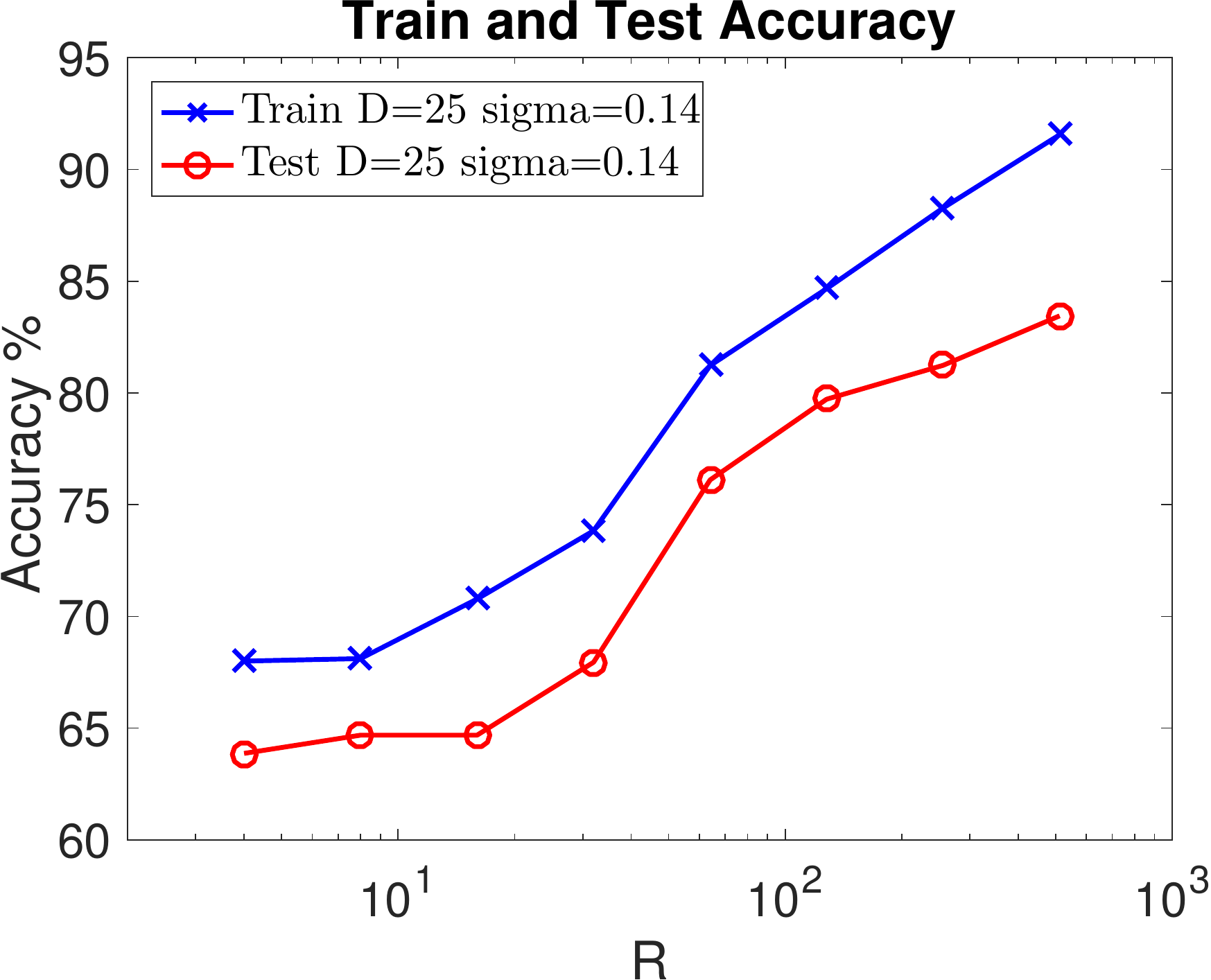}
      \caption{POC}
      \label{fig:exptsA_varyingR_PhalangesOutlinesCorrect}
    \end{subfigure}
  \begin{subfigure}[b]{0.24\textwidth}
      \includegraphics[width=\textwidth]{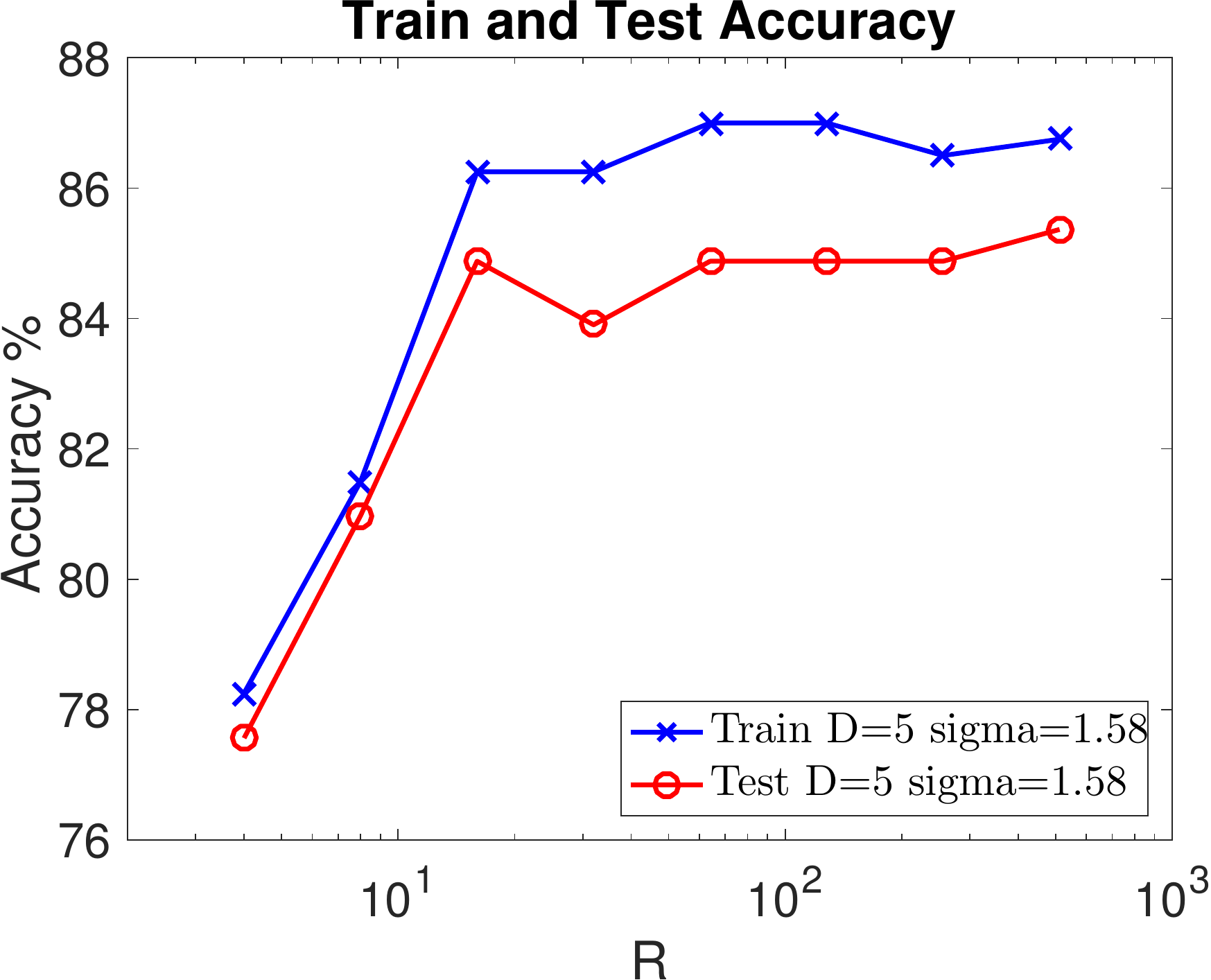}
      \caption{PPOAG}
      \label{fig:exptsA_varyingR_ProximalPhalanxOutlineAgeGroup}
    \end{subfigure}
  \begin{subfigure}[b]{0.24\textwidth}
      \includegraphics[width=\textwidth]{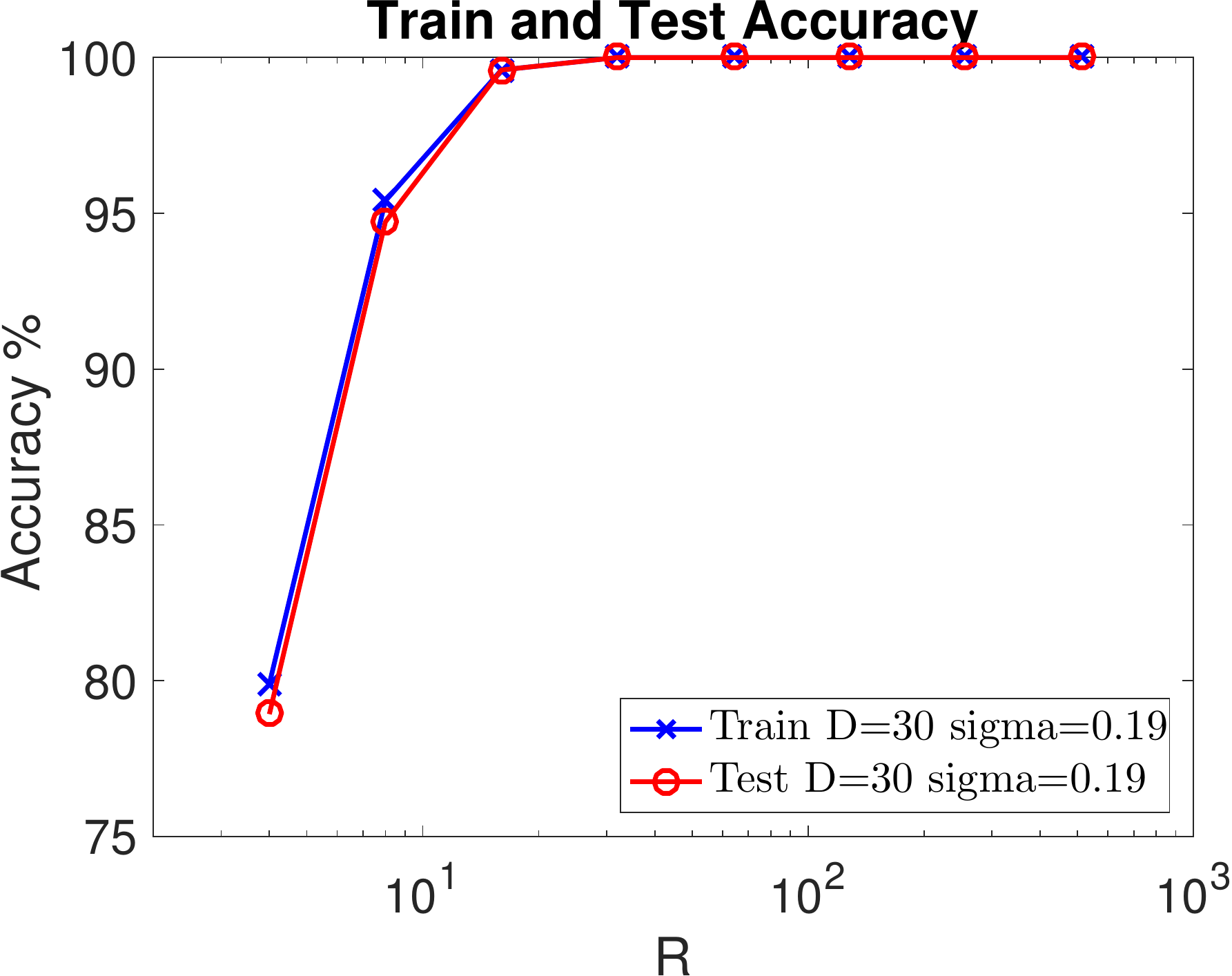}
      \caption{TWOP}
      \label{fig:exptsA_varyingR_Two_Patterns}
    \end{subfigure}
  \begin{subfigure}[b]{0.24\textwidth}
      \includegraphics[width=\textwidth]{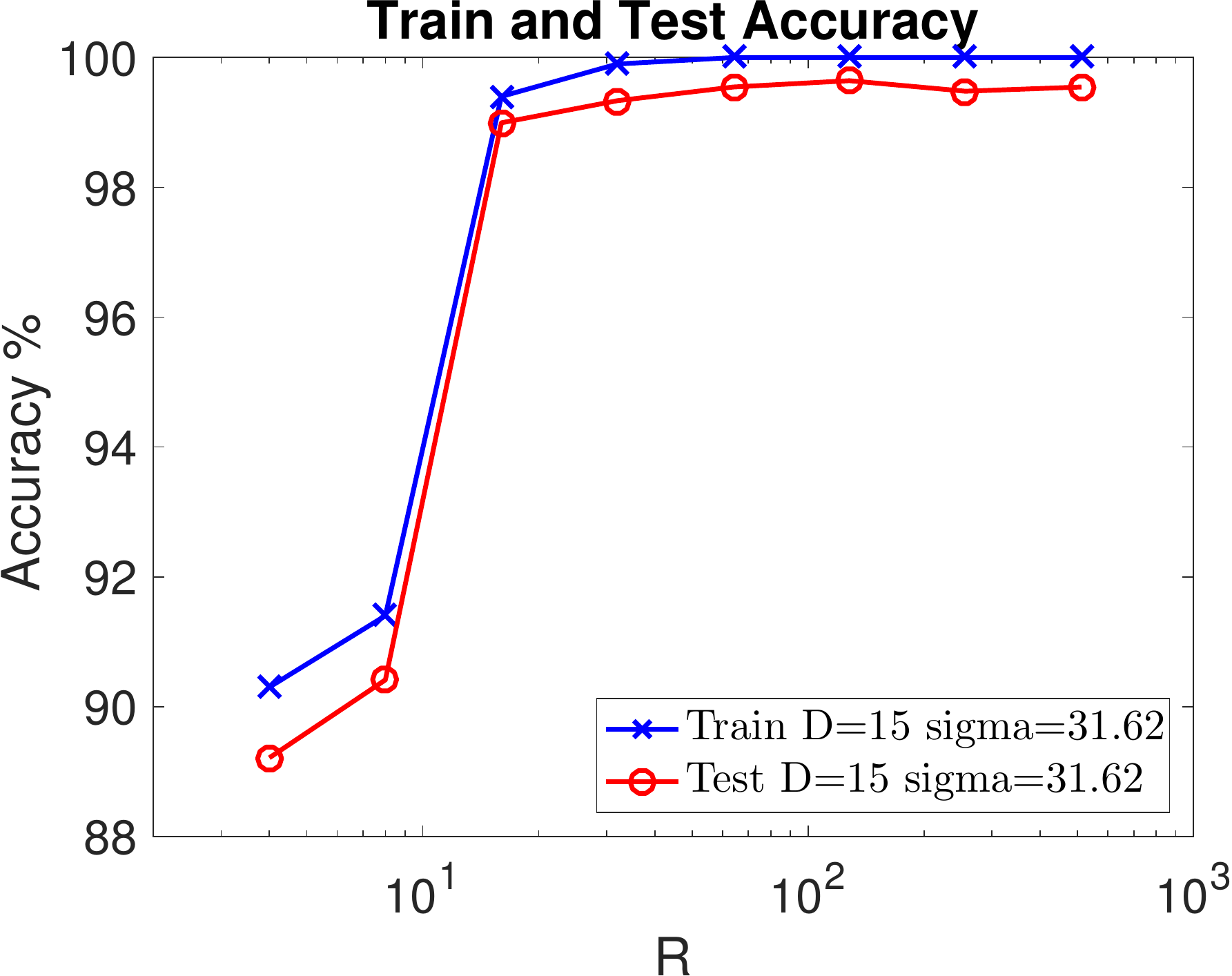}
      \caption{Wafer}
      \label{fig:exptsA_varyingR_wafer}
    \end{subfigure}
\vspace{-0mm}
\caption{Train (Blue) and test (Red) accuracy when varying $R$ with fixed $\sigma$ and $D$. We denote $D = DMax/2$.}
\vspace{-2mm}
\label{fig:exptsA_varyingR_sup}
\end{figure*}

\begin{figure*}[!htb]
\centering
  \begin{subfigure}[b]{0.24\textwidth}
      \includegraphics[width=\textwidth]{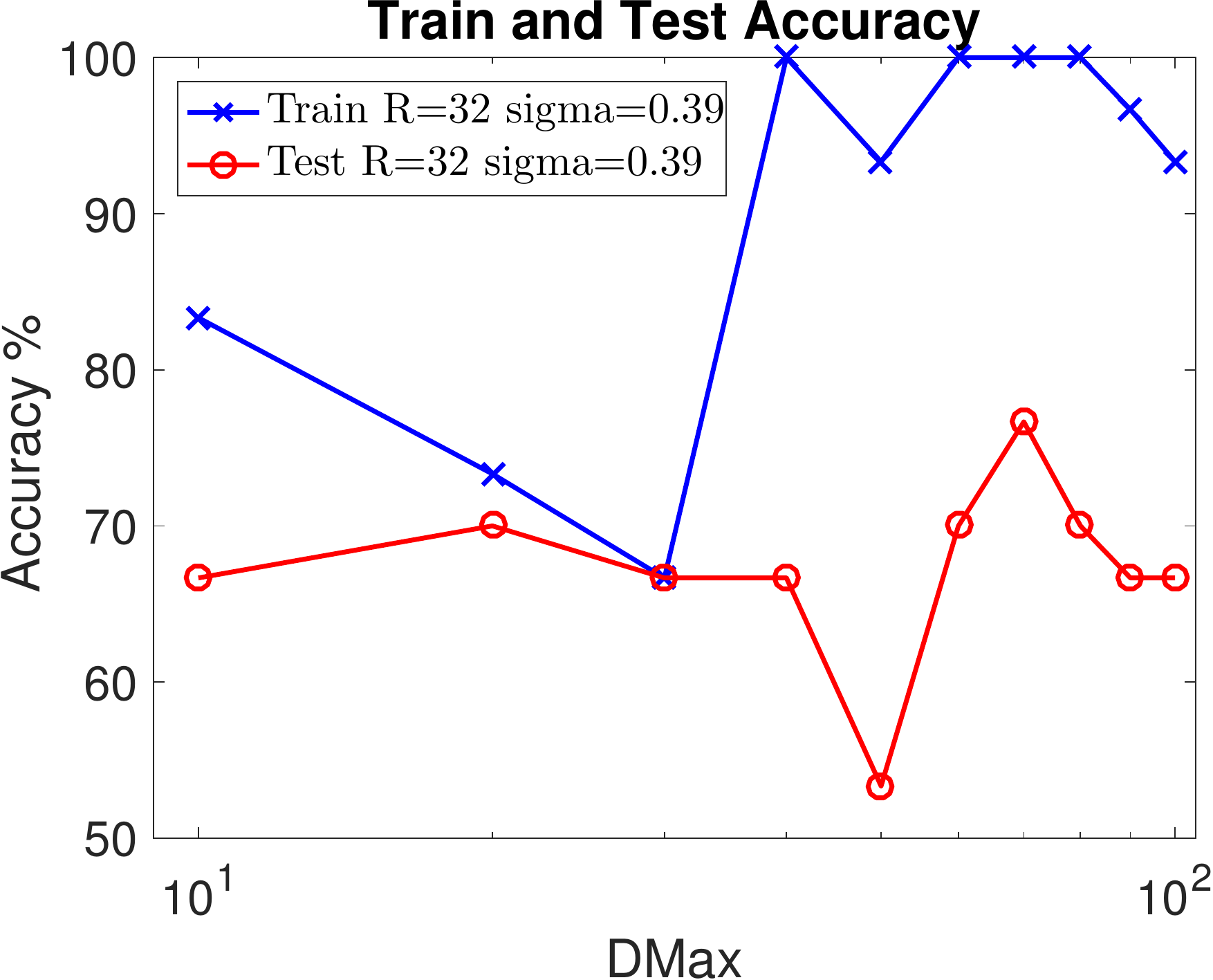}
      \caption{Beef}
      \label{fig:exptsA_varyingD_Beef}
    \end{subfigure}
  \begin{subfigure}[b]{0.24\textwidth}
      \includegraphics[width=\textwidth]{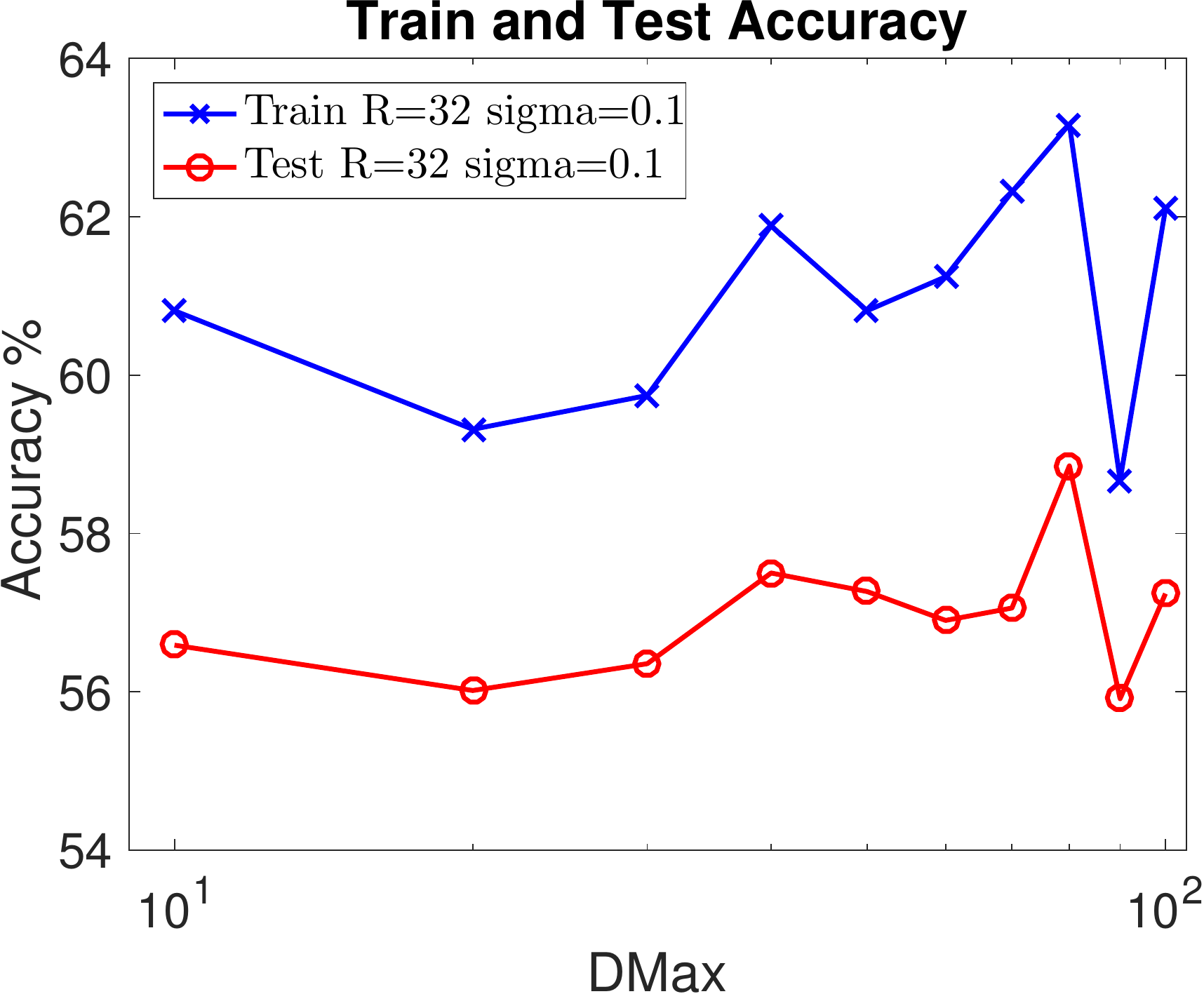}
      \caption{CHCO}
      \label{fig:exptsA_varyingD_ChlorineConcentration}
    \end{subfigure}
  \begin{subfigure}[b]{0.24\textwidth}
      \includegraphics[width=\textwidth]{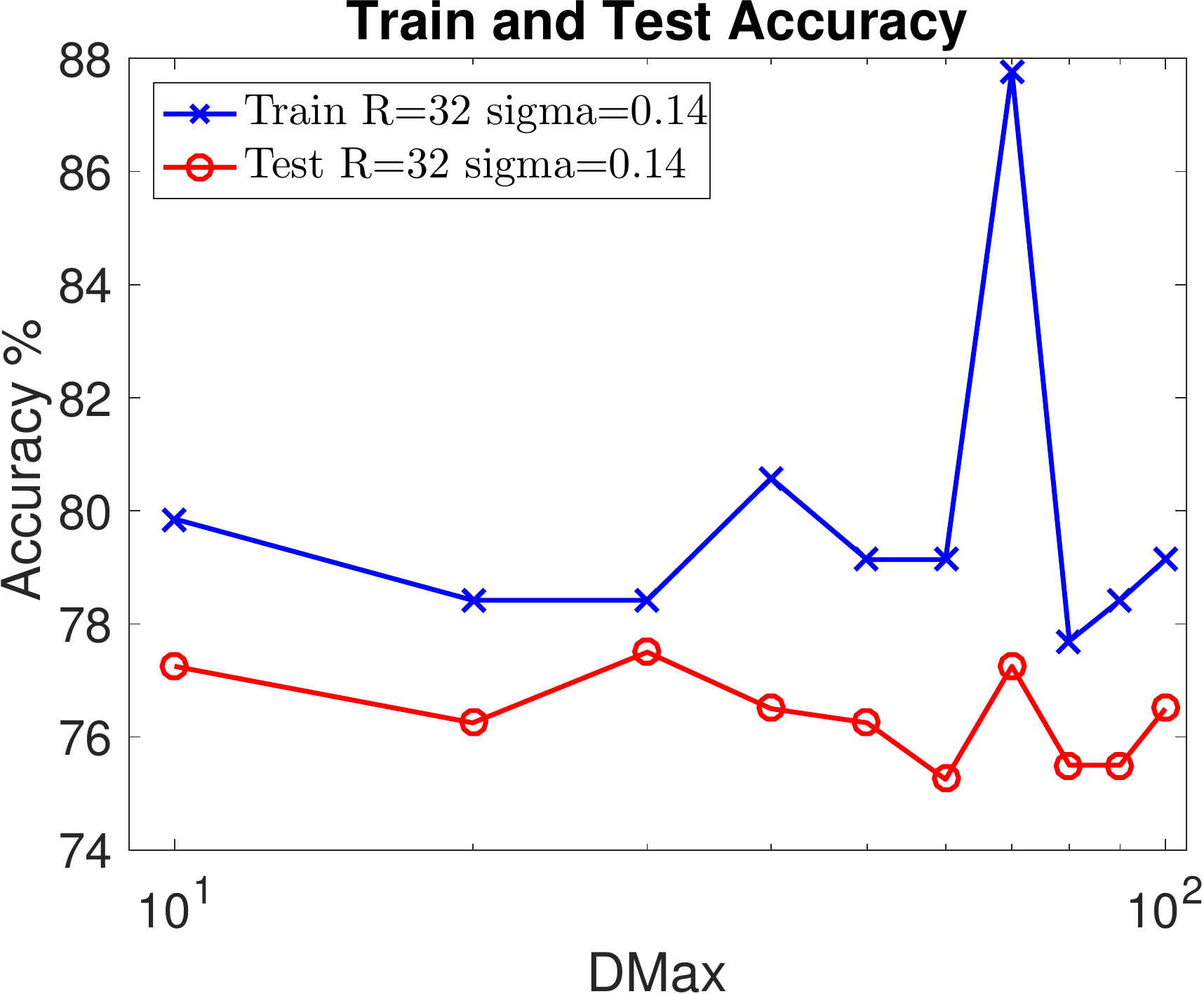}
      \caption{DPTW}
      \label{fig:exptsA_varyingD_DistalPhalanxTW}
    \end{subfigure}
  \begin{subfigure}[b]{0.24\textwidth}
      \includegraphics[width=\textwidth]{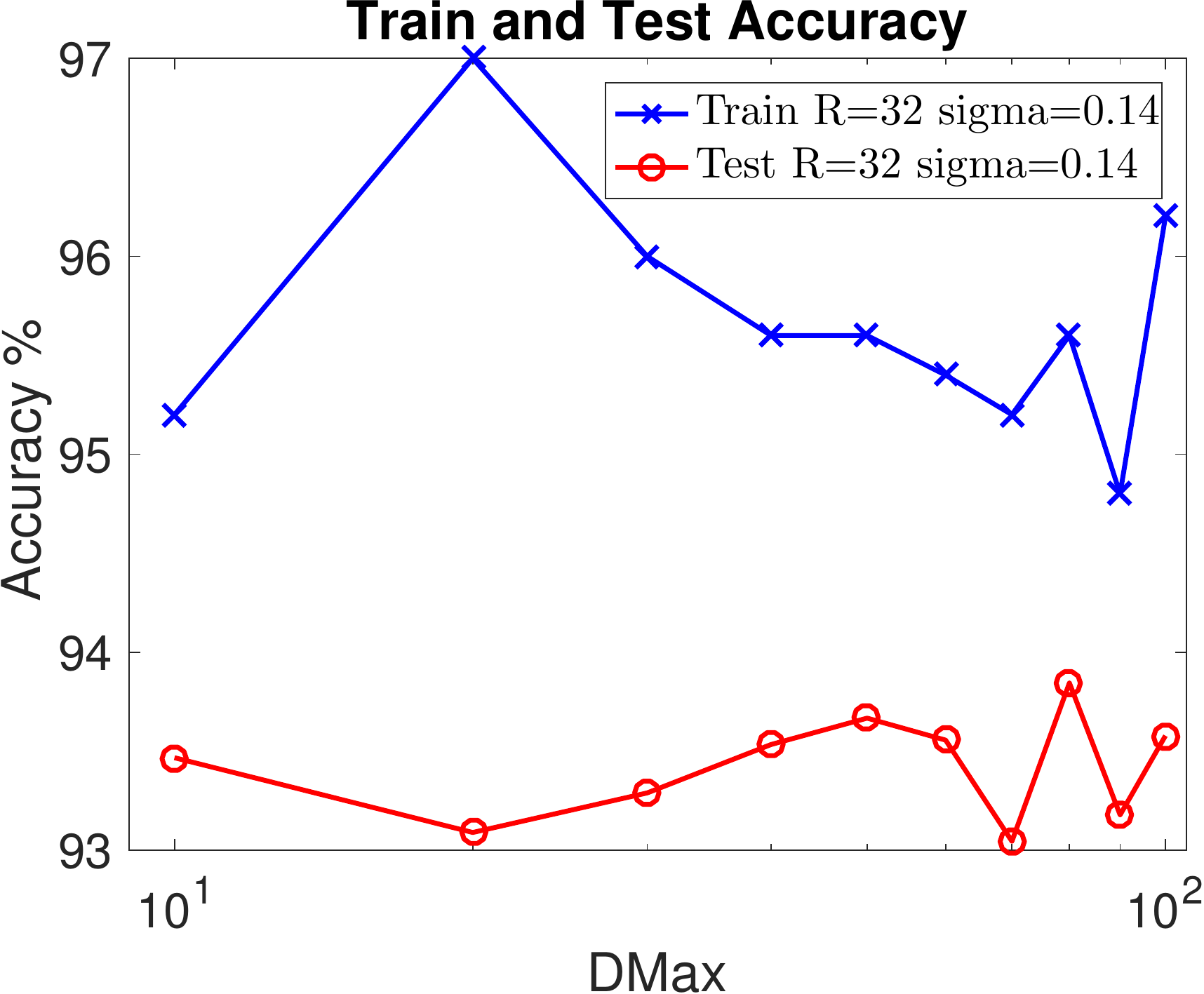}
      \caption{ECG5T}
      \label{fig:exptsA_varyingD_ECG5000}
    \end{subfigure}
  \begin{subfigure}[b]{0.24\textwidth}
      \includegraphics[width=\textwidth]{Graphs/ExptsA_varyingD/FordB_Accu_VaryingD-eps-converted-to.pdf}
      \caption{FordB}
      \label{fig:exptsA_varyingD_FordB}
    \end{subfigure}
  \begin{subfigure}[b]{0.24\textwidth}
      \includegraphics[width=\textwidth]{Graphs/ExptsA_varyingD/HandOutlines_Accu_VaryingD-eps-converted-to.pdf}
      \caption{HO}
      \label{fig:exptsA_varyingD_HandOutlines}
    \end{subfigure}
    \begin{subfigure}[b]{0.24\textwidth}
      \includegraphics[width=\textwidth]{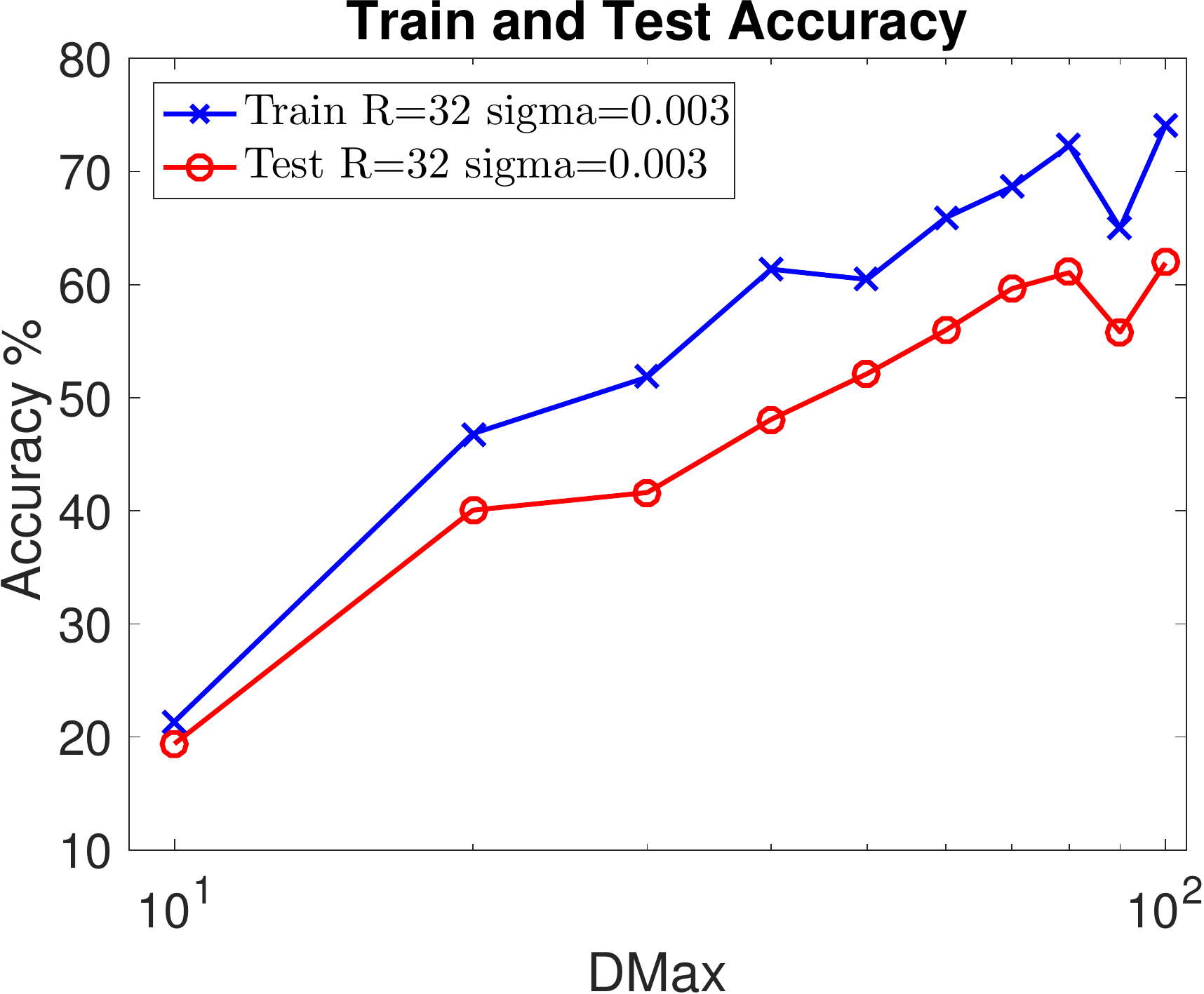}
      \caption{IWBS}
      \label{fig:exptsA_varyingD_InsectWingbeatSound}
    \end{subfigure}
  \begin{subfigure}[b]{0.24\textwidth}
      \includegraphics[width=\textwidth]{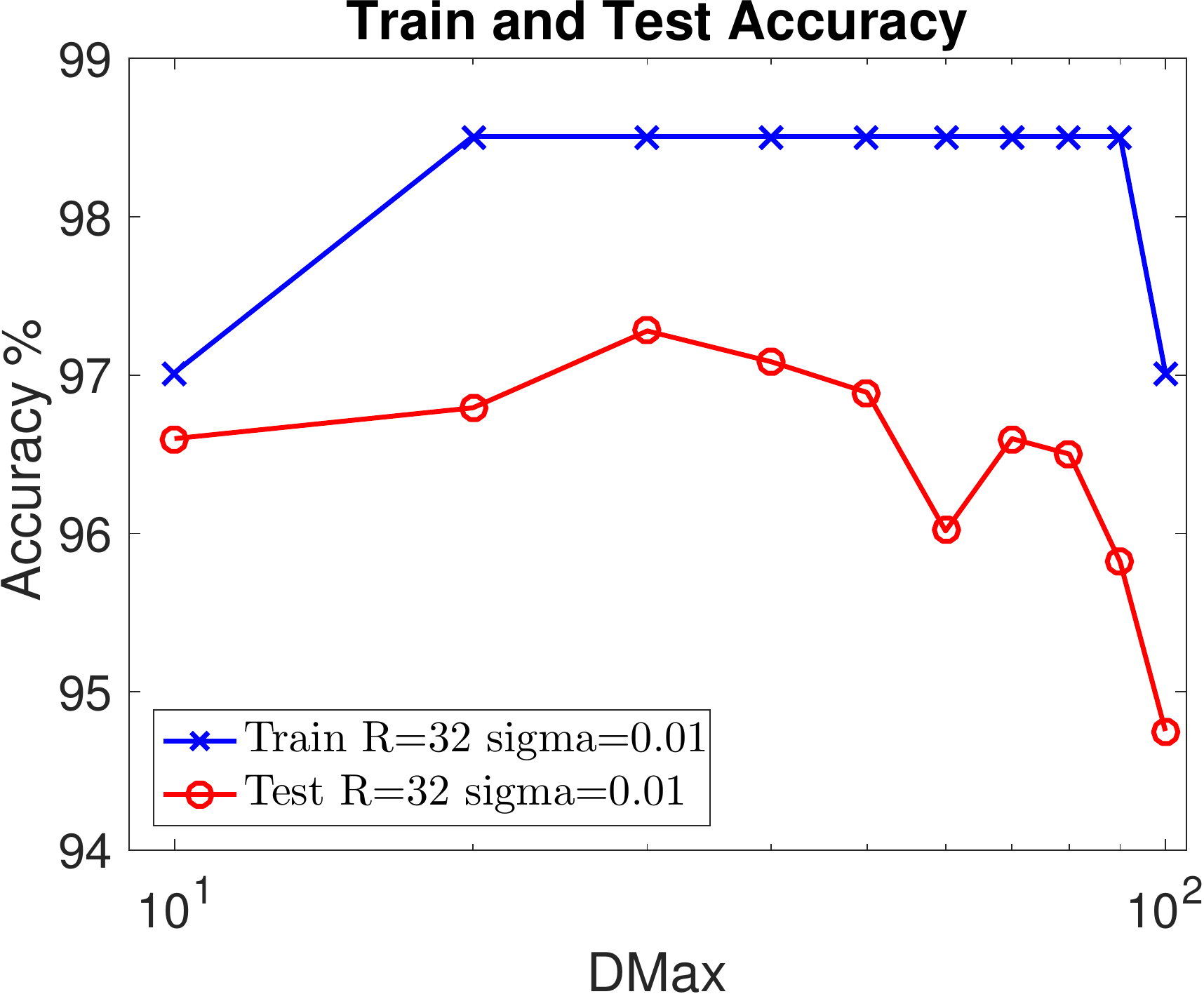}
      \caption{IPD}
      \label{fig:exptsA_varyingD_ItalyPowerDemand}
    \end{subfigure}
  \begin{subfigure}[b]{0.24\textwidth}
      \includegraphics[width=\textwidth]{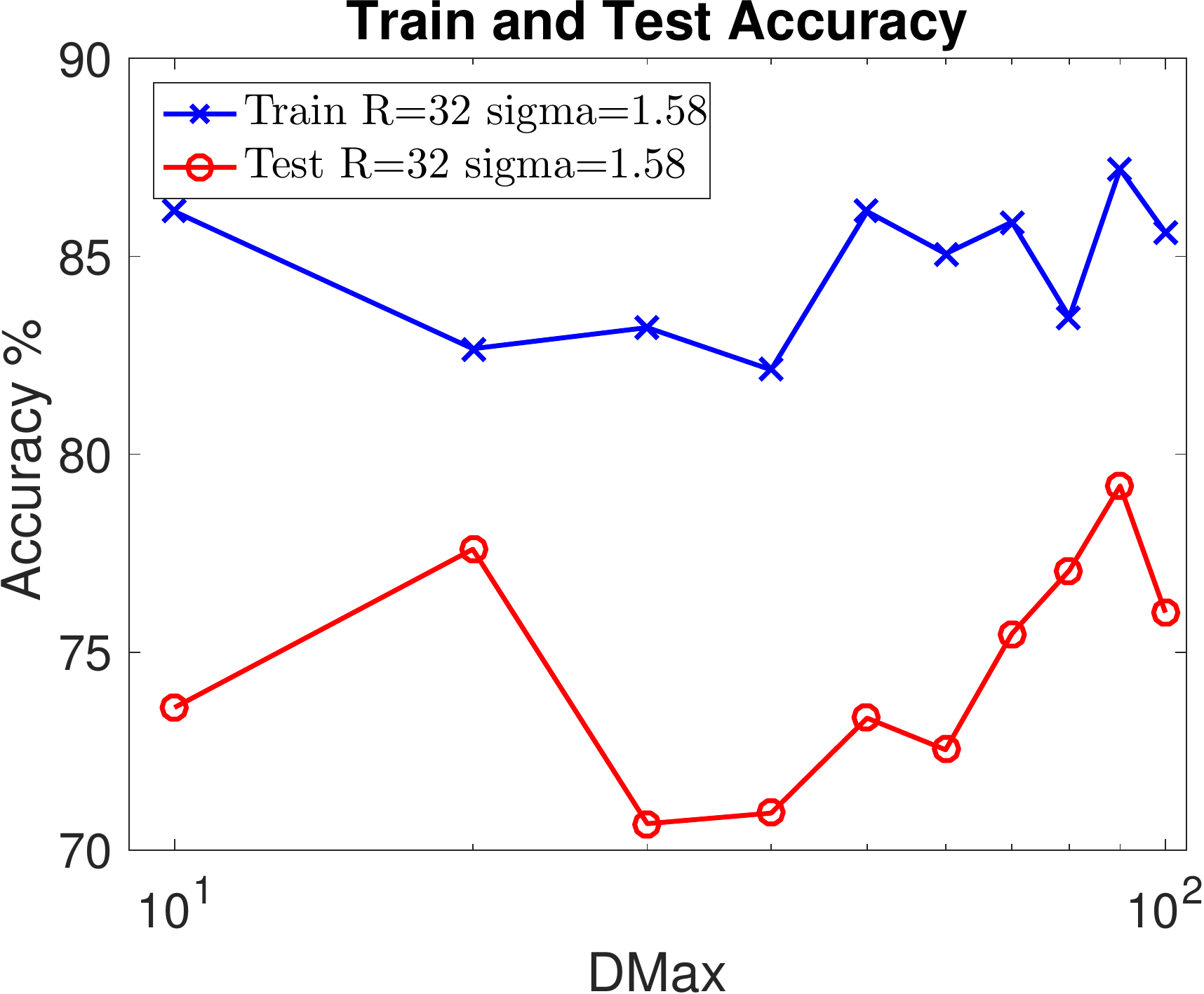}
      \caption{LKA}
      \label{fig:exptsA_varyingD_LargeKitchenAppliances}
    \end{subfigure}
  \begin{subfigure}[b]{0.24\textwidth}
      \includegraphics[width=\textwidth]{Graphs/ExptsA_varyingD/MALLAT_Accu_VaryingD-eps-converted-to.pdf}
      \caption{MALLAT}
      \label{fig:exptsA_varyingD_MALLAT}
    \end{subfigure}
  \begin{subfigure}[b]{0.24\textwidth}
      \includegraphics[width=\textwidth]{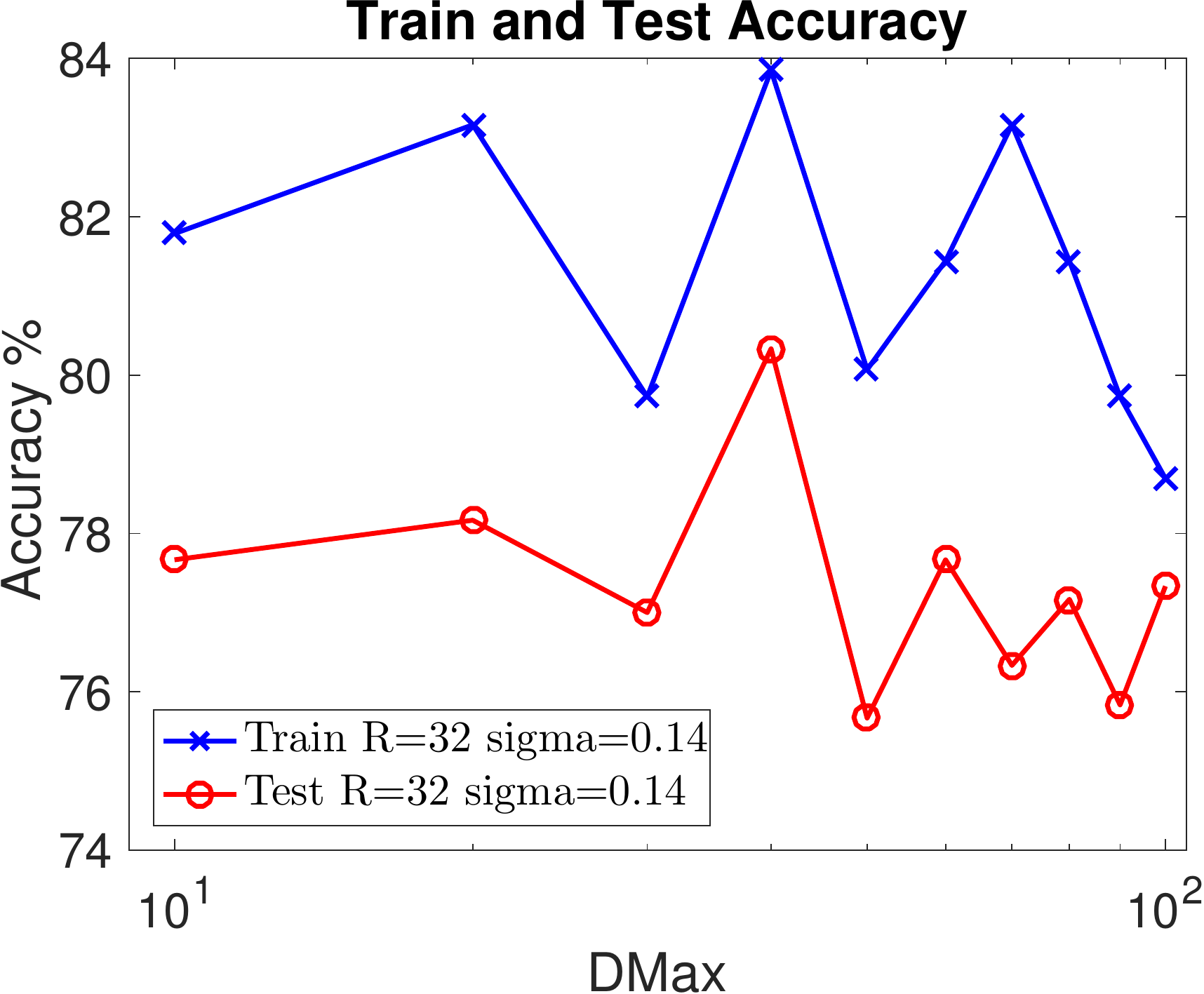}
      \caption{MPOC}
      \label{fig:exptsA_varyingD_MiddlePhalanxOutlineCorrect}
    \end{subfigure}
  \begin{subfigure}[b]{0.24\textwidth}
      \includegraphics[width=\textwidth]{Graphs/ExptsA_varyingD/NonInvasiveFatalECG_Thorax2_Accu_VaryingD-eps-converted-to.pdf}
      \caption{NIFECG}
      \label{fig:exptsA_varyingD_NonInvasiveFatalECG_Thorax2}
    \end{subfigure}
  \begin{subfigure}[b]{0.24\textwidth}
      \includegraphics[width=\textwidth]{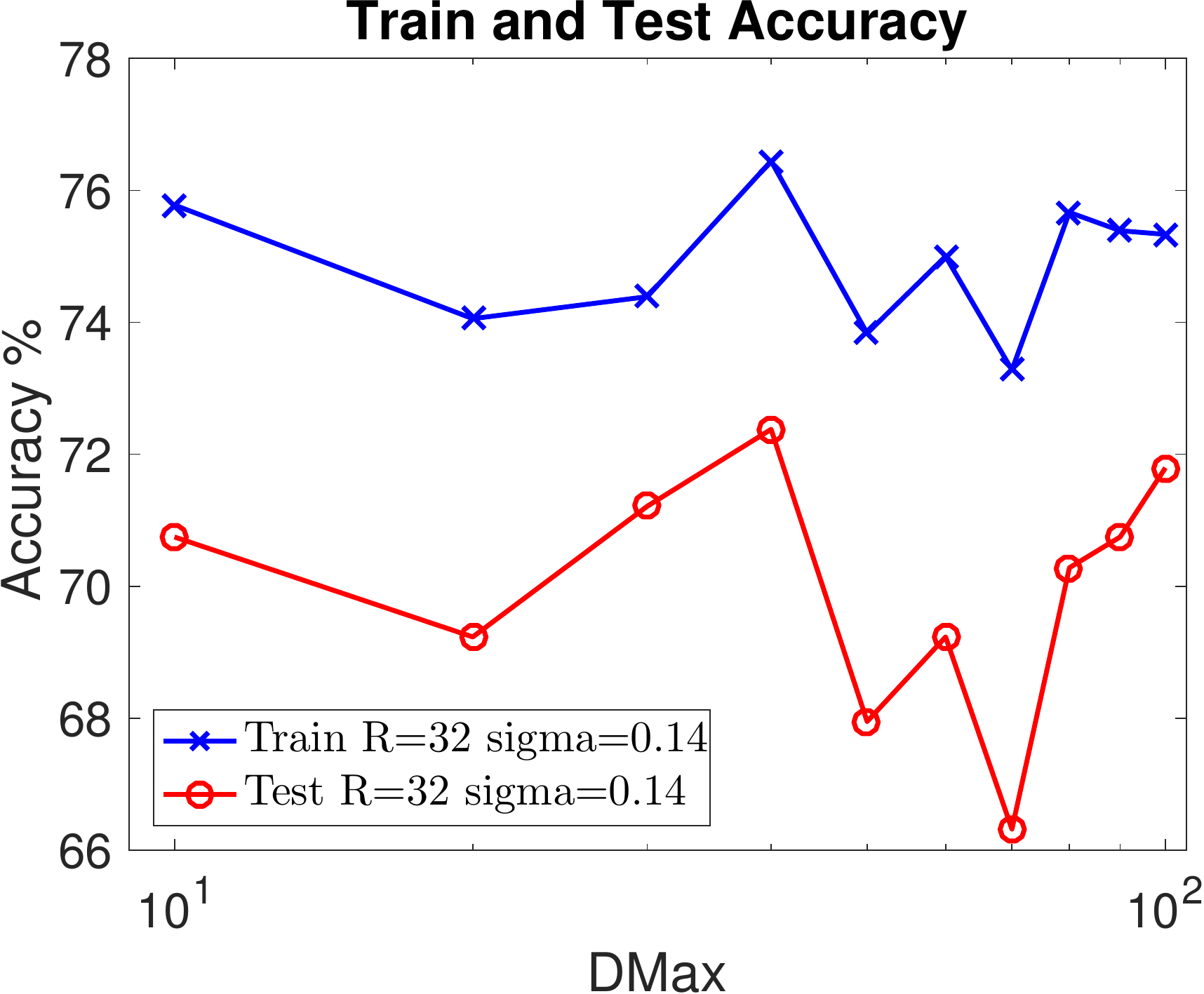}
      \caption{POC}
      \label{fig:exptsA_varyingD_PhalangesOutlinesCorrect}
    \end{subfigure}
  \begin{subfigure}[b]{0.24\textwidth}
      \includegraphics[width=\textwidth]{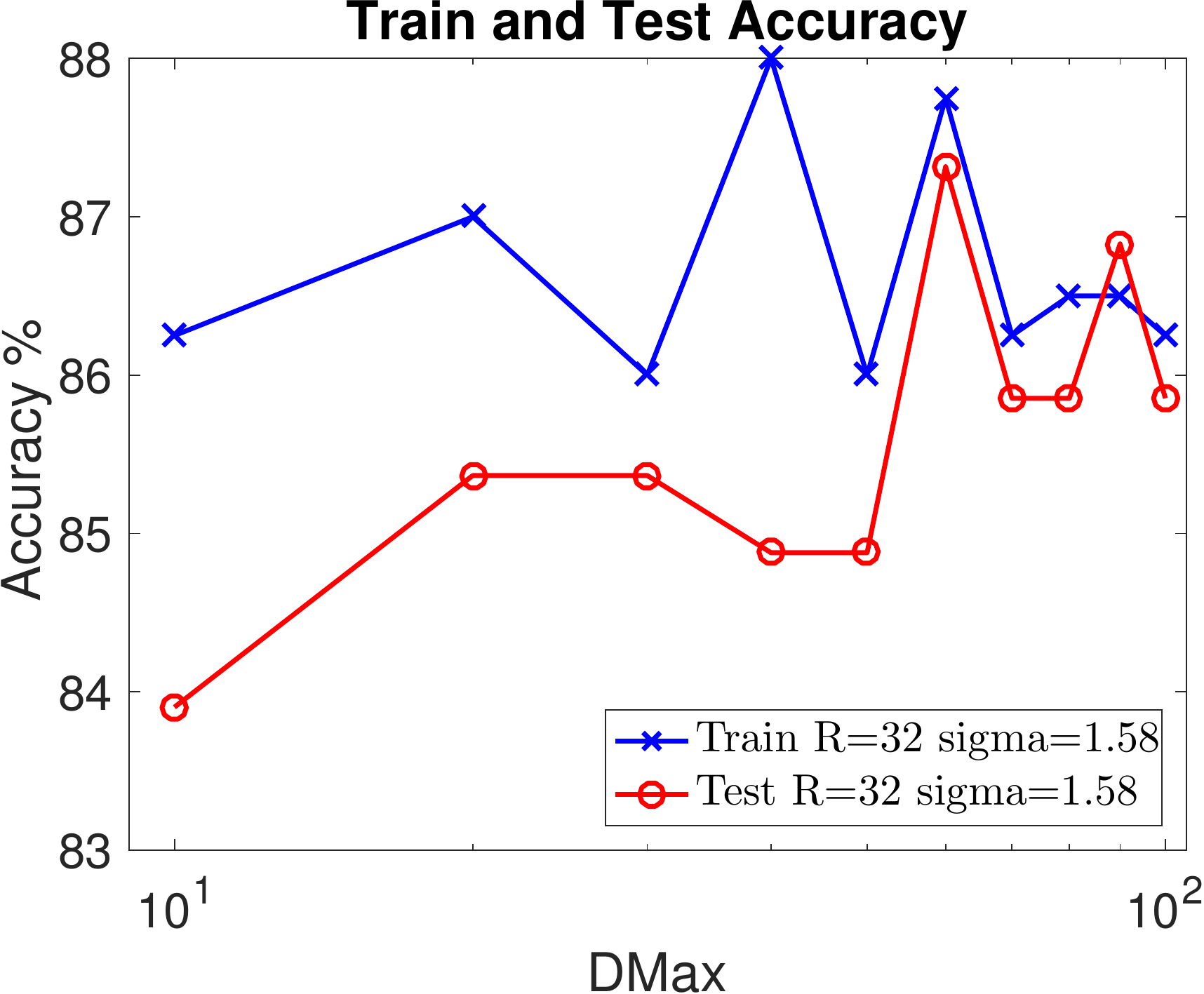}
      \caption{PPOAG}
      \label{fig:exptsA_varyingD_ProximalPhalanxOutlineAgeGroup}
    \end{subfigure}
  \begin{subfigure}[b]{0.24\textwidth}
      \includegraphics[width=\textwidth]{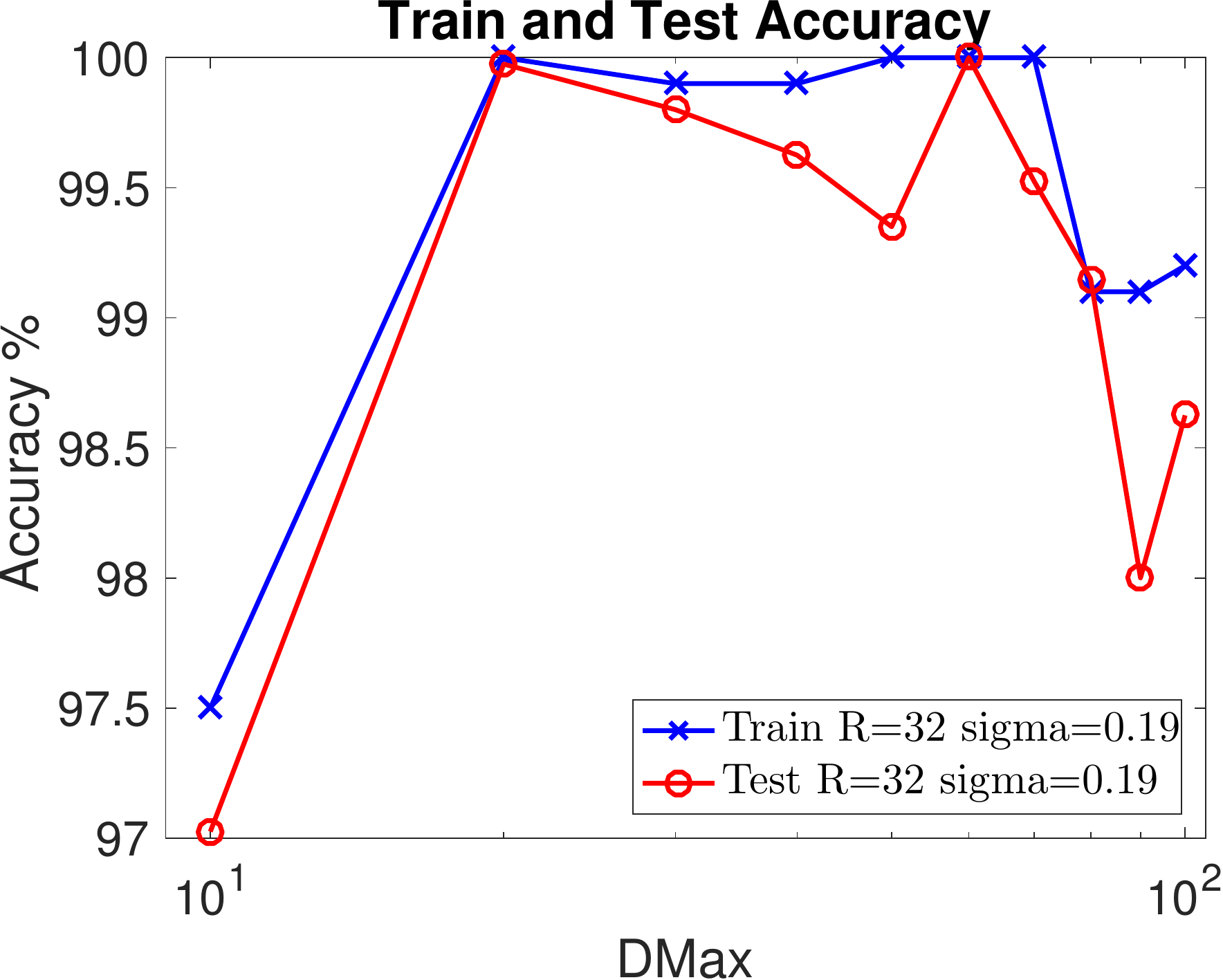}
      \caption{TWOP}
      \label{fig:exptsA_varyingD_Two_Patterns}
    \end{subfigure}
  \begin{subfigure}[b]{0.24\textwidth}
      \includegraphics[width=\textwidth]{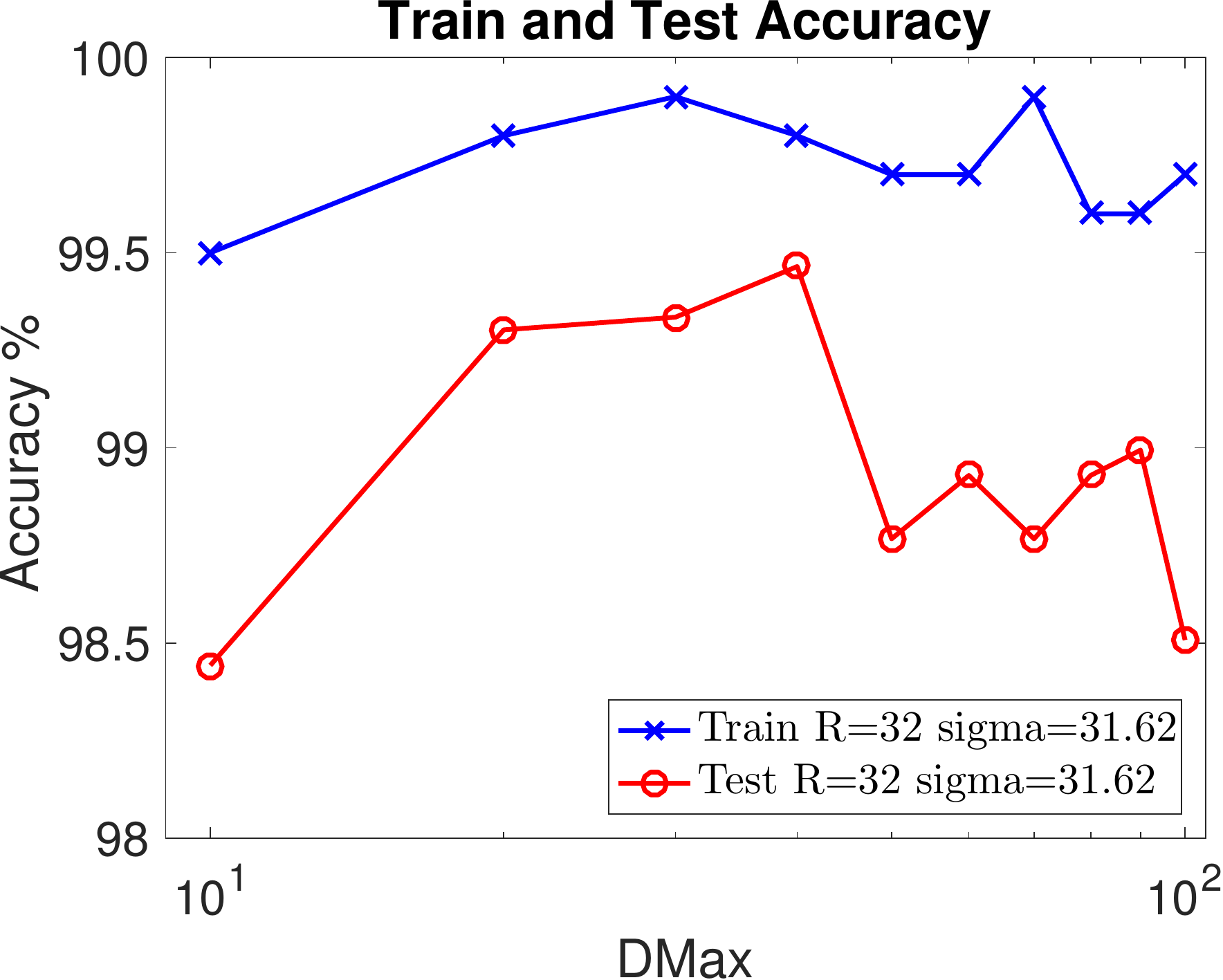}
      \caption{Wafer}
      \label{fig:exptsA_varyingD_wafer}
    \end{subfigure}
\vspace{-0mm}
\caption{Train (Blue) and test (Red) accuracy when varying $D$ with fixed $\sigma$ and $R$. We denote $D = DMax/2$.}
\vspace{-2mm}
\label{fig:exptsA_varyingD_sup}
\end{figure*}

\subsection{Parameters and Settings on Comparisons of Feature Representations}
\label{sec:Parameters and Settings on Comparisons of Feature Representations}
For TSEigen \cite{hayashi2005embedding}, we implemented this method in Matlab where we apply SVD to compute $R$ number of largest dominant components on the similar matrix computed using DTW. For TSMC \cite{QiYi2016}, we used their open source in code in Github: \url{https://github.com/cecilialeiqi/SPIRAL}. Since the default rank size of TSMC is 32, we keep all methods consistent with this setting to make a fair comparison. For all methods, we choose the parameter $C$ by 10-fold cross validation on training data in LIBLINEAR on all 16 datasets. 

\subsection{Parameters and Settings on  Comparisons for Large-Scale Classification}
\label{sec:Parameters and Settings on  Comparisons for Large-Scale Classification}
For 1NN-DTW and 1NN-DTW\textsuperscript{opt}, we implemented them using Matlab internal fitcknn with DTW using the same C Mex file \footnote{\url{https://www.mathworks.com/matlabcentral/fileexchange/43156-dynamic-time-warping–dtw-}} as our method RWS. Although our implementations may not be highly optimized, we believe the runtime comparisons among these methods are reasonably fair. For DTWF \cite{kate2016using}, we used their open source code \footnote{\url{https://people.uwm.edu/katerj/timeseries/}}. To make a fair comparison with other methods, we set the window size as $min(L/10, 40)$. The feature representation generated by DTWF combines SAX, DTW, and DTW\_R where we use recommended parameter ranges $n$ = [8 16 24 32 40 48 56 64 72 80 96 112 128 144 160], $w$ = [4 8], and $a$ = [3 4 5 6 7 8 9] for cross validation. For TGAK \cite{cuturi2011fast}, we took their open source code \footnote{\url{http://marcocuturi.net/GA.html}} for the experiments. We choose recommended window size $T = 0.25$ due to a good trade off between testing accuracy and computational time. We also perform cross validation to search for good kernel parameter $\sigma$ in the range of [0.01, 0.033, 0.066, 0.1, 0.33, 0.66, 1, 3.3, 6.6, 10] and the LIBLINEAR parameter $C$ in the range of [1e-5 1e-4 1e-3 1e-2 1e-1 1 1e1 1e2 1e3 1e4 1e5 1e6]. 

\subsection{Parameters and Settings on  Comparisons for Large-Scale Clustering}
\label{sec:Parameters and Settings on  Comparisons for Large-Scale Clustering}
For KMeans-DTW \cite{petitjean2011global}, we used the public available python code \footnote{\url{https://github.com/alexminnaar/time-series-classification-and-clustering}}, which also implements LB\_Keogh lower bound with DTW. However, the efficiency of python code may be significantly worse than C mex file of DTW we used, which could be the reason we observed larger margin speedup compared to 1NN-DTW. Nevertheless, note that the computational complexity of RWS over Kmeans-DTW reduces from quadratic complexity to linear complexity. For CLDS \cite{li2011time}, we used the open source code published by authors \footnote{\url{http://www.cs.cmu.edu/~./leili/software.html}}. We choose the parameter $C$ by cross validation while using recommended parameters for generating the representations on all datasets. For K-Shape \cite{paparrizos2015k}, we used the public available python code \footnote{\url{https://github.com/Mic92/kshape}}. Similarly, we choose the parameter $C$ by cross validation while using recommended parameters for generating the representations on all datasets.



\end{document}